\def\rmH{{\mathbf{H}}}
\def\rmP{{\mathbf{P}}}
\def\rmT{{\mathbf{T}}}
\def\rmX{{\mathbf{X}}}
\def\gD{{\mathcal{D}}}
\def\gF{{\mathcal{F}}}
\def\gN{{\mathcal{N}}}
\def\gT{{\mathcal{T}}}
\def\sP{{\mathbb{P}}}
\def\bA{{\mathbf A}}
\def\bH{{\mathbf H}}
\def\bK{{\mathbf K}}
\def\bQ{{\mathbf Q}}
\def\bV{{\mathbf V}}
\def\bW{{\mathbf W}}
\def\bsigma{{\boldsymbol \sigma}}
\def\btheta{{\boldsymbol \theta}}
\def\bmu{{\boldsymbol \mu}}
\def\be{{\mathbf e}}
\def\bh{{\mathbf h}}
\def\bp{{\mathbf p}}
\def\bt{{\mathbf t}}
\def\bw{{\mathbf w}}
\definecolor{myfandango}{rgb}{0.71, 0.2, 0.54} 
\definecolor{myroyalblue}{rgb}{0.0, 0.14, 0.4}  
\definecolor{mydarkblue}{rgb}{0.0, 0.0, 0.2}    
\newtheorem{theorem}{Theorem}
\newtheorem{lemma}{Lemma}
\newtheorem{corollary}{Corollary}
\newtheorem{definition}{Definition}
\newtheorem{remark}{Remark}
\newcommand{\bq}{{\bf q}}
\newcommand{\br}{{\bf r}}
\newcommand{\bTheta}{{\boldsymbol{\Theta}}}
\newcommand{\1}{\mathbbm{1}}
\newcommand{\0}{\mathbf{0}}
\newcommand{\R}{\mathbb{R}}
\newcommand{\N}{\mathbb{N}}
\newcommand{\E}{\mathbb{E}}
\newcommand{\bma}{\begin{matrix*}[r]}
\newcommand{\ema}{\end{matrix*}}
\def\bA{{\mathbf A}}
\def\bH{{\mathbf H}}
\def\bK{{\mathbf K}}
\def\bQ{{\mathbf Q}}
\def\bV{{\mathbf V}}
\def\bW{{\mathbf W}}
\def\bsigma{{\boldsymbol \sigma}}
\def\bpi{{\boldsymbol \pi}}
\def\bmu{{\boldsymbol \mu}}
\def\be{{\mathbf e}}
\def\bh{{\mathbf h}}
\def\bp{{\mathbf p}}
\def\bt{{\mathbf t}}
\def\bw{{\mathbf w}}
\newcommand{\nrmp}[1]{{\left|\!\left|\!\left|{#1}\right|\!\right|\!\right|}}
\newcommand{\MLP}{\mathrm{MLP}}
\newcommand{\norm}[1]{\left\|{#1}\right\|} %
\newcommand{\bAtt}{\bTheta_{\tt attn}}
\newcommand{\bTF}{{\bTheta_{\tt TF}}}
\newcommand{\bthetamlp}{\bTheta_{\tt mlp}}
\newcommand{\lth}{{(\ell)}}
\newcommand{\Attn}{{\rm Attn}}
\newcommand{\TF}{{\rm TF}}
\newcommand{\TGMM}{{\rm TGMM}}
\newcommand{\paren}[1]{{\left( #1 \right)}}
\newcommand{\brac}[1]{{\left[ #1 \right]}}
\newcommand{\set}[1]{{\left\{ #1 \right\}}}
\newcommand{\sets}[1]{{\{ #1 \}}}
\newenvironment{talign}
 {\align}
 {\endalign}
\newenvironment{talign*}
 {\csname align*\endcsname}
 {\endalign}
\newcommand{\defeq}{\mathrel{\mathop:}=}
\newcommand{\wt}{\widetilde}
\newcommand{\barsig}{\sigma}
\newcommand{\bzero}{{\mathbf 0}}
\title{Transformers as Unsupervised Learning Algorithms: \\A study on Gaussian Mixtures}
\author{
Zhiheng Chen$^{1}$\thanks{Equal contribution.} , Ruofan Wu$^{2*}$, Guanhua Fang$^2$\thanks{Corresponding to: \texttt{fanggh@fudan.edu.cn}}\\
$^1$Shanghai Center for Mathematical Sciences, Fudan University,\\
$^2$Department of Statistics and Data Science, School of Management, Fudan University\\
\footnotesize{\texttt{zhchen22@m.fudan.edu.cn}, \texttt{wuruofan1989@gmail.com}, \texttt{fanggh@fudan.edu.cn}}
}
\begin{document}

\maketitle

\begin{abstract}
    The transformer architecture has demonstrated remarkable capabilities in modern artificial intelligence, among which the capability of implicitly learning an internal model during inference time is widely believed to play a key role in the understanding of pre-trained large language models. However, most recent works have been focusing on studying supervised learning topics such as in-context learning, leaving the field of unsupervised learning largely unexplored.
    This paper investigates the capabilities of transformers in solving Gaussian Mixture Models (GMMs), a fundamental unsupervised learning problem through the lens of statistical estimation.
    We propose a transformer-based learning framework called Transformer for Gaussian Mixture Models (TGMM) that simultaneously learns to solve multiple GMM tasks using a shared transformer backbone. The learned models are empirically demonstrated to effectively mitigate the limitations of classical methods such as Expectation-Maximization (EM) or spectral algorithms, at the same time exhibit reasonable robustness to distribution shifts.
    Theoretically, we prove that transformers can efficiently approximate both the Expectation-Maximization (EM) algorithm and a core component of spectral methods—namely, cubic tensor power iterations. These results not only improve upon prior work on approximating the EM algorithm,
    but also provide, to our knowledge, the first theoretical guarantee that transformers can approximate high-order tensor operations.
    Our study bridges the gap between practical success and theoretical understanding, positioning transformers as versatile tools for unsupervised learning. 
    \footnote{Code available at \href{https://github.com/Rorschach1989/transformer-for-gmm}{\texttt{https://github.com/Rorschach1989/transformer-for-gmm}}}
\end{abstract}

\doparttoc 
\faketableofcontents 

\section{Introduction}
Large Language Models (LLMs) have achieved remarkable success across various tasks in recent years. 
Transformers\citep{attention-is-all-you-need}, the dominant architecture in modern LLMs\citep{BrownGPT3}, outperform many other neural network models in efficiency and scalability. 
Beyond language tasks, transformers have also demonstrated strong performance in other domains, such as computer vision\citep{survey-vision-TF, TF-in-vision} and reinforcement learning\citep{li2023TF-RL}. 
Given their practical success, understanding the mechanisms behind transformers has attracted growing research interest.
Existing studies often treat transformers as algorithmic toolboxes, investigating their ability to implement diverse algorithms\citep{von-oswald23a, bai2023tfstats, lin2024transformers, giannou2025how, teh2025empiricalbayestransformers}--a perspective linked to meta-learning\citep{hospedales2021meta}. 

However, most research has focused on supervised learning settings, such as regression\citep{bai2023tfstats} and classification\citep{giannou2025how}, leaving the unsupervised learning paradigm relatively unexplored.
Since transformer models are typically trained in a supervised manner, unsupervised learning poses inherent challenges for transformers due to the absence of labeled data. 
Moreover, given the abundance of unlabeled data in real-world scenarios, investigating the mechanisms of transformers in unsupervised learning holds significant implications for practical applications.
The Gaussian mixture model (GMM) represents one of the most fundamental unsupervised learning tasks in statistics, with a rich historical background\citep{GMM1969,Aitkin01081980} and ongoing research interest\citep{gmm2021ACM-TOMM,dcgmm-nips-2021,loffler2021optimality,ndaoud2022sharp,gribonval2021statistical,yu2021scgmai}. 
Two primary algorithmic approaches are existing for solving GMM problems: (1) likelihood-based methods employing the Expectation-Maximization (EM) algorithm\citep{EM1977,BinYu-EM-2017}, and (2) moment-based methods utilizing spectral algorithms\citep{10.1145LearningGMM,JMLR:v15:anandkumar14b}.
However, both algorithms have inherent limitations. The EM algorithm is prone to convergence at local optima and is highly sensitive to initialization\citep{moitra2018algorithmic,chijinEM}. In contrast, while the spectral method is independent of initialization, it requires the number of components to be smaller than the data's dimensionality—an assumption that restricts its applicability to problems involving many components in low-dimensional GMMs\citep{10.1145LearningGMM}.

In this work, we explore transformers for GMM parameter estimation to address two questions.
(i) Can Transformers \textit{provably} work for GMM in-context?
(ii) Can Transformers \textit{empirically} overcome the drawbacks of both EM algorithm and the spectral method?
Our answers are affirmative.
We find that meta-trained transformers exhibit strong performance on GMM tasks without the aforementioned limitations.
Notably, we construct transformer-based solvers that efficiently solve GMMs with varying component counts simultaneously. 
The experimental phenomena are further backed up by novel theoretical establishments: We prove that transformers can effectively learn GMMs with different components by approximating both the EM algorithm and a key component of spectral methods on GMM tasks.


\textbf{Main Contributions}.
\vspace{-3mm}
\begin{itemize}[leftmargin=*]
    \item We propose the TGMM framework that utilizes transformers to solve multiple GMM tasks with varying numbers of components simultaneously during inference time. Through extensive experimentation, the learned TGMM model is demonstrated to achieve competitive and robust performance over synthetic GMM tasks. Notably, TGMM outperforms the popular EM algorithm in terms of estimation quality, and approximately matches the strong performance of spectral methods while enjoying better flexibility.
    \item We establish theoretical foundations by proving that transformers can approximate both the EM algorithm and a key component of spectral methods. 
    Our approximation of the EM algorithm fundamentally leverages the weighted averaging property inherent in softmax attention, enabling simultaneous approximation of both the E and M steps. 
    Notably, our approximation results also hold across varying dimensions and mixture components in GMM.
    
    \item We proved that transformers (with RELU activation) can implement cubic tensor power iterations- a crucial component of spectral algorithms for GMM. The proof is highly dependent on the multi-head structure of transformers. To the best of our knowledge, this is the first theoretical demonstration of transformers’ capacity for high-order tensor calculations.
    
\end{itemize}

\vspace{-3mm}

{\bf Related works.} 
Recent research has explored the mechanisms by which transformers can implement various supervised learning algorithms. 
For instance, \cite{akyrek2023what}, \cite{von-oswald23a}, and \cite{bai2023tfstats} demonstrate that transformers can perform gradient descent for linear regression problems in-context. \cite{lin2024transformers} shows that transformers are capable of implementing Upper Confidence Bound (UCB) algorithms, as well as other classical algorithms in reinforcement learning tasks. \cite{giannou2025how} reveals that transformers can execute in-context Newton's method for logistic regression problems. \cite{teh2025empiricalbayestransformers} illustrates that transformers can approximate Robbins' estimator and solve Naive Bayes problems. \cite{kim2024transformers} studies the minimax optimality of transformers on nonparametric regression. Some literature on density estimation using LLMs is discussed in \cref{secapp:additional-review}.

\textbf{Comparison with prior theoretical works in unsupervised learning setting.} 
Several recent studies have investigated the mechanisms of transformer-based models in mixture model settings\citep{he2025learningspectralmethodstransformers,jin2025incontextlearningmixturelinear,he2025transformersversusemalgorithm}. Among these, \cite{he2025learningspectralmethodstransformers} establishes that transformers can implement Principal Component Analysis (PCA) and leverages this to GMM clustering. However, their analysis is limited to the two-component case, restricting its broader applicability.

The paper \cite{jin2025incontextlearningmixturelinear} investigates the in-context learning capabilities of transformers for mixture linear models, a setting that differs from ours. 
Furthermore, their approximation construction of the transformer is limited to two-component GMMs, leaving the general case unaddressed. 
While they assume ReLU as the activation function--contrary to the conventional choice of softmax--their theoretical proofs rely on a key lemma from prior work \cite{pathak2024transformers} that assumes softmax activation, thereby introducing an inconsistency in their assumptions.
The paper \cite{he2025transformersversusemalgorithm} studies the performance of transformers on multi-class GMM clustering, a setting closely related to ours. However, our work focuses on \textit{parameter estimation} rather than \textit{clustering}. 
We give a discussion of our theoretical improvements over their work in detail in the following paragraph.
From an empirical perspective, their experiments are conducted on a small-scale transformer, which fails to validate their theoretical claims.

\textbf{Sharpness of our results}. 
Our theoretical analysis fully leverages key architectural components of Transformers: the query-key-value mechanism, multi-head attention, and the properties of the activation function.
It is worth pointing out that our result improves the prior work for EM approximation in several points: First, Our analysis shows that Transformers can approximate L-step EM algorithms with just O(L) layers, a significant improvement over prior work \citep{he2025transformersversusemalgorithm} , which requires O(KL) layers (dependent on the number of components K). Second, unlike \cite{he2025transformersversusemalgorithm}, which needs number of attention heads $M\rightarrow +\infty$ to get valid bounds, our results hold with $M = O(1)$, aligning better with real-world designs. Third, our approximation bounds scale polynomially in dimension $d$, unlike \cite{he2025transformersversusemalgorithm}’s exponential dependence--a crucial improvement for high-dimensional settings. 
We believe our results and proofs can offer profound insights for subsequent theoretical research on transformers.

\textbf{Organization}. The rest of paper is organized as follows. In \cref{sec:pre}, some background knowledge is introduced. In \cref{sec:experiment}, we present the experimental details and findings. The theoretical results are proposed in \cref{sec:theory}, and some discussions are given in \cref{sec:discussion}. The proofs and additional experimental results are given in the appendix.

{\bf Notations.} We introduce the following notations. Let $[n] \defeq \sets{1,2,\cdots,n}$.
All vectors are represented as column vectors unless otherwise specified. For a vector $v \in \R^d$, we denote $\norm{v}$ as its Euclidean norm. 
For two sequences $a_n$ and $b_n$ indexed by $n$, we denote $a_n = O(b_n)$ if there exists a universal constant $C$ such that $a_n \leq C b_n$ for sufficiently large $n$. 

\vspace{-2mm}

\section{Methodology}
\label{sec:pre}

\vspace{-2mm}

\subsection{Preliminaries}

\vspace{-2mm}

{The Gaussian mixture model (GMM) is a cornerstone of unsupervised learning in statistics, with deep historical roots and enduring relevance in modern research. 
Since its early formalizations\citep{GMM1969,Aitkin01081980}, GMM has remained a fundamental tool for clustering and density estimation, widely applied across diverse domains. 
Recent advances have further explored the theoretical foundations of Gaussian Mixture Models (GMMs)\citep{loffler2021optimality,ndaoud2022sharp,gribonval2021statistical}, extended their applications in incomplete data settings\citep{gmm2021ACM-TOMM}, and integrated them with deep learning frameworks\citep{dcgmm-nips-2021,yu2021scgmai}. 
Due to their versatility and interpretability, GMMs remain indispensable in unsupervised learning, effectively bridging classical statistical principles with modern machine learning paradigms.}
We consider the (unit-variance) isotropic Gaussian Mixture Model with $K$ components, with its probability density function as
\begin{align}
    p(x|\btheta) = \sum_{k = 1}^{K} \pi_k \phi(x;\mu_k) ~,
    \label{eqn: GMM density}
\end{align}
where $\phi(x;\mu)$ is the standard Gaussian kernel, i.e. $\phi(x;\mu) = \frac{1}{(2\pi)^{d/2}} \exp\left(-\frac{1}{2}(x - \mu)^{\top}(x - \mu)\right)$. 
The parameter $\btheta$ is defined as $\btheta = \bpi \cup \bmu$, where $\bpi \defeq \sets{\pi_1, \pi_2, \cdots, \pi_K}$, $\pi_k \in \R$ and  $\bmu = \sets{\mu_1, \mu_2, \cdots,\mu_K},\mu_k\in \R^d$, $k \in [K]$.
We take $N$ samples $\rmX = \sets{X_i}_{i\in [N]}$ from model \cref{eqn: GMM density}. $\sets{X_i}_{i\in [N]}$ can be also rewritten as
\begin{align*}
    X_i = \mu_{y_i} + Z_i,
\end{align*}
where $\sets{y_i}_{i\in[N]}$ are i.i.d. discrete random variables with $\sP\paren{y=k} = \pi_k$ for $k\in[K]$ and $\sets{Z_i}_{i\in[N]}$ are i.i.d. standard Gaussian random vector in $\R^d$.

The EM algorithm\citep{EM1977} remains the most widely used approach for GMM parameter estimation.
Due to space constraints, we propose the algorithm in \cref{secapp:alg}.
Alternatively, the spectral algorithm\citep{10.1145LearningGMM} offers an efficient moment-based approach that estimates parameters through low-order observable moments. 
A key component of this method is cubic tensor decomposition\citep{JMLR:v15:anandkumar14b}. 
For brevity, we defer the algorithmic details to \cref{secapp:alg}.

Next, we give a rigorous definition of the transformer model.
To maintain consistency with existing literature, we adopt the notational conventions presented in \cite{bai2023tfstats}, with modifications tailored to our specific context. 
We consider a sequence of $N$ input vectors $\{h_i\}_{i=1}^N \subset \mathbb{R}^D$, which can be compactly represented as an input matrix $\bH = [h_1, \dots, h_N] \in \mathbb{R}^{D \times N}$, where each $h_i$ corresponds to a column of $\bH$ (also referred to as a token).

Here we introduce several useful definitions and their full notations are given in Appendix \ref{sec:full_notation}.

\begin{definition}[Attention layer]
A (self-)attention layer with $M$ heads is denoted as $\Attn_{\bAtt}(\cdot)$ with parameters $\bAtt=\sets{ (\bV_m,\bQ_m,\bK_m)}_{m\in[M]}\subset \R^{D\times D}$. 
\end{definition}

\begin{definition}[MLP layer]
A (token-wise) MLP layer with hidden dimension $D'$ is denoted as $\MLP_{\bthetamlp}(\cdot)$ with parameters $\bthetamlp=(\bW_1,\bW_2)\in\R^{D'\times D}\times\R^{D\times D'}$. 
\end{definition}
\begin{definition}[Transformer]
An $L$-layer transformer, denoted as $\TF_\bTF(\cdot)$, is a composition of $L$ self-attention layers each followed by an MLP layer:
\begin{align*}
    \TF_\bTF(\bH) = \MLP_{\bthetamlp^{(L)}}\paren{ \Attn_{\bAtt^{(L)}}\paren{\cdots \MLP_{\bthetamlp^{(1)}}\paren{ \Attn_{\bAtt^{(1)}}\paren{\bH}} }}.
\end{align*}
\end{definition}

\vspace{-4mm}

\subsection{The TGMM architecture}
\label{sec:TGMM-archi}
\vspace{-2mm}

A recent line of work\citep{xie2021explanation, garg2022can, bai2023tfstats, akyrek2023what, li2023transformers} has been studying the capability of transformer that functions as a data-driven algorithm under the context of in-context learning (ICL). However, in contrast to the setups therein where inputs consist of both features and labels, under the unsupervised GMM setup, there is no explicitly provided label information. Therefore, we formulate the learning problem as learning an \emph{estimation} algorithm instead of learning a \emph{prediction} algorithm as in the case of ICL. A notable property of GMM is that the structure of the estimand depends on an unknown parameter $K$, which is often treated as a hyper-parameter in GMM estimation\citep{titterington_85, mclachlan2000finite}. For clarity of representation, we define an isotropic Gaussian mixture task as $\mathcal{T} = (\btheta, \rmX, K)$, where $\rmX$ is a i.i.d. sample generated according to ground truth $\btheta$ according to the isotropic GMM law and $K$ is the configuration used during estimation which we assume to be the same as the number of components of the ground truth $\btheta$. The GMM task is solved via applying some algorithm $\mathcal{A}$ that takes $\rmX$ and $K$ as inputs and outputs an estimate of the ground truth $\widehat{\btheta} = \mathcal{A}(\rmX; K)$. 

In this paper, we propose a transformer-based architecture, transformers-for-Gaussian-mixtures (TGMM), as a GMM task solver that allows flexibility in its outputs, while at the same time being parameter-efficient, as illustrated in \cref{fig:task_fig}: A TGMM model supports solving $s$ different GMM tasks with $K \in \mathcal{K} := \sets{K_1, \ldots, K_s}$. Given inputs $N$ data points $\rmX \in \R^{d \times N}$ and a structure configuration of the estimand $K$. TGMM first augments the inputs with auxiliary configurations about $K$ via concatenating it with a task embedding $\rmP = \text{embed}(K)$, i.e., $\rmH = \left[\rmX || \rmP\right]$, and use a linear $\operatorname{Readin}$ layer to project the augmented inputs onto a shared hidden representation space for several estimand structures $\sets{K_1, \ldots, K_s}$, which is then manipulated by a shared transformer backbone that produces task-aware hidden representations. The TGMM estimates are then decoded by task-specific $\operatorname{Readout}$ modules. More precisely, with target decoding parameters of $K$ components, the $\operatorname{Readout}$ module first performs an attentive-pooling operation\citep{lee2019set}:
\begin{align}
    \mathbf{O} = (\bV_o \bH) \text{SoftMax}\paren{ (\bK_o\bH)^\top \bQ_o } \in \R^{(d+K)\times K}, \nonumber
\end{align}
where $\bV_o,\bK_o \in \R^{(d+K)\times D}$, $\bQ_o \in \R^{(d+K)\times K}$. The estimates for mixture probability are then extracted by a row-wise mean-pooling of the first $K$ rows of $\mathbf{O}$, and the estimates for mean vectors are the last $d$ rows of $\mathbf{O}$. We wrap the above procedure as $\sets{\widehat{\pi}_k, \widehat{\mu}_k}_{i\in[K]} = \operatorname{Readout}_{\bTheta_{\tt out}}\paren{\rmH}$.
TGMM is parameter-efficient in the sense that it only introduces extra parameter complexities of the order $O(sdD)$ in addition to the backbone. We give a more detailed explanation of the parameter efficiency of TGMM in appendix \cref{sec: parameter_efficiency}. 
We wrap the TGMM model into the following form:
\begin{align*}
    \TGMM_\bTheta(\rmX; K) = \operatorname{Readout}_{\bTheta_{\tt out}}\paren{\TF_\bTF \paren{\operatorname{Readin}_{\bTheta_{\tt in}}\paren{\left[\rmX || \text{embed}(K) \right]}}}.
\end{align*}
Above, the parameter $\bTheta=(\bTF, \bTheta_{\tt in}, \bTheta_{\tt out})$ consists of the parameters in the transformer $\bTF$ and the parameters in the $\operatorname{Readin}$ and the $\operatorname{Readout}$ functions $\bTheta_{\tt in}$, $\bTheta_{\tt out}$.
\begin{figure}[htbp]
    \centering
    \includegraphics[width=0.8\linewidth]{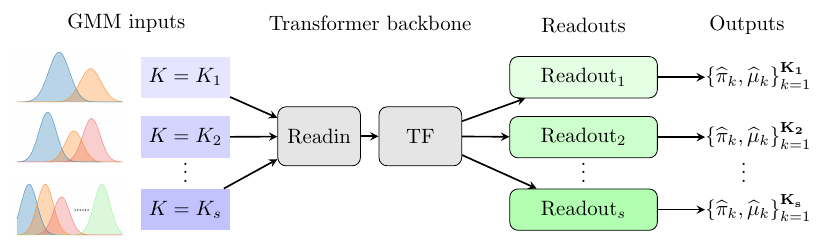}
    \caption{Illustration of the proposed TGMM architecture: TGMM utilizes a shared transformer backbone that supports solving $s$ different kind of GMM tasks via a task-specific $\operatorname{Readout}$ strategies.}
    \label{fig:task_fig}
    \vspace{-5mm}
\end{figure}

\subsection{Meta training procedure}\label{sec: meta_training_procedure}

\vspace{-2mm}

We adopt the meta-training framework as in \cite{garg2022can, bai2023tfstats} and utilize diverse synthetic tasks to learn the TGMM model. In particular, during each step of the learning process, we first use a {TaskSampler} routine (described in \cref{alg: task_sampler}) to generate a batch of $n$ tasks, with each task having a probably distinct sample size. The TGMM model outputs estimates for each task, i.e., $\sets{\widehat{\mu}_k, \widehat{\pi}_k}_{k\in [K]} = \TGMM_{\bTheta}\paren{\rmX; K}$. 
{Define $\widehat{\bpi} \defeq \sets{\widehat{\pi}_k}_{k\in[K]}$ and $\widehat{\bmu} \defeq \sets{\widehat{\mu}_k}_{k\in[K]}$.} For a batch of tasks $\sets{\mathcal{T}_i}_{i \in [n]} = \sets{\rmX_i, \btheta_i, K_i}_{i \in [n]}$, denote by $\btheta_i = \bmu_i \cup \bpi_i$ and $\widehat{\btheta}_i = \widehat{\bmu}_i \cup \widehat{\bpi}_i = \TGMM_{\bTheta}\paren{\rmX_i; K_i}$, $i\in [n]$. Then the learning objective is thus:
\vspace{-2mm}
\begin{align}
\label{eqn:loss}
    \widehat{L}_n\paren{\bTheta} = \frac{1}{n} \sum_{i=1}^n \ell_\mu(\widehat{\bmu}_i,\bmu_i) + \ell_\pi(\widehat{\bpi}_i,\bpi_i).
\end{align}
where $\ell_\mu$ and $\ell_\pi$ are loss functions for estimation of $\mu$ and $\pi$, respectively. We will by default use square loss for $\ell_\mu$ and cross entropy loss for $\ell_\pi$. Note that the task sampling procedure relies on several sampling distributions $p_{\mu}, p_{\pi}, p_N, p_K$, which are themselves dependent upon some global configurations such as the dimension $d$ as well as the task types $\mathcal{K}$. We will omit those dependencies on global configurations when they are clear from context. The (meta) training procedure is detailed in \cref{alg: meta_training}.
\begin{figure}
\begin{minipage}{0.48\textwidth}
    \begin{algorithm}[H]
        \caption{TaskSampler}
        \label{alg: task_sampler}
        \begin{algorithmic}[1]
            \Require sampling distributions $p_{\mu}, p_{\pi}, p_N, p_K$.
            \State Sample the type of task (i.e., number of mixture components) $K \sim p_K$.
            \State Sample a GMM task according to the type of task
            \begin{align*}
                \begin{aligned}
                    &\btheta = (\bmu, \bpi),\\
                    &\bmu \sim p_\mu, \bpi \sim p_\pi,
                \end{aligned}
            \end{align*}
            where $\bmu = \sets{\mu_1, \cdots,\mu_K}$, $\bpi = \sets{\pi_1, \cdots, \pi_K}$.
            \State Sample the size of inputs $N \sim p_N$.
            \State Sample the data points $\rmX = (X_1, \ldots, X_N) \overset{\text{i.i.d.}}{\sim} p(\cdot | \btheta)$.
            \State \Return An (isotropic) GMM task $\mathcal{T} = (\rmX, \btheta, K)$.
        \end{algorithmic}
    \end{algorithm}
\end{minipage}
\hfill
\begin{minipage}{0.48\textwidth}
    \begin{algorithm}[H]
        \caption{(Meta) Training procedure for TGMM}
        \label{alg: meta_training}
        \begin{algorithmic}[1]
            \Require task dimension $d$, task types $\mathcal{K} = \{K_1, \ldots, K_s\}$, number of tasks $n$ per step, number of steps $T$.
            \State Initialize a TGMM model $\TGMM_{\bTheta^{(0)}}$.
            \For{$t = 1:T$}
                \State Sample $n$ tasks
                $\sets{\mathcal{T}_i}_{i \in [n]}$ independently using the {TaskSampler} from Algorithm \ref{alg: task_sampler}.
                \State Compute the training objective $\widehat{L}_n\paren{\bTheta^{(t-1)}}$ as in \eqref{eqn:loss}.
                \State Update $\bTheta^{(t-1)}$ into $\bTheta^{(t)}$ using any gradient based training algorithm like AdamW.
            \EndFor
            \State \Return Trained model $\TGMM_{\bTheta^{(T)}}$.
        \end{algorithmic}
    \end{algorithm}
\end{minipage}
\vspace{-4mm}
\end{figure}

\vspace{-3mm}

\section{Experiments}
\label{sec:experiment}

\vspace{-3mm}

In this section, we empirically investigate TGMM's capability of learning to solve GMMs. We focus on the following research questions (RQ):\\
\textbf{RQ1 Effectiveness}: How well do TGMM solve GMM problems, compared to classical algorithms?\\
\textbf{RQ2 Robustness}: How well does TGMM perform over test tasks unseen during training? \\
\textbf{RQ3 Flexibility}: Can we extend the current formulation by adopting alternative backbone architectures or relaxing the isotropic setting to more sophisticated models like anisotropic GMM?  

\subsection{Experimental Setup}\label{sec: exp_setup}
We pick the default backbone of TGMM similar to that in \cite{garg2022can, bai2023tfstats}, with a GPT-2 type transformer encoder\citep{radford2019language} of $12$ layers, $4$ heads, and $128$-dimensional hidden state size. The task embedding dimension is fixed at $128$. Across all the experiments, we use AdamW\citep{loshchilov2017fixing} as the optimizer and use both learning rate and weight decay coefficient set to $10^{-4}$ without further tuning. During each meta-training step, we fix the batch size to be $64$ and train $10^{6}$ steps. For the construction of {TaskSampler}, the sampling distributions are defined as follows: For $p_K$, We sample $K$ uniformly from $\sets{2, 3, 4, 5}$; For $p_\mu$, given dimension $d$ and number of components $K$, we sample each component uniformly from $[-5,5]^d$. Additionally, to prevent collapsed component means\citep{ndaoud2022sharp}, we filter the generated mean vectors with a maximum pairwise cosine similarity threshold of $0.8$. For $p_\pi$, given $K$, we sample each $\pi_k$ uniformly from $[0.2,0.8]$ and normalize them to be a probability vector; For $p_N$, Given a maximum sample size $N_0$, we sample $N$ uniformly from $[N_0/2, N_0]$. The default choice of $N_0$ is $128$.
During evaluation, we separately evaluate $4$ tasks with $2,3,4,5$ components, respectively. With a sample size of $128$ and averaging over $1280$ randomly sampled tasks.

{\bf Metrics. }
We use $\ell_2${\texttt{-error}} as evaluation metrics in the experiments. We denote the output of the TGMM as $\widehat{\btheta} \defeq \sets{\widehat{\pi}_1, \widehat{\mu}_1, \widehat{\pi}_2,\widehat{\mu}_2, \cdots \widehat{\pi}_K,\widehat{\mu}_K}$. The rigorous definition is 
\begin{align*}
    \frac{1}{K }\sum_{k\in[K]} \paren{\frac{1}{d}\norm{\widehat{\mu}_{\tilde{\sigma}(i)} - \mu_i}^2 + \paren{\widehat{\pi}_{\tilde{\sigma}(i)} - \pi_i}^2},
\end{align*}
where $\tilde{\sigma}$ is the permutation such that  $\tilde{\sigma} = \arg \min_{\sigma}\sum_{k\in[K]} \norm{\widehat{\mu}_{{\sigma}(i)} - \mu_i}^2$. We obtain the permutation via solving a linear assignment program using the Jonker-Volgenant algorithm\citep{crouse2016implementing}. We also report all the experimental results under two alternative metrics: cluster-classification accuracy and log-likelihood in \cref{sec: complete_exp_metrics}.

\vspace{-2mm}
\subsection{Results and findings}
\label{sec:results-and-findings}
\textbf{RQ1: Effectiveness}
\vspace{-2mm}

We compare the performance of a learned TGMM with the classical EM algorithm and spectral algorithm under $4$ scenarios where the problem dimension ranges over $\sets{2, 8, 32, 128}$. The results are reported in \cref{fig:l2 comparison}. We observe that all three algorithms perform competitively (reaching almost zero estimation error) when $K=2$. However, as the estimation problem gets more challenging as $K$ increases, the EM algorithm gets trapped in local minima and underperforms both spectral and TGMM. Moreover, while the spectral algorithm performs comparably with TGMM, it cannot handle cases when $K > d$, which is effectively mitigated by TGMM, with corresponding performances surpassing those of the EM algorithm. This demonstrates the effectiveness of TGMM for learning an estimation algorithm that efficiently solves GMM problems.

\begin{figure}[htbp]
    \centering
    \includegraphics[width=0.9\linewidth]{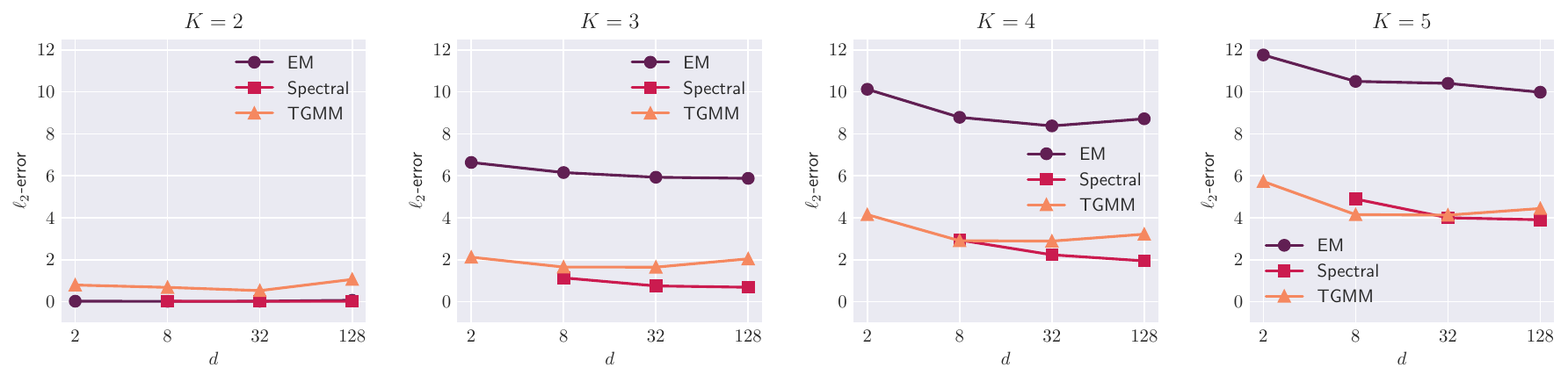}
    \caption{Performance comparison between TGMM and two classical algorithms, reported in $\ell_2$-error.}
    \label{fig:l2 comparison}
\end{figure}

\textbf{RQ2: Robustness}
To assess the robustness of the learned TGMM, we consider two types of test-time distribution shifts:\\
\textbf{1. Shifts in sample size $N$} Under this scenario, we evaluate the learned TGMM model on tasks with sample size $N^\text{test}$ that are unseen during training.\\
\textbf{2. Shifts in sampling distributions} Under this scenario, we test the learned TGMM model on tasks that are sampled from different sampling distributions that are used during training. Specifically, we use the same training sampling configuration as stated in \cref{sec: exp_setup} and test on the following perturbed sampling scheme, with $\Tilde{\mu}_k = \mu_k + \sigma_p \varepsilon_k$, where $\mu_k \overset{i.i.d.}{\sim} \operatorname{Unif}\paren{[-5,5]^d}$, $\varepsilon_k \overset{i.i.d.}{\sim} \gN\paren{0,I_d}$, $k \in [K]$ and $\sets{\varepsilon_k}_{k\in[K]}$ is independent with $\sets{\mu_k}_{k\in[K]}$.

\begin{figure}[htbp]
    \centering
    \includegraphics[width=0.9\linewidth]{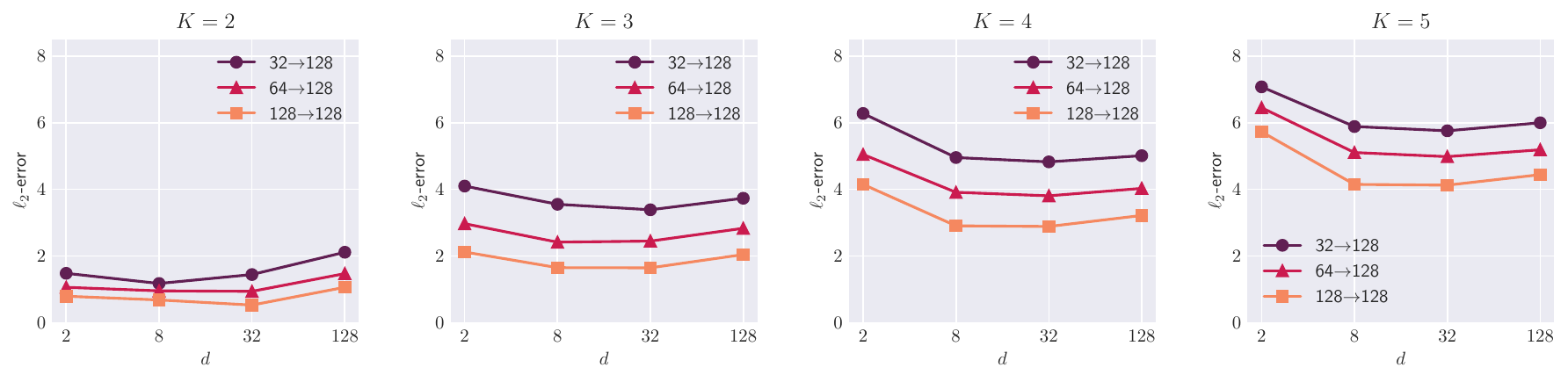}
    \caption{Assessments of TGMM under test-time task distribution shifts I: A line with $N_0^\text{train} \rightarrow N^\text{test}$ draws the performance of a TGMM model trained over tasks with sample size randomly sampled in $[N_0^\text{train} / 2, N_0^\text{train}]$ and evaluated over tasks with sample size $N^\text{test}$. We can view the configuration $128\rightarrow 128$ as an in-distribution test and the rest as out-of-distribution tests.}
    \label{fig:l2 in-context length generalization}
    \vspace{-2mm}
\end{figure}

In \cref{fig:l2 in-context length generalization}, we report the assessments regarding shifts in sample size, where we set $N_\text{test}$ to be $128$ and vary the training configuration $N_0$ to range over $\set{32, 64, 128}$, respectively. The results demonstrate graceful performance degradation of out-of-domain testing performance in comparison to the in-domain performance. To measure performance over shifted test-time sampling distributions, we vary the perturbation scale $\sigma_p \in \{0, 1, \ldots, 10\}$ with problem dimension fixed at $d=8$. The results are illustrated in \cref{fig:l2 ood} along with comparisons to EM and spectral baselines. As shown in the results, with the increase of the perturbation scale, the estimation problem gets much harder. Nevertheless, the learned TGMM can still outperform the EM algorithm when $K > 2$. Both pieces of evidence suggest that our meta-training procedure indeed learns an algorithm instead of overfitting to some training distribution.



\begin{figure}[htbp]
    \centering
    \includegraphics[width=0.9\linewidth]{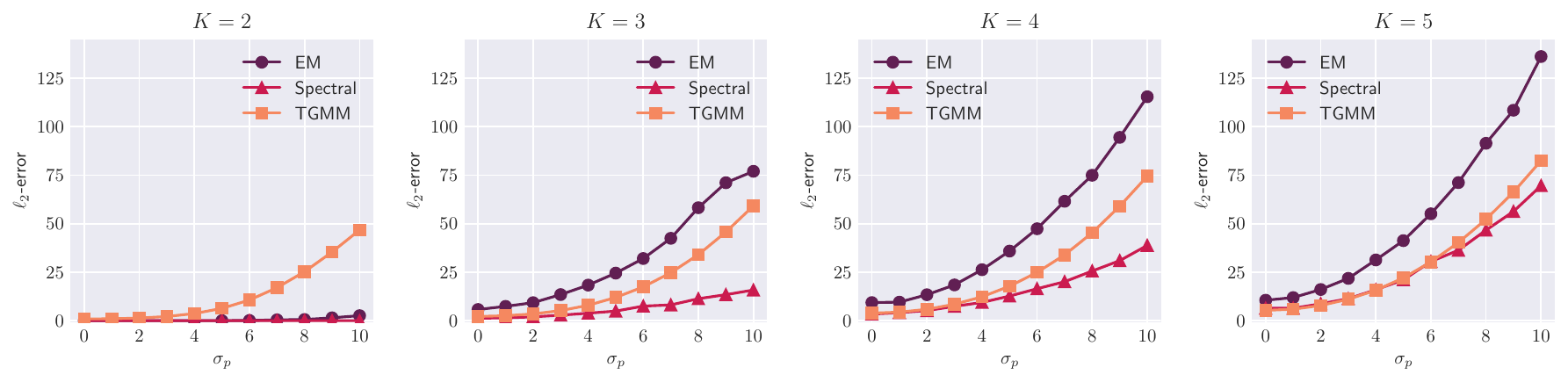}
    \caption{Assessments of TGMM under test-time task distribution shifts II: $\ell_2$-error of estimation when the test-time tasks $\mathcal{T}^\text{test}$ are sampled using a mean vector sampling distribution $p_\mu^\text{test}$ different from the one used during training.}
    \label{fig:l2 ood}
    \vspace{-2mm}
\end{figure}


\textbf{RQ3: Flexibility}
Finally, we initiate two studies that extend both the TGMM framework and the (meta) learning problem of solving isotropic GMMs. In our first study, we investigated alternative architectures for the TGMM backbone. Motivated by previous studies\citep{park2024can} that demonstrate the in-context learning capability of linear attention models such as Mamba series\citep{gu2023mamba, dao2024transformers}. We test replacing the backbone of TGMM with a Mamba2\citep{dao2024transformers} model with its detailed specifications and experimental setups listed in \cref{sec: more_exp_setups}. The results are reported in \cref{fig: tf_mamba_comparison}, suggesting that while utilizing Mamba2 as the TGMM backbone still yields non-trivial estimation efficacy, it is in general inferior to transformer backbone under comparable model complexity.

In our second study, we adapted TGMM to be compatible with more sophisticated GMM tasks via relaxing the isotropic assumption. Specifically, we construct anisotropic GMM tasks via equipping it with another scale sampling mechanism $p_\sigma$, where for each task we sample $\sigma \sim \text{softplus}(\tilde{\sigma})$ with $\tilde{\sigma}$ being sampled uniformly from $[-1, 1]^d$. We adjust the output structure of TGMM accordingly so that its outputs can be decoded into both estimates of both mean vectors, mixture probabilities, and scales, which are detailed in \cref{sec: more_exp_setups}. Note that the spectral algorithm does not directly apply to anisotropic setups, limiting its flexibility. Consequently, we compare TGMM with the EM approach and plot results in \cref{fig: anisotropic_baseline_comparison} with the $\ell_2$-error metric accommodating errors from scale estimation. The results demonstrate a similar trend as in evaluations in the isotropic case, showcasing TGMM as a versatile tool in GMM learning problems.

\textbf{Additional experiments} We postpone some further evaluations to \cref{sec: more_exps}, where we present a complete report consisting of more metrics and conduct several ablations on the effects of backbone scales and sample sizes.

\begin{figure}[htbp]
    \centering
    \includegraphics[width=0.9\linewidth]{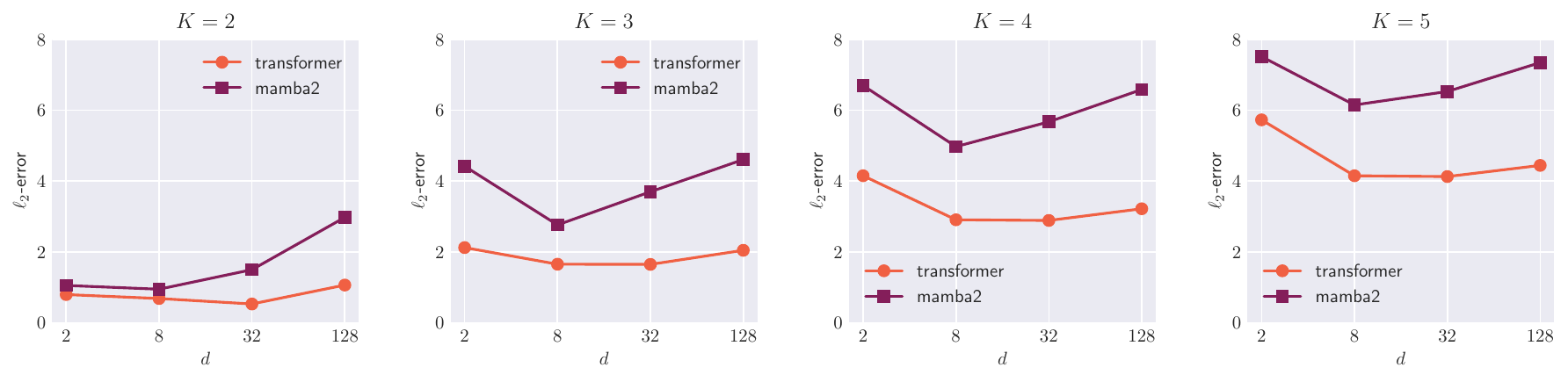}
    \caption{Performance comparisons between TGMM using transformer and Mamba2 as backbone, reported in $\ell_2$-error.}
    \label{fig: tf_mamba_comparison}
\end{figure}
\vspace{-3mm}
\begin{figure}[htbp]
    \centering
    \includegraphics[width=0.9\linewidth]{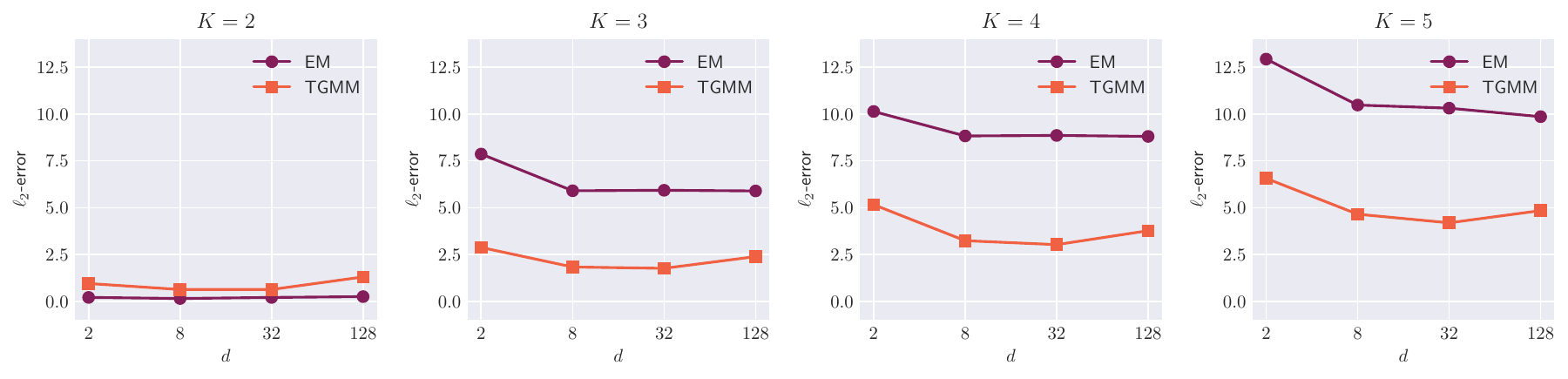}
    \caption{Performance comparison between TGMM and the EM algorithm on anisotropic GMM tasks, reported in $\ell_2$-error.}
    \label{fig: anisotropic_baseline_comparison}
\end{figure}

\vspace{-2mm}

\begin{remark}\label{rmk:concern:fairness}
One might be concerned with the fairness of comparisons between TGMM pre-training and EM/spectral method. 
We would like to point out that the only additional information that TGMM receives during meta-training is the (implicitly provided) distributional information.
The empirical results show that TGMM can generalize beyond the meta-training distribution.
\end{remark}

\vspace{-4mm}

\section{Theoretical understandings}
\label{sec:theory}

\vspace{-4mm}

In this section, we provide some theoretical understandings for the experiments.

\vspace{-3mm}

\subsection{Understanding TGMM}
\label{sec:thm statement}
\vspace{-3mm}
We investigate the expressive power of transformers-for-Gaussian-mixtures(TGMM) as demonstrated in \cref{sec:experiment}. 
Our analysis presents two key findings that elucidate the transformer's effectiveness for GMM estimation: 1. Transformer can approximate the EM algorithm; 2. Transformer can approximate the power iteration of cubic tensor. 

\vspace{-2mm}

{\bf Transformer can approximate the EM algorithm. } 
We show that transformer can efficiently approximate the EM algorithm (\cref{alg:EM1}; see \cref{secapp:alg}) and estimate the parameters of GMM. Moreover, we show that transformer with one backbone can handle tasks with different dimensions and components simultaneously.  The formal statement appears in \cref{secapp:EM} due to space limitations.

\begin{theorem} [Informal]
\label{thm:EM-approx-informal}
There exists a $2L$-layer transformer $\TF_\bTheta$ such that for any $d\leq d_0$, $K\leq K_0$ and task $\gT = \paren{\rmX, \btheta, K}$ satisfying some regular conditions, given suitable embeddings, $\TF_\bTheta$ approximates EM algorithm $L$ steps and estimates $\btheta$ efficiently.
\end{theorem}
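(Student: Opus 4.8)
The plan is to construct the $2L$-layer transformer by showing that each pair of consecutive layers (one attention layer followed by one MLP layer, which is exactly one ``block'' under \cref{def:tf}) implements a single EM iteration up to a controllable error, and then to chain $L$ such blocks together with an error-propagation argument. The central observation — already flagged in the paper's contribution list — is that the E-step posterior weights $\gamma_{ik} \propto \pi_k \phi(X_i;\mu_k)$ are exactly a softmax over $k$ of the affine-in-$X_i$ quantities $\langle \mu_k, X_i\rangle - \tfrac12\|\mu_k\|^2 + \log\pi_k$, and the M-step updates $\mu_k^+ = \sum_i \gamma_{ik} X_i / \sum_i \gamma_{ik}$ and $\pi_k^+ = \tfrac1N\sum_i \gamma_{ik}$ are weighted averages over the token index $i$. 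Softmax attention natively computes weighted averages over tokens, so one head (or a small constant number of heads, $M=O(1)$) can realize the M-step averaging once the E-step logits are present in the hidden state; the E-step logits, in turn, are quadratic in the current $\mu_k$ and hence require a short MLP sub-computation (squaring via ReLU gadgets, as in \cite{bai2023tfstats}) together with an inner-product readout that an attention score can supply. I would lay out the token layout first: each token $i$ carries $X_i$, a running copy of the current parameter estimate $\btheta^{(\ell)}$ (all $K$ means and weights, zero-padded to width $d_0 + K_0$ so the \emph{same} backbone serves every $(d,K)$ with $d\le d_0$, $K\le K_0$), the task embedding $\mathrm{embed}(K)$ that masks out the unused coordinates, and scratch space.

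The key steps, in order: (1) \emph{Single-block simulation.} Given a hidden state encoding $(X_i, \btheta^{(\ell)})$ in every token, show there is a choice of $(\bAtt, \bthetamlp)$ such that the block outputs a hidden state encoding $(X_i, \wt\btheta^{(\ell+1)})$ where $\wt\btheta^{(\ell+1)}$ is the exact EM update applied to $\btheta^{(\ell)}$, \emph{up to} errors from (a) approximating the non-polynomial pieces (the $\exp$ inside softmax is exact, but the normalization $\sum_i\gamma_{ik}$ appearing in the denominator of the M-step mean must be inverted, which needs either an MLP rational-approximation gadget or a second attention pass) and (b) truncation/clipping to keep activations bounded. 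This is where the ``regular conditions'' in the statement get used — I expect them to include a lower bound on the mixture weights ($\pi_k \ge \underline\pi$), an upper bound on $\|\mu_k\|$ and on the sample radius $\max_i\|X_i\|$, and a separation or well-conditioning assumption guaranteeing the denominators $\sum_i\gamma_{ik}$ stay bounded away from $0$ with high probability. (2) \emph{Error propagation.} Let $\btheta^{(\ell)}$ be the true EM iterates and $\wh\btheta^{(\ell)}$ the transformer's iterates; writing the per-block error as $\varepsilon$ and using a Lipschitz bound on the EM operator on the relevant bounded region, conclude $\|\wh\btheta^{(L)} - \btheta^{(L)}\| \le \varepsilon \sum_{j=0}^{L-1} \mathrm{Lip}^j = O(\varepsilon \cdot \mathrm{Lip}^L / (\mathrm{Lip}-1))$ or, if EM is contractive near a good initialization, simply $O(\varepsilon)$. (3) \emph{Readin/Readout plumbing.} Check that the linear $\operatorname{Readin}$ can produce the initial layout (including an initialization $\btheta^{(0)}$, e.g. from a fixed embedding or a cheap first-block moment computation) and that $\operatorname{Readout}_{\bTheta_{\tt out}}$ — the attentive-pooling module of \cref{sec:TGMM-archi} — can read off $\sets{\wh\mu_k, \wh\pi_k}$ from the final hidden state; since the final state already stores $\btheta^{(L)}$ in fixed coordinates, the pooling just needs to select them, which is a benign special case. (4) \emph{Dimension/component uniformity.} Verify that all weight matrices can be chosen independently of the particular $(d,K)$ by working in the ambient $(d_0,K_0)$ space and letting $\mathrm{embed}(K)$ act as a coordinate mask, so the constructed $\TF_\bTheta$ is genuinely one network serving all admissible tasks.

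The main obstacle I anticipate is step (1b) together with the denominator inversion in the M-step: softmax attention gives $\sum_i \gamma_{ik} X_i$ and $\sum_i\gamma_{ik}$ as (un-normalized or token-averaged) sums fairly directly, but producing their \emph{ratio} $\mu_k^+$ inside the hidden state requires approximating $x\mapsto 1/x$ on the interval $[\,N\underline\pi/2,\,N\,]$ by a ReLU MLP of bounded width/depth, and controlling how that approximation error, compounded over $L$ EM steps and over the random fluctuation of $\sum_i\gamma_{ik}$, stays within the target accuracy while keeping all intermediate activations bounded (so the next attention layer's softmax behaves well). A clean way around part of this is to carry the \emph{un-normalized} sufficient statistics through the layers and only normalize once at $\operatorname{Readout}$; then step (1) simulates the ``sufficient-statistic form'' of EM and the single division happens at the end, which I would adopt if the routine $1/x$ gadget proves too lossy. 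The remaining bookkeeping — bounding activations, counting layers/heads/width to confirm $M=O(1)$ and polynomial-in-$d$ size, and discharging the regularity conditions — is routine but lengthy, and is exactly what the deferred formal statement in \cref{secapp:EM} should spell out.
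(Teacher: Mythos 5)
Your high-level skeleton --- one attention-plus-MLP block per EM iteration, softmax attention for the M-step weighted averages, ReLU gadgets for the quadratic term $\|\mu_k\|^2$, zero-padding to a common ambient $(d_0,K_0)$ space for uniformity over tasks, and a contraction argument to propagate per-block errors --- matches the paper's construction in \cref{secapp:EM}. But the two places you flag as the hard parts are resolved there by a mechanism your proposal does not contain, and your proposed workarounds do not close the gap. First, the E-step posterior is a softmax over the component index $k$, while attention normalizes over the token index; the paper reconciles these by storing only the parameters of component $i\%K$ in token $i$ (not all $K$ means in every token, as in your layout), so that the token-softmax with logits $\langle X_i,\mu_{j\%K}\rangle - \tfrac12\|\mu_{j\%K}\|^2 + \log\pi_{j\%K}$ \emph{is} the component-softmax up to the uniform multiplicity $N/K$, which cancels. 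Your layout leaves the softmax-over-$k$ computation unspecified, and doing it inside an MLP reintroduces exactly the exponentiate-and-divide operation you were trying to avoid. Second, for the M-step normalization $\mu_k^+=\sum_i w_{ik}X_i/\sum_i w_{ik}$, the paper uses neither a $1/x$ gadget nor deferred normalization: the MLP approximates $\log w_{ik}$ (\cref{lem:ReLU_logarithm}), and the next attention layer uses these as keys against the query $\be_{j\%K}$, so that $\mathrm{SoftMax}_i(\log w_{ik}) = w_{ik}/\sum_i w_{ik}$ performs the division exactly, with the approximation error controlled through the Lipschitz property of log-softmax (\cref{lem:lip of log-softmax}).

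Your fallback plan --- carry unnormalized sufficient statistics and divide once at readout --- fails for $L>1$: the next E-step needs the \emph{normalized} mean $\mu_k^+$ to form its logits, so the division cannot be postponed past a single iteration. Your primary plan (a ReLU approximation of $x\mapsto 1/x$ on $[N\underline{\pi}/2,\,N]$) is not obviously fatal but is left entirely open, and it is precisely the step whose error control is delicate. Separately, the theorem's claim that the transformer ``estimates $\btheta$ efficiently'' (not merely that it tracks the EM iterates) requires the statistical half of the argument: the paper imports the population-EM contraction of Kwon--Caramanis, proves uniform concentration of the empirical E/M-step functionals over the region $\{D_m\le a\}$, and only then combines the resulting rate with the transformer's approximation error to obtain the bound $a\beta^L + \tfrac{1}{1-\beta}\,\epsilon$ of \cref{thm: EM approx}. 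Your proposal treats the ``regular conditions'' as a black box and never engages with this component, so as written it would at best show that the network approximates the empirical EM trajectory, not that it estimates the ground-truth parameters.
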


{\bf Transformer can approximate power iteration of cubic tensor.} 
Since directly implementing the spectral algorithm with transformers proves prohibitively complex, we instead demonstrate that transformers can effectively approximate its core computational step--the power iteration for cubic tensors (Algorithm 1 in \cite{JMLR:v15:anandkumar14b}; see \cref{secapp:alg}). 
Specifically, we prove that a single-layer transformer can approximate the iteration step:
\begin{equation}
\label{eqn:tensor iteration}
v^{(j+1)} = T\paren{I, v^{(j)}, v^{(j)}},~ j\in \N,
\end{equation}
where $I$ denotes the identity matrix and $T$ represents the given cubic tensor. 
For technical tractability, we assume the attention layer employs a $\textit{ReLU}$ activation function. The formal statement appears in \cref{secapp:tensor} due to space limitations.

\vspace{-2mm}
\begin{theorem}[Informal]
\label{thm:tensor approx}
    There exists a $2L$-layer transformer $\TF_\bTheta$ with ReLU activation such that for any $d\leq d_0$, $T\in \R^{d\times d\times d}$ and $v^{(0)} \in \R^d$,  given suitable embeddings, $\TF_\bTheta$ implements $L$ steps of \cref{eqn:tensor iteration} exactly.
\end{theorem}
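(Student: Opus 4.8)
The plan is to show that one attention layer (with ReLU activation) plus one MLP layer can implement a single step of the cubic tensor power iteration $v \mapsto T(I, v, v)$, and then compose $L$ such blocks. The key conceptual point, as the paper hints, is that attention heads can operate along an axis other than the ``token'' axis: if I lay the tensor $T \in \R^{d \times d \times d}$ into the hidden state so that its three modes are distributed across the token dimension and the feature dimension in a suitable way, then the softmax-free (ReLU) attention product $(\bV\bH)\,\sigma((\bK\bH)^\top(\bQ\bH))$ computes exactly the bilinear contractions $\sum_{j,k} T_{ijk} v_j v_k$ up to bookkeeping. Concretely, I would suppose the embedding places, in token $j$, a block containing the slice $T_{\cdot j \cdot} \in \R^{d\times d}$ (flattened), the current iterate $v$, and the usual scratch/indicator coordinates that the $\TF$ construction conventions in the paper already use; the readin/embedding is where the ``suitable embeddings'' clause of the theorem does its work.

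The steps, in order: (1) Fix the hidden dimension $D$ and the layout of the hidden state $\bH \in \R^{D \times N}$ with $N = d$ tokens (one per index of the first mode, or a small constant multiple thereof), specifying which coordinates hold the tensor entries, which hold the iterate $v^{(j)}$, and which are scratch space. (2) Construct the attention block for one iteration: choose $\bQ, \bK$ so that the pre-activation score matrix $(\bK\bH)^\top(\bQ\bH)$ has entries that are (affine in) the products $v^{(j)}_k$ against a selector, use the fact that ReLU acts as the identity on nonnegative arguments (arranging signs via the standard trick of splitting into positive and negative parts, or by a prior shift stored in scratch coordinates) so that the ``activation'' is transparent, and choose $\bV$ to carry the tensor slices; the resulting update adds $\sum_{k} (\text{something}_k)\,\bV\bH$, which I arrange to equal the partial contraction $T(I, v^{(j)}, \cdot)$ in one head and complete the second contraction either in a second head or by a two-layer composition. (3) Use the MLP layer to clean up: copy the freshly computed $w_i := \sum_{j,k} T_{ijk} v^{(j)}_j v^{(j)}_k$ from scratch coordinates back into the coordinates that the next block reads as ``the iterate'', and zero out the scratch coordinates (ReLU MLP easily implements coordinate copies, sign splits $x = \sigma(x) - \sigma(-x)$, and masking). (4) Compose $L$ copies of (attention + MLP), giving a $2L$-layer transformer, and verify by induction on $j$ that after block $j$ the designated coordinates hold exactly $v^{(j)}$ — exactness (not just approximation) holds because ReLU is piecewise-linear and every quantity in sight is polynomial of bounded degree, with no softmax normalization to introduce error.

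The main obstacle is step (2): the power iteration is genuinely \emph{quadratic} in $v$, whereas a single attention head gives one multiplicative interaction between (a function of) the query/key side and (a function of) the value side — essentially one product of two linear forms in $\bH$. So one head naturally produces $T(I, v, \cdot)$ contracted once; getting the second contraction requires either (i) a second head/layer that re-contracts the intermediate rank-structured object, which forces me to store a $d \times d$ intermediate and is where the hidden dimension budget $D = O(d^2)$ and the $M = O(1)$ heads claim must be checked, or (ii) a cleverer single-layer layout in which $v$ enters on both the key side and the value side simultaneously — feasible because nothing forbids $\bK$ and $\bV$ from reading overlapping coordinates, but then I must be careful that the ReLU is applied to a matrix whose entries I can still control in sign. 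I expect option (i), realized across the attention layer and its following MLP (so still ``one block''), to be the cleanest; the bulk of the written proof will be the explicit $\bQ,\bK,\bV,\bW_1,\bW_2$ matrices and the induction verifying exact implementation, all of which is routine once the layout is fixed.
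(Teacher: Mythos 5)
You correctly identify the central obstacle — the update $v \mapsto T(I,v,v)$ is quadratic in $v$ while a single head supplies only one product of two linear forms — but neither of your two proposed resolutions actually closes it, and the paper's resolving idea is absent from your sketch. Your preferred option (i) (store the intermediate $A = T(I,I,v) \in \R^{d\times d}$ and re-contract) fails on two counts. First, if the second contraction $Av$ is to be finished "across the attention layer and its following MLP," the MLP must multiply two \emph{stored} quantities $A_{ij}$ and $v_j$, which a token-wise ReLU network can only approximate, not compute exactly — so exactness is lost. Second, even the \emph{first} contraction $A_{\cdot j} = \sum_k T_{\cdot j k} v_k$ is itself a sum of $d$ products inside token $j$, so it already requires either $d$ heads or an extra attention pass over a re-indexed token axis; option (i) therefore does not avoid the resource it was meant to avoid, and if realized with two attention sublayers per iteration it yields a $4L$-layer network rather than the claimed $2L$.

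The paper's construction is essentially your option (ii) made precise, and the missing ingredient is that the second contraction index is distributed over $O(d)$ \emph{attention heads}: a first attention layer writes the scalar $d\,v^{(0)}_j$ into token $j$'s scratch slot; then in the iteration layer, head $m$ uses $\bQ_m\bh_i = v^{(0)}_m$ (a scalar read off the query token), $\bK_m\bh_j = d\,v^{(0)}_j$ (read off token $j$), and $\bV_m\bh_j = T_{:,j,m}$, so that summing the head outputs over $m\in[d]$ and the attention sum over $j\in[d]$ produces $\sum_{j,m} v^{(0)}_m v^{(0)}_j T_{:,j,m} = T(I,v^{(0)},v^{(0)})$ exactly in one attention layer, with one extra head subtracting the stale iterate through the residual connection. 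This costs $M=O(d_0)$ heads and $D=O(d_0^2)$ width (the paper's formal statement gives exactly these bounds, not $M=O(1)$ as you anticipated). Your surrounding scaffolding — the token layout with slice $T_{\cdot j \cdot}$ in token $j$, the ReLU sign-splitting $x=\sigma(x)-\sigma(-x)$ to make the activation transparent, the MLP cleanup, and the induction over $L$ blocks — all matches the paper; the gap is precisely the head-indexed realization of the second bilinear contraction.
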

\vspace{-2mm}
We give some discussion of the theorems in the following remarks.
\begin{remark}
    (1) \cref{thm:EM-approx-informal} demonstrates that a transformer architecture can approximate the EM algorithm for GMM tasks with varying numbers of components using a single shared set of parameters (i.e., one backbone $\bTheta$). This finding supports the empirical effectiveness of TGMM ({\bf RQ1} in \cref{sec:results-and-findings}). Additionally, \cref{thm:tensor approx} establishes that transformers can approximate power iterations for third-order tensors across different dimensions, further corroborating the model’s ability to generalize across GMMs with varying component counts.

    \vspace{-1mm}
    (2) \cref{thm:EM-approx-informal} holds uniformly over sample sizes $N$ and sampling distributions under mild regularity conditions, aligning with the observed robustness of TGMM ({\bf RQ2} in \cref{sec:results-and-findings}).

\end{remark}
\begin{remark}
    Different "readout" functions are also required to extract task-specific parameters in our theoretical analysis, aligning with the architectural design described in \cref{sec:TGMM-archi}. For further discussion, refer to \cref{rmk:readout} in \cref{sec:TF_EM}.
\end{remark}


\vspace{-4mm}

\subsection{Proof Ideas}
\label{sec:proof idea}
\vspace{-2mm}

{\bf Proof Idea of \cref{thm:EM-approx-informal}.}
We present a brief overview of the proof strategy for \cref{thm:EM-approx-informal}. 
Our approach combines three key components: 
(1) the convergence properties of the population-EM algorithm\citep{pmlr-v125-kwon20a}, 
(2) concentration bounds between population and sample quantities (established via classical empirical process theory), 
and (3) a novel transformer architecture construction. 
The transformer design is specifically motivated by the weighting properties of the \textit{softmax} activation function, which naturally aligns with the EM algorithm's update structure. 
For intuitive understanding, \cref{fig:thm_fig} provides a graphical illustration of this construction.
The full proof is in \cref{secapp:EM}.

\begin{figure}[htbp]
    \centering
    \includegraphics[width=0.9\linewidth]{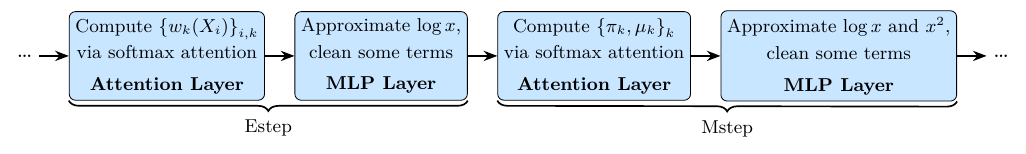}
    \caption{ (Informal version)Transformer Construction for Approximating EM Algorithm Iterations.
    The word "clean" means setting all positions of the corresponding vector to zero.}
    \label{fig:thm_fig}
\end{figure}

\vspace{-2mm}

{\bf Proof Idea of \cref{thm:tensor approx}.} 
To approximate \cref{eqn:tensor iteration}, we perform a two-dimensional computation within a single-layer transformer. The key idea is to leverage the number of attention heads $ M $ to handle one dimension while utilizing the $ Q, K, V $ structure in the attention layer. Specifically, let $ T = (T_{i,j,m})_{i, j, m \in [d]} $ and $ v^{(j)} = (v_i^{(j)})_{i \in [d]} $. Then, \cref{eqn:tensor iteration} can be rewritten as
$v^{(j+1)} = \sum_{j,m \in [d]} v_j v_l T_{:,j,m}$,
where $T_{:,j,m} = (T_{i,j,m})_{i\in[d]}\in\R^d$.
This operation can be implemented using $ d $ attention heads, where each head processes a dimension of size $ d $ (\cref{fig:thm2_fig}). The complete construction and proof are provided in \cref{secapp:tensor}.
\begin{figure}[htbp]
    \centering
    \includegraphics[width=0.75\linewidth]{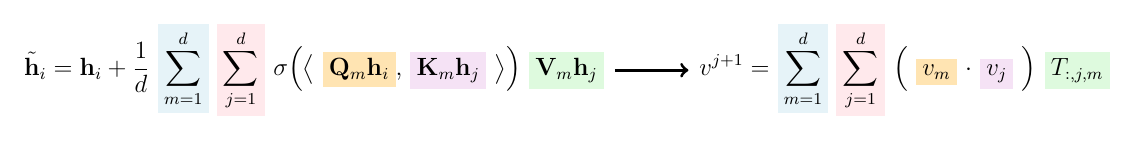}
    \caption{Illustration of implementing \cref{eqn:tensor iteration} via a multi-head attention structure, where colored boxes denote corresponding implementation components. Here $\sigma$
    denotes the ReLU function.}
    \label{fig:thm2_fig}
\end{figure}

\vspace{-8mm}

\section{Conclusion and discussions}
\label{sec:discussion}

\vspace{-3mm}

In this paper, we investigate the capabilities of transformers in GMM tasks from both theoretical and empirical perspectives. Our work is among the earliest studies to investigate the mechanism of transformers in unsupervised learning settings. 
Our results establish fundamental theoretical guarantees that Transformers can efficiently implement classical algorithms—such as the EM algorithm and spectral methods.
This is consistent with our empirical finding that the performance of our meta-training algorithm can interpolate between EM and the spectral method. It also opens a room for future improvement of attention-based meta-training algorithms in a broader class of unsupervised learning problems.
We discuss the limitations and potential future research directions in \cref{secapp:limitation}.


\bibliography{references}
\bibliographystyle{iclr2026_conference}
\newpage
\appendix
\appendix
\addcontentsline{toc}{section}{Appendix} 
\part{Appendix} 
\parttoc

\setcounter{algorithm}{0} 
\renewcommand{\thealgorithm}{\thesection.\arabic{algorithm}}
\setcounter{theorem}{0} 
\renewcommand{\thetheorem}{\thesection.\arabic{theorem}}
\numberwithin{lemma}{section}
\setcounter{lemma}{0}
\renewcommand{\thelemma}{\thesection.\arabic{lemma}}
\numberwithin{remark}{section}
\setcounter{remark}{0}
\numberwithin{corollary}{section}
\setcounter{corollary}{0}

{\bf Disclosure of LLM usage.} We used LLMs solely for grammatical correction, language polishing, and drafting preliminary code snippets.

{\bf Organization of the Appendix.} 
The appendix is organized as follows. \Cref{secapp:additional-review} provides additional literature review. \Cref{secapp:alg} formally presents the GMM algorithms referenced in \cref{sec:pre}, and \cref{sec:full_notation} details the complete notation for the network architecture. We analyze the parameter efficiency of TGMM in \cref{sec: parameter_efficiency}. \Cref{secapp:limitation} discusses limitations and outlines directions for future work. Rigorous statements and proofs of \cref{thm:EM-approx-informal} and \cref{thm:tensor approx} are provided in \cref{secapp:EM} and \cref{secapp:tensor}, respectively. Additional experimental details are included in \cref{secapp:experiment-app}.

{\bf Additional notations in the Appendix.} 
The maximum between two scalars $a, b$ is denoted as $a \vee b$. 
For a vector $v \in \R^d$, let $\norm{v}_\infty \defeq \max_{i\in[d]}|v_i|$ be its infinity norm.
We use $\bzero_d$ to denote the zero vector and $\be_i \in \R^d$ to denote the $i$-th standard unit vector in $\R^d$. 
For a matrix $\bA \in \R^{d_1\times d_2}$, we denote $\norm{\bA}_2\defeq \sup_{\norm{x}_2 = 1} \norm{\bA x}$ as its operator norm.
We use $\widetilde{O}\paren{\cdot}$ to denote $O\paren{\cdot}$ with hidden $\log$ factors.
For clarify, we denote the ground-truth parameters of GMM with a superscript $ ^\ast$, i.e. $\sets{\pi_k^\ast, \mu_k^\ast}_{k\in[K]}$, throughout this appendix.

\section{Literature on Density Estimation using LLMs}
\label{secapp:additional-review}
Recent studies have explored the capabilities of large language models (LLMs) for in-context probability density estimation. For instance, \cite{ICLR2025-densityestLLM} interprets LLM learning as an adaptive form of Kernel Density Estimation, revealing divergent learning trajectories compared to traditional methods. \cite{schaeffer2024incontext} introduces a more general framework for in-context learning by modeling unconstrained energy functions, enabling effective learning even when input and output spaces are mismatched. Meanwhile, \cite{fakoor2020trade} leverages self-attention mechanisms to perform empirical density estimation across heterogeneous data types.
Whereas these efforts prioritize empirical performance in distribution estimation, our paper focuses on the theoretical expressive power of transformers, specifically in the context of GMM estimation. 

\section{Algorithm Details}
\label{secapp:alg}
We state the classical algorithms of GMM mention in \cref{sec:pre} in this section.
    \begin{algorithm}[htbp]
    \caption{EM algorithm for GMM}\label{alg:EM1}
    \begin{algorithmic}[1]
    \Require $\{X_i,i\in [N]\}$, $\btheta^{(0)} = \{\pi_1^{(0)},\mu_1^{(0)},\cdots \pi_K^{(0)},\mu_K^{(0)}\}$
    \State $j \gets 0$
    \While{not converge}
        \State \textbf{E-step:} $w_{k}^{(j+1)}(X_i) = \frac{\pi_k^{(j)}\phi(X_i; \mu_k^{(j)})}{\sum_{k\in[K]} \pi_k^{(j)}\phi(X_i; \mu_k^{(j)})}$, $i \in [N]$, $k \in [K]$
        \State \textbf{M-step:} $\pi_k^{(j+1)} = \frac{\sum_{i\in[N]}w_{k}^{(j+1)}(X_i)}{N}$, $\mu_k^{(j+1)} = \frac{\sum_{i\in[N]}w_{k}^{(j+1)}(X_i) X_i}{\sum_{i\in[N]}w_{k}^{(j+1)}(X_i)}$, $k \in [K]$
    \State $j \gets j + 1$
    \EndWhile
    \end{algorithmic}
    \end{algorithm}

    \begin{algorithm}[htbp]
    \caption{Spectral Algorithm for GMM}\label{alg:spectral1}
    \begin{algorithmic}[1]
    \Require $\{X_i,i\in [N]\}$ 
    \State Compute the empirical moments $\hat{M}_2$ and $\hat{M}_3$ by
    \begin{align*}
        \hat{M}_2 &= \frac{1}{N}\sum_{i\in [N]}X_i \otimes X_i-  I_d,~ \\
        \hat{M}_3 &= \frac{1}{N}\sum_{i\in [N]}X_i \otimes X_i \otimes X_i - \frac{1}{N}\sum_{i\in[N], j\in[d]} \paren{X_i \otimes \be_j \otimes \be_j + \be_j \otimes X_i \otimes \be_j + \be_j \otimes \be_j \otimes X_i}
    \end{align*}
    \State Do first $K$-th singular value decomposition(SVD) for $\hat{M}_2$: $\hat{M}_2 \approx UDU^\top$ and let $W = U D^{-1/2}$, $B = U D^{1/2}$ 
    \State Do first $K$-th robust tensor decomposition (Algorithm 1 in \cite{JMLR:v15:anandkumar14b}, see \cref{alg:robust_tensor_power}) for $\Tilde{M}_3 = \hat{M}_3\paren{W,W,W}$:
    \begin{align*}
        \Tilde{M}_3 \approx \sum_{k\in[K]} \lambda_k v_k^{\otimes 3}
    \end{align*}
    \Return $\hat{\pi}_k = \lambda_k^{-2}$, $\hat{\mu}_k = \lambda_k B v_k$, $k \in [K]$. 
    \end{algorithmic}
    \end{algorithm}

    \begin{algorithm}[htbp]
    \caption{Robust Tensor Power Method}
    \label{alg:robust_tensor_power}
    \begin{algorithmic}[1]
    \Require symmetric tensor $T \in \mathbb{R}^{d \times d \times d}$, number of iterations $L$, $N$.
    \Ensure the estimated eigenvector/eigenvalue pair; the deflated tensor.
    \For{$\tau = 1$ to $L$}
        \State Draw $v_0^{(\tau)}$ uniformly at random from the unit sphere in $\mathbb{R}^d$.
        \For{$t = 1$ to $N$}
            \State Compute power iteration update:
            \State $v_t^{(\tau)} := \frac{T(I, v_{t - 1}^{(\tau)}, v_{t - 1}^{(\tau)})}{\|T(I, v_{t - 1}^{(\tau)}, v_{t - 1}^{(\tau)})\|}$
        \EndFor
    \EndFor
    \State Let $\tau^* := \arg \max_{\tau \in [L]} \{T(v_N^{(\tau)}, v_N^{(\tau)}, v_N^{(\tau)})\}$.
    \State Do $N$ power iteration updates (line 5) starting from $v_N^{(\tau^*)}$ to obtain $\hat{v}$.
    \State Set $\hat{\lambda} := \tilde{T}(\hat{v}, \hat{v}, \hat{v})$.
    \State \Return the estimated eigenvector/eigenvalue pair $(\hat{v}, \hat{\lambda})$; the deflated tensor $\tilde{T} - \hat{\lambda} \hat{v}^{\otimes 3}$.
    \end{algorithmic}
    \end{algorithm}

\section{Full Notation of Network Architecture}\label{sec:full_notation}

\begin{definition}[Attention layer]
\label{def:attention}
A (self-)attention layer with $M$ heads is denoted as $\Attn_{\bAtt}(\cdot)$ with parameters $\bAtt=\sets{ (\bV_m,\bQ_m,\bK_m)}_{m\in[M]}\subset \R^{D\times D}$. On any input sequence $\bH\in\R^{D\times N}$,
\begin{talign}
    \wt{\bH} = \Attn_{\bAtt}(\bH)\defeq \bH + \sum_{m=1}^M (\bV_m \bH) \operatorname{\mathbf{softmax}}\paren{ (\bK_m\bH)^\top (\bQ_m\bH) } \in \R^{D\times N}, \nonumber
\end{talign}
In vector form,
\begin{talign*}
    \wt{\bh}_i = \brac{\Attn_{\bAtt}(\bH)}_i = \bh_i + \sum_{m=1}^M \sum_{j=1}^{N} \brac{\operatorname{\mathbf{softmax}}\paren{ \paren{\paren{\bQ_m\bh_i}^\top \paren{\bK_m\bh_j}}_{j=1}^{N} }}_j  \bV_m\bh_j.
\end{talign*}
Here $\operatorname{\mathbf{softmax}}$ is the activation function defined by
\(    \operatorname{\mathbf{softmax}}\paren{v} = \paren{\frac{\exp(v_1)}{\sum_{i=1}^d \exp(v_i)},\cdots,\frac{\exp(v_d)}{\sum_{i=1}^d \exp(v_i)}}
\)
for $v \in \R^d$.
\end{definition}
The Multilayer Perceptron(MLP) layer is defined as follows.
\begin{definition}[MLP layer]
\label{def:mlp}
A (token-wise) MLP layer with hidden dimension $D'$ is denoted as $\MLP_{\bthetamlp}(\cdot)$ with parameters $\bthetamlp=(\bW_1,\bW_2)\in\R^{D'\times D}\times\R^{D\times D'}$. On any input sequence $\bH\in\R^{D\times N}$, 
\begin{talign*}
    \wt{\bH} = \MLP_{\bthetamlp}(\bH) \defeq \bH + \bW_2\barsig(\bW_1\bH),
\end{talign*}
where $\barsig: \R \to \R$ is the ReLU function. In vector form, we have $\wt{\bh}_i=\bh_i+\bW_2\sigma(\bW_1\bh_i)$. 
\end{definition}
Then we can use the above definitions to define the transformer model.
\begin{definition}[Transformer]
\label{def:tf}
An $L$-layer transformer, denoted as $\TF_\bTF(\cdot)$, is a composition of $L$ self-attention layers each followed by an MLP layer:
\begin{align*}
    \TF_\bTF(\bH) = \MLP_{\bthetamlp^{(L)}}\paren{ \Attn_{\bAtt^{(L)}}\paren{\cdots \MLP_{\bthetamlp^{(1)}}\paren{ \Attn_{\bAtt^{(1)}}\paren{\bH}} }}.
\end{align*}
Here the parameter $\bTF=(\bAtt^{(1:L)},\bthetamlp^{(1:L)})$ consists of the attention layers $\bAtt^{(\ell)}=\sets{ (\bV^{(\ell)}_m,\bQ^{(\ell)}_m,\bK^{(\ell)}_m)}_{m\in[M^{(\ell)}]}\subset \R^{D\times D}$, the MLP layers $\bthetamlp^{(\ell)}=(\bW^{(\ell)}_1,\bW^{(\ell)}_2)\in\R^{D^{(\ell)}\times D}\times \R^{D\times D^{(\ell)}}$.
\end{definition}

\section{On the parameter efficiency of TGMM}\label{sec: parameter_efficiency}
Aside from its backbone, the extra parameters in a TGMM comprises the following:
\begin{description}
    \item[Parameters in the task embedding module] This part has a parameter count of $s \times d_\text{task}$. 
    \item[Parameters in the $\operatorname{Readin}$ layer] This part has a parameter count of $O((d_\text{task} + d) \times D)$.
    \item[Parameters in the $\operatorname{Readout}$ layer] This part has a parameter count of $O(sdD)$, which comprises of parameters from $s$ distinct attention mechanisms.
\end{description}
As $d_\text{task}$ is typically of the order $O(D)$, we conclude that the total extra parameter complexity is of the order $O(sdD)$, which in practice is often way smaller than the parameter complexity of the backbone, i.e., of the order $O(LD^2)$ Meanwhile, a naive implementation of adapting transformer architecture to solve $s$ distinct GMM tasks require a different transformer backbone. As the complexity of backbone often dominate those of extra components, the TGMM implementation can reduce the parameter complexity by an (approximate) factor of $1/s$ in practice.

\section{Limitations and future work directions}
\label{secapp:limitation}
First, while our theoretical analysis focuses on the approximation ability of transformers, the optimization dynamics remain unexplored.
This is a common theoretical challenge in ICL literature; see \cite{bai2023tfstats, lin2024transformers, giannou2025how}.
Second, approximating the full spectral algorithm (\cref{alg:spectral1}; see \cref{secapp:alg}) presents a significant challenge, which we leave for future work. Third, our study is limited to the expressivity of transformers on classical GMM tasks; exploring their performance on other unsupervised learning tasks is an interesting direction that warrants further investigation.

\section{Formal statement of \texorpdfstring{\cref{thm:EM-approx-informal}}{Theorem 1} and proofs}
\label{secapp:EM}

For analytical tractability, we implement $\operatorname{Readin}$ as an identity transformation and define $\operatorname{Readout}$ to extract targeted matrix elements hence they are both fixed functions. 
Actually, we also need "Readout" functions to get the estimated parameters for different tasks, see \cref{rmk:readout}.
To theoretical convenience, we use the following norm of transformers, which differs slightly from the definition in \cite{bai2023tfstats}.
\begin{align}
    \nrmp{\bTheta}\defeq \max_{\ell\in[L]} \set{  
    \max_{m\in[M]} \set{\norm{\bQ_m^\lth}_2, \norm{\bK_m^\lth}_2, \norm{\bV_m^\lth}_2} +
    \norm{\bW_1^\lth}_2 + \norm{\bW_2^\lth}_2
    } . \nonumber
\end{align}
Then the transformer class can be defined as 
\begin{align}
\gF \defeq \gF(L, D, D^\prime, M, B_\bTheta) = \set{\TF_\bTheta,  \nrmp{\bTheta} \leq B_\bTheta, D^{(\ell)} \leq D^\prime, M^{(\ell)} \leq M, \ell \in[L]}. \nonumber
\end{align}

\subsection{Formal statement of \texorpdfstring{\cref{thm:EM-approx-informal}}{Theorem 1}}
\label{secapp:state_EM}

First, we introduce some notations. 
We define $\pi_{\min} = \min_i \pi_i^\ast$, $\rho_\pi = \max_i \pi_i^* / \min_i \pi_i^*$. We use $R_{ij} = \|\mu_i^\ast- \mu_j^\ast\|$ to denote the pairwise distance between components and $R_{\min} = \min_{i\neq j} R_{ij}$, $R_{max} = \paren{\max_{i\neq j} R_{ij}} \vee \paren{\max_{i\in[K]}\norm{\mu_i^\ast}} $. 
Without the loss of generality, we assume that $R_{\max} \geq 1$.
For dimension and components adaptation, we assume $d \leq d_0$ and $K \leq K_0$. 
Since in practice the sample size $N$ is much larger than the number of components $K$, we assume that $N$ is divisible by $K$, i.e. $ N/K \in \N$. 
Otherwise, we only consider the first $K \lfloor N/K\rfloor $ samples and drop the others.
We encode $\rmX = \{X_i\}_{i=1}^{N}$ into an input sequence $\bH$ as the following:
\begin{align}
\label{eqn:input-format}
&\bH = \begin{bmatrix}
\overline{X}_1 & \overline{X}_2 & \dots & \overline{X}_N  \\
\bp_1 & \bp_2 & \dots & \bp_N 
\end{bmatrix} \in \R^{D\times N}, ~
\bp_i = \begin{bmatrix}
    \overline{\btheta}_i\\\br_i
\end{bmatrix},\\
&\overline{\btheta}_i = 
\begin{bmatrix}
   \overline{\bpi}_{\log} \\ \overline{\mu}_{i \% K} \\
   c_{i \% K} \\ \bzero_{3K_0}
\end{bmatrix} \in \R^{d_0 + 4K_0 + 1},~
\br_i = 
\begin{bmatrix}
   \bzero_{\Tilde{D}} \\ 1 \\ \be_{i \% K} 
\end{bmatrix} \in \R^{D-(2d_0+3K_0+1)},
\end{align}
where $\overline{X}_i = [{X}_i^\top, \bzero_{d_0-d}^\top]^\top$, $\overline{\bpi}_{\log} = [ \bpi_{\log}^\top,\bzero_{K_0-K}^\top]^\top$, $\overline{\mu}_{i \% K}=[\mu_{i \% K}^\top$, $\bzero_{d_0-d}^\top]^\top$, 
$c_{i\%K} \in \R$ and $\be_{i \% K} \in \R^{K_0}$ denotes the ${i \% K}$-th standard unit vector.
To match the dimension, $\Tilde{D} = D-(2d_0+5K_0+2)$.
We choose $D = O(d_0 + K_0)$ to get the encoding above. 
For the initialization, we choose $\bpi_{\log}= \log \bpi^{(0)}$, $\mu_i = \mu_i^{(0)}$, $c_i = \|\mu_i^{(0)}\|_2^2$. 
 

To guarantee convergence of the EM algorithm, we adopt the following assumption for the initialization parameters, consistent with the approach in \cite{pmlr-v125-kwon20a}.
\begin{enumerate}[label=(A\arabic*),start=1, leftmargin=*]
    \item \label{ass:A1} Suppose the GMM has parameters $\{(\pi_j^\ast, \mu_j^\ast): j \in [K]\}$ such that
    \begin{equation}
        R_{\min} \ge C  \cdot \sqrt{\log(\rho_\pi K)},\nonumber
    \end{equation}
    and suppose the mean initialization $\mu_1^{(0)}, ..., \mu_K^{(0)}$ satisfies
    \begin{equation}
        \forall i \in [K], \norm{\mu_i^{(0)} - \mu_i^\ast}\le \frac{R_{\min}}{16}.\nonumber
    \end{equation}
    Also, suppose the mixing weights are initialized such that
    \begin{align}
        \forall i \in [K], \left|\pi_i^{(0)} - \pi_i^\ast\right| \le \pi_i / 2. \nonumber
    \end{align} 
\end{enumerate}
We denote the output of the transformer $\TF_\bTheta$ as $\btheta^{\tt TF} \defeq \sets{\pi_1^{\tt TF}, {\mu}_1^{\tt TF}, {\pi}_2^{\tt TF},\mu_2^{\tt TF}, \cdots \pi_K^{\tt TF},\mu_K^{\tt TF}}$ and assume matched indices.
Define 
\begin{equation}
    \label{eq:definte_Dm_main}
    D_\bTheta^{\tt TF} \defeq \max_{i\in [K]} \set{\norm{\mu_i^{\tt TF} - \mu_i^\ast}\vee \left(\left|\pi_i^{\tt TF} - \pi_i^\ast\right| / \pi_i \right)}. \nonumber
\end{equation}
Now we propose the theorem that transformer can efficient approximate the EM Algorithm (\cref{alg:EM1}), which is  the formal version of \cref{thm:EM-approx-informal}.

\begin{theorem}
\label{thm: EM approx}
    Fix $0< \delta, \beta < 1$ and $1/2<a<1$. Suppose there exists a sufficiently large  universal constant $C \geq 128$ for which assumption \ref{ass:A1} holds. If $N$ is suffcient large and $ \varepsilon \leq 1/\paren{100 K_0}$ sufficient small such that
    \begin{align*}
    &\frac{\tilde{c}_1}{(1-a)\pi_{\min}}\sqrt{\frac{R_{\max}(R_{\max} \vee d)\log\paren{\frac{24K}{\delta}}}{N}} \\
    &\qquad + \tilde{c}_2 \left(R_{\max} + d\left(1+\sqrt{\frac{2\log(\frac{4N}{\delta})}{d}}\right)\right) N \varepsilon  <\frac{1}{2} \left(a - \frac{1}{2}\right),
    \end{align*}
    and 
    \begin{align*}
    \epsilon(N,\varepsilon,\delta,a) &\defeq 
    \frac{\tilde{c}_3}{\left(1-a\right)\pi_{\min}} \sqrt{\frac{Kd\log(\frac{\tilde{C}N}{\delta})}{N}} \\ 
    &\qquad +\tilde{c}_4 \paren{\frac{1}{\pi_{\min}} + N\paren{R_{\max} + d+\sqrt{2d\log\left(\frac{4N}{\delta}\right)}}}  \varepsilon \\
    &<a(1-\beta),
    \end{align*}
    hold, where $\tilde{c}_1$-$\tilde{c}_4$ are universal constants, $\tilde{C} = 288 K^2(\sqrt{d} + 2R_{\max} + \frac{1}{1-a})^2$. Then there exists a $2(L+1)$-layer transformer $\TF_\bTheta$ such that
    \begin{align}
    \label{eqn:main EM approx}
        D_\bTheta^{\tt TF} \leq a \beta^L + \frac{1}{1-\beta} \epsilon(N,\varepsilon,\delta,a)
    \end{align}
    holds with probability at least $1-\delta$. Moreover, $\TF_\bTheta$ falls within the class $\gF$ with parameters satisfying:
 \begin{align*}
     &D = O(d_0 + K_0), D^\prime \leq \tilde{O}\paren{K_0 R_{\max}\paren{R_{\max} + d_0} \varepsilon^{-1}},\\
     &M = O(1),\\
     &\log B_{\bTheta} \leq \tilde{O}\paren{ K_0 R_{\max}\paren{R_{\max} + d_0}}. 
 \end{align*}
 Notably, \cref{eqn:main EM approx} holds for all tasks satisfying $d\leq d_0$ and $K\leq K_0$, where the parameters of transformer $\bTheta$ remains fixed across different tasks $\gT$.
\end{theorem}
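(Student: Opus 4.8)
The strategy is the three-ingredient one sketched in \cref{sec:proof idea}. \emph{Ingredient 1 (population EM).} Under \ref{ass:A1}, the population-EM map contracts the relative error $D(\cdot,\btheta^\ast)$ underlying $D_\bTheta^{\tt TF}$ by a factor $\beta$ on the ball $\mathcal B_a\defeq\{\btheta:\ D(\btheta,\btheta^\ast)\le a\}$; this is exactly the content of the basin-of-attraction analysis of \cite{pmlr-v125-kwon20a}, and \ref{ass:A1} places the initialization encoded in \cref{eqn:input-format} inside $\mathcal B_a$, so $L$ exact population steps reach error $a\beta^L$. \emph{Ingredient 2 (concentration).} The E-step responsibilities $w_k^{(j+1)}(X_i)$ and the M-step averages $\tfrac1N\sum_i w_k^{(j+1)}(X_i)$ and $\tfrac1N\sum_i w_k^{(j+1)}(X_i)X_i$ are empirical means of bounded, resp.\ sub-Gaussian, functions of the data; a covering-number argument over $\mathcal B_a$, together with the event $\{\max_i\norm{X_i}\le c(R_{\max}+\sqrt d+\sqrt{d\log(N/\delta)})\}$ of probability $\ge1-\delta/4$, bounds the one-step gap between sample EM and population EM, uniformly over $\mathcal B_a$, by the first group of terms in $\epsilon(N,\varepsilon,\delta,a)$, on an event of probability $\ge1-\delta/2$.

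\emph{Ingredient 3 (transformer).} The heart of the argument is to build one attention-plus-MLP block realizing (approximately) the E-step and a second realizing the M-step, so each EM iteration costs two layers in the sense of \cref{def:tf}. For the E-step, note $w_k^{(j+1)}(X_i)=\sfm_k\big(\log\pi_k^{(j)}+\langle X_i,\mu_k^{(j)}\rangle-\tfrac12\norm{\mu_k^{(j)}}^2\big)$, since the $-\tfrac12\norm{X_i}^2$ and $(2\pi)^{-d/2}$ factors cancel in the softmax. In the input format \cref{eqn:input-format} the tokens are cyclically tagged by their component (the unit vectors $\be_\cdot$ in $\br_i$) and each token stores the mean, log-mixing-weight and squared norm of its own component; a single head whose query reads $[X_i;1;1]$ and whose key reads $[\mu^{(j)};\log\pi^{(j)};-\tfrac12 c]$ of the keyed component then has attention logits depending on the key only through its component tag, so the softmax over the $N$ keys collapses to a softmax over the $K$ components, and a value map reading $\be_\cdot$ outputs the full responsibility vector $(w_1^{(j+1)}(X_i),\dots,w_K^{(j+1)}(X_i))$ at token $i$; the block's MLP writes, in scratch slots (the $\bzero_{3K_0}$ block), its coordinatewise logarithm using a width-$O(1/\varepsilon)$ ReLU network (legitimate because inside $\mathcal B_a$ the responsibilities stay above a known floor), and a uniform-logit head forms $\bpi^{(j+1)}=\tfrac1N\sum_i w^{(j+1)}(X_i)$ from the stored responsibilities. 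The M-step block then uses the stored log-responsibilities: the bilinear logit $\langle\be_{\mathrm{tag}(i)},(\log w_1(X_j),\dots,\log w_K(X_j))\rangle=\log w_{\mathrm{tag}(i)}(X_j)$ makes the softmax over keys $j$ reproduce \emph{exactly} the EM weights $w_{\mathrm{tag}(i)}(X_j)/\sum_{j'}w_{\mathrm{tag}(i)}(X_{j'})$, so the attention output at token $i$ is the weighted average $\mu_{\mathrm{tag}(i)}^{(j+1)}$, and the MLP refreshes the squared-norm slot $c_{\mathrm{tag}(i)}=\norm{\mu^{(j+1)}_{\mathrm{tag}(i)}}^2$ by a ReLU approximation of a quadratic; stale entries in the residual stream are overwritten by the value maps, as in \cite{bai2023tfstats}. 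Stacking $L$ such two-layer blocks after one further two-layer block that lays out the readout slots gives a $2(L+1)$-layer transformer; tallying the width and spectral norm of every sub-network yields $D=O(d_0+K_0)$, $D'\le\widetilde{O}\paren{K_0R_{\max}(R_{\max}+d_0)\varepsilon^{-1}}$, $M=O(1)$ and $\log B_\bTheta\le\widetilde{O}\paren{K_0R_{\max}(R_{\max}+d_0)}$ (the exponential weight scale reflecting the dynamic range of the approximated quantities, e.g.\ responsibilities as small as $e^{-\Theta(R_{\max}(R_{\max}+d_0))}$); since every weight matrix acts on the fixed $d_0,K_0$-padded coordinates, the unused coordinates remain zero, and the cyclic tags never index into the unused range, the same $\bTheta$ serves all tasks with $d\le d_0$ and $K\le K_0$.

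\emph{Error accounting and obstacle.} Combining Ingredients 2 and 3, the transformer iterate obeys $D^{(j+1)}\le\beta D^{(j)}+\epsilon(N,\varepsilon,\delta,a)$, where the $\varepsilon$-dependent terms in $\epsilon$ collect the propagated MLP approximation errors scaled by the Lipschitz constants of the EM map (governed by $R_{\max}$, $d$, $N$ and the $\norm{X_i}$-radius). The hypothesis $\epsilon(N,\varepsilon,\delta,a)<a(1-\beta)$ is precisely the inductive invariant $D^{(j)}\le a\Rightarrow D^{(j+1)}\le a$, keeping all iterates in $\mathcal B_a$; the first displayed hypothesis, with the $\tfrac12(a-\tfrac12)$ bound, additionally keeps the mixing-weight iterates above $\tfrac12\pi_i$ and the means inside the radius where both the population contraction and the logarithm/reciprocal approximations remain valid. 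Unrolling from $D^{(0)}\le a$ gives $D^{(L)}\le a\beta^L+\tfrac1{1-\beta}\epsilon(N,\varepsilon,\delta,a)$, i.e.\ \cref{eqn:main EM approx}, with all failure events charged a total probability $\le\delta$. \textbf{The main obstacle} is Ingredient 3, and within it the M-step: normalizing $\mu_k^{(j+1)}=\sum_iw_k^{(j+1)}(X_i)X_i/\sum_iw_k^{(j+1)}(X_i)$ inside a single attention layer while reusing only $O(1)$ heads and an $O(d_0+K_0)$-wide stream for every $K\le K_0$. Storing the logarithms of the E-step responsibilities so that the attention logit becomes $\log w_{\mathrm{tag}(i)}(X_j)$ and the softmax itself performs the re-weighting is what makes the construction fit the transformer template; the concentration and error-accumulation steps are then comparatively routine bookkeeping.
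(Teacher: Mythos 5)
Your three-ingredient plan and, crucially, your transformer construction coincide with the paper's: the E-step is realized by a softmax head whose logits depend on a key token only through its component tag (so the softmax over $N$ keys collapses to a softmax over $K$ components and emits the responsibilities), the MLP stores $\widehat{\log}w$ via a width-$O(1/\varepsilon)$ ReLU network, and the M-step head uses the logit $\langle \be_{\mathrm{tag}},\widehat{\log}w(X_j)\rangle$ so that the softmax itself performs the normalization $\sum_j w\,X_j/\sum_j w$, with a uniform-logit head producing $\widehat{\bpi}$; the error recursion $D^{(j+1)}\le\beta D^{(j)}+\epsilon$ and its unrolling are likewise the paper's. (Your statement that this softmax reproduces the EM weights \emph{exactly} is a slight overstatement—the stored quantity is $\widehat{\log}w$, not $\log w$—but you charge the resulting discrepancy to $\epsilon$ later, as the paper does via its Lipschitz lemma for log-softmax.)

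One step as written would fail. Assumption \ref{ass:A1} does \emph{not} place the initialization inside the contraction ball $\{D\le a\}$ with $a<1$: it only guarantees $\norm{\mu_i^{(0)}-\mu_i^\ast}\le R_{\min}/16$, which can be arbitrarily large, so "unrolling from $D^{(0)}\le a$" has no basis. The paper bridges this with a separate one-step capture result (Theorem 1(i) of the cited Kwon–Caramanis analysis): a single population-EM update from any \ref{ass:A1}-initialization already yields $\norm{\mu_i^{+}-\mu_i^\ast}\le 1/2$ and $|\pi_i^{+}-\pi_i^\ast|\le\pi_i^\ast/2$, and the first displayed hypothesis with its $\tfrac12(a-\tfrac12)$ slack is exactly what guarantees that the \emph{sample, transformer-approximated} version of this first iteration still lands in $\{D\le a\}$ (this is also why the theorem requires $a>1/2$). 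Correspondingly, the extra two layers in the $2(L+1)$-layer network implement this first EM iteration—not a "readout layout" block—and the $a\beta^L$ term reflects error $a$ after iteration one, contracted $L$ further times. With that capture lemma inserted, your argument matches the paper's proof; without it, the induction never starts.
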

\begin{remark}
    From \cref{thm: EM approx}, if we take $\varepsilon = \Tilde{O}\paren{N^{-3/2}d^{-1/2}}$ and $L = O\paren{\log N}$, then we have
    \begin{align*}
        D_\bTheta^{\tt TF} \leq \Tilde{O}\paren{\sqrt{\frac{d}{N}}},
    \end{align*}
    which matches the canonical parametric error rate.
\end{remark}
\begin{remark}
    We give some explanations for the notations in \cref{thm: EM approx}. Define 
    \begin{align*}
        D_j^{\rm pEM}\defeq \max_{i\in [K]} \set{\norm{\Tilde{\mu}_i^{(j)} - \mu_i}\vee \left(\left|\Tilde{\pi}_i^{(j)} - \pi_i\right| / \pi_i \right)},
    \end{align*}
    where $\sets{\Tilde{\mu}_i^{(j)}, \Tilde{\pi}_i^{(j)}}_{i\in[K]}$ are the parameters obtained at the $j$-th iteration of the population-EM algorithm (see \cref{secapp:pop-EM} for details).
    In the convergence analysis of the population-EM algorithm \citep{pmlr-v125-kwon20a}, it is shown that after the first iteration, the parameters lie in a small neighborhood of the true parameters with high probability (i.e., $D_1^{\rm pEM} \leq a$ for some $1/2 \leq a < 1$). 
    Furthermore, the authors prove that the algorithm achieves linear convergence (i.e., $D_{j+1}^{\rm pEM} \leq \beta D_j^{\rm pEM}$ for $j \in \N_+$ and some $0 < \beta < 1$) with high probability if $D_1^{\rm pEM} \leq a$ holds.
    Following their notations, here $a$ represents the radius of the neighborhood after the first iteration, while $\beta$ is the linear convergence rate parameter. 
    Finally, $\varepsilon$ controls the approximation error of the transformer.
\end{remark}

\subsection{Construction of transformer architecture and formal version of \texorpdfstring{\cref{fig:thm_fig}}{Figure 7}}
\label{sec:TF_EM}
In this section, we give the transformer architecture construction in \cref{thm: EM approx}. We denote $w_{ij} = w_j(X_i), i\in [N], k\in[K]$ in this subsection for simplicity. Recall that we have assumed that $d \leq d_0$, $K \leq K_0$ and $N$ is divisible by $K$($ N/K \in \N$).
We first restate the encoding formulas in \cref{eqn:input-format}:
\begin{align*}
\label{eqn:refined input-format}
&\bH = \begin{bmatrix}
\overline{X}_1 & \overline{X}_2 & \dots & \overline{X}_N  \\
\overline{\btheta}_1 & \overline{\btheta}_2 & \dots & \overline{\btheta}_N  \\
\bp_1 & \bp_2 & \dots & \bp_N 
\end{bmatrix} \in \R^{D\times N}, \\
&\overline{\btheta}_i = 
\begin{bmatrix}
   \overline{\bpi}_{\log} \\ \overline{\mu}_{i \% K} \\
   c_{i \% K} \\ \overline{\bw}_{i} \\ \overline{\bw}_{i\log}\\ \overline{\bpi}
\end{bmatrix} \in \R^{d_0 + 4K_0 + 1},~
\bp_i \defeq 
\begin{bmatrix}
   \bzero_{D-(2d_0+5K_0+2)} \\ 1 \\ \be_{i \% K} 
\end{bmatrix} \in \R^{D-(2d_0+3K_0+1)}, 
\end{align*}
where $\overline{X}_i = [{X}_i^\top, \bzero_{d_0-d}^\top]^\top$, $\overline{\bpi}_{\log} = [ \bpi_{\log}^\top,\bzero_{K_0-K}^\top]^\top$, $\overline{\mu}_{i \% K}=[\mu_{i \% K}^\top$, $\bzero_{d_0-d}^\top]^\top$, 
$\overline{\bw}_{i} = [\bw_{i}^\top,\bzero_{K_0-K}^\top]^\top$, $\overline{\bw}_{i \log} = [\bw_{i\log}^\top,\bzero_{K_0-K}^\top]^\top$, $\overline{\bpi} = [ \bpi^\top,\bzero_{K_0-K}^\top]^\top$, 
$c_{i\%K} \in \R$ and $\be_{i \% K} \in \R^{K_0}$ denotes the ${i \% K}$-th standard unit vector.
For the initialization, we choose $\bpi_{\log}= \log \bpi^{(0)}$, $\mu_i = \mu_i^{(0)}$, $c_i = \|\mu_i^{(0)}\|_2^2$. 
and ${\bpi} = \bw_{i} = \bw_{i \log} = \bzero_K$, $i\in [K]$.
Finally, take $\bH^{(0)} = \bH$ which is defined in \cref{eqn:refined input-format}.

Then in E-step, we consider the following attention structures:
we define matrices $\bQ^{(1)}$, $\bK^{(1)}$, $\bV^{(1)}$,  such that
\begin{align*}
    \bQ^{(1)}\bh_i^{(0)} = \begin{bmatrix} \overline{X}_i\\ \overline{\bpi}_{\log} \\ 1\\ \bzero \end{bmatrix}, \quad
    \bK^{(1)}\bh_j^{(0)} = \begin{bmatrix} -\overline{\mu}_{j\%K} \\ \be_{j \% K} \\ \frac{1}{2} c_{j\%K} \\ \bzero \end{bmatrix}, \quad
    \bV^{(1)}\bh_j^{(0)} = \begin{bmatrix} \bzero_{d_0} \\ \bzero_{K_0}\\\bzero_{d_0 + 1} \\ \be_{j \% K} \\ \bzero_{D-(2d_0+2K_0+1)} \end{bmatrix},
\end{align*}
and use the standard softmax attention, thus 
\begin{align*}
    \wt{\bh}_i^{(1)} &= \brac{\Attn_{\bAtt^{(1)}}(\bH^{(0)})}_{:,i} \\
    &= \bh_i^{(0)} +  \sum_{j=1}^{N} \brac{\operatorname{\mathbf{softmax}}\paren{ \paren{\paren{\bQ^{(1)}\bh_i^{(0)}}^\top \paren{\bK^{(1)}\bh_j^{(0)}}}_{j=1}^{N} }}_j \cdot \bV^{(1)}\bh_j^{(0)} \\
    & = \bh_i^{(0)} + \sum_{j=1}^{N} \frac{\alpha_{j\%K}^{(0)}\exp\paren{-X_i^\top\mu_{j\%K} + \frac{1}{2} \mu_{j\%K}^\top\mu_{j\%K}}}{B \sum_{k=1}^{K} \alpha_k^{(0)}\exp\paren{-X_i^\top\mu_{k} + \frac{1}{2} \mu_{k}^\top\mu_{k}}} \cdot \bV^{(1)}\bh_j^{(0)} \\
    & = \bh_i^{(0)} + \frac{1}{B}\sum_{j=1}^{N} \hat{w}_{i\,j\%K}^{(1)}\bV^{(1)} \bh_j^{(0)} \\
    & = \bh_i^{(0)} + \sum_{j=1}^{K} \hat{w}_{ij}^{(1)}\bV^{(1)} \bh_j^{(0)}  \\
    & = \bh_i^{(0)} + \begin{bmatrix} \bzero_{d_0} \\ \bzero_{K_0}\\\bzero_{d_0+1} \\  \overline{\hat{\bw}}_i^{(1)}
    \\ \bzero_{D-(2d_0+2K_0+1)} \end{bmatrix}, ~i \in [N].
\end{align*}
where $\overline{\hat{\bw}}_i^{(1)} = \paren{\hat{w}_{i1}^{(1)},\hat{w}_{i2}^{(1)}, \cdots, \hat{w}_{iK}^{(1)},0,\cdots,0}^{\top} \in \R^{K_0}$.

Then we use a two-layer MLP to approximate $\log x$ and clean all  $\overline{\bpi}_{\log}$ , $\overline{\mu}_{i\%K}$ and $c_{i\%K}$, which is 
\begin{align}
    \bh_i^{(1)} = \MLP_{\bthetamlp^{(1)}}\paren{\wt{\bh}_i^{(1)}} = \begin{bmatrix} \overline{X}_i \\ \bzero_{K_0}\\\bzero_{d_0+1} \\  {\overline{\hat{\bw}}}_i^{(1)} \\{\overline{\hat{\bw}}}_{i \log}^{(1)} \\ \bzero_{K_0}\\\bzero_{D-(2d_0+5K_0+2)} \\ 1 \\ \be_{i \% K}  \end{bmatrix}~, i \in [N],\nonumber
\end{align}
where ${\overline{\hat{\bw}}}_{i \log}^{(1)} = \widehat{\log}{\overline{\hat{\bw}}}_i^{(1)}$.
Notice that although $\log x$ is not defined at $0$, the MLP approximation is well defined with some value which we do not care because we will not use it in the M-step.
Similarly, for any $\ell \%2 = 1$, $\ell \in \mathbb{N}_+$ we have
\begin{align*}
     \bh_i^{(\ell)} = \MLP_{\bthetamlp^{(\ell\%2)}}\paren{\brac{\Attn_{\bAtt^{(\ell\%2)}}(\bH^{(\ell - 1)})}_{:,i}} = \begin{bmatrix} \overline{X}_i \\ \bzero_{K_0}\\\bzero_{d_0+1} \\ {\overline{\hat{\bw}}}_i^{((\ell+1)/2)} \\{\overline{\hat{\bw}}}_{i \log}^{((\ell+1)/2)} \\ \bzero_{K_0}\\\bzero_{D-(2d_0+5K_0+2)} \\ 1 \\ \be_{i \% K}  \end{bmatrix}, ~ i \in [N],
\end{align*}
where ${\overline{\hat{\bw}}}_{i \log}^{((\ell + 1)/2)} = \widehat{\log}{\overline{\hat{\bw}}}_i^{((\ell + 1)/2)}$.

In M-step, we consider the following attention structures: we similarly define matrices $\bQ_m^{(2)}$, $\bK_m^{(2)}$, $\bV_m^{(2)}$, $m = 1,2$  such that 
\begin{align*}
    \bQ_1^{(2)}\bh_j^{(1)} = \begin{bmatrix} \be_{j \% K} \\  \bzero \end{bmatrix}, \quad
    \bK_1^{(2)}\bh_i^{(1)} = \begin{bmatrix} {\overline{\hat{\bw}}}_{i \log}^{(1)} \\ \bzero \end{bmatrix}, \quad
    \bV_1^{(2)}\bh_i^{(1)} = \begin{bmatrix} \bzero_{d_0} \\ \bzero_{K_0}\\ \overline{X}_i\\0 \\ \bzero_{K_0} \\\bzero_{K_0} \\ \bzero_{D-(2d_0+3K_0+1)} \end{bmatrix},
\end{align*}
and
\begin{align*}
    \bQ_2^{(2)}\bh_j^{(1)} = \bzero, \quad
    \bK_2^{(2)}\bh_i^{(1)} = \bzero, \quad
    \bV_2^{(2)}\bh_i^{(1)} = \begin{bmatrix} \bzero_{d_0} \\ \bzero_{K_0}\\ \bzero_{d_0+1} \\ \bzero_{K_0} \\ \bzero_{K_0}\\{\overline{\hat{\bw}}}_{i}^{(1)}\\ \bzero_{D-(2d_0+4K_0+1)} \end{bmatrix},
\end{align*}

Then we get
{\allowdisplaybreaks
\begin{align*}
    \wt{\bh}_j^{(2)} &= \brac{\Attn_{\bAtt^{(2)}}(\bH^{(1)})}_{:,j} \\
    &= \bh_j^{(1)} +  \sum_{m=1}^{2} \sum_{i=1}^{N} \brac{\operatorname{\mathbf{softmax}}\paren{ \paren{\paren{\bQ_m^{(2)}\bh_i^{(1)}}^\top \paren{\bK_m^{(2)}\bh_j^{(1)}}}_{j=1}^{N} }}_j \cdot \bV_m^{(2)}\bh_i^{(1)} \nonumber\\
    & = \bh_j^{(1)} + \sum_{i=1}^{N} \frac{\hat{w}_{i\, j\%K}^{(1)}}{\sum_{i=1}^{N} \hat{w}_{i\, j\%K}^{(1)}} \cdot \bV_1^{(2)}\bh_i^{(1)} + \sum_{i=1}^{N} \frac{1}{N} \cdot \bV_2^{(2)}\bh_i^{(1)}\nonumber\\
    & = \bh_j^{(1)} +  \begin{bmatrix} \bzero_{d_0} \\ \bzero_{K_0}\\ \overline{\hat{\mu}}_{j\%K}^{(1)} \\0\\ \bzero_{K_0} \\ \bzero_{K_0} \\ \bzero_{D-(2d_0+3K_0+1)} \end{bmatrix} + \begin{bmatrix} \bzero_{d_0} \\ \bzero_{K_0}\\\bzero_{d_0+1} \\ \bzero_{K_0} \\ \bzero_{K_0} \\\overline{\hat{\bpi}}^{(1)}\\ \bzero_{D-(2d_0+4K_0+1)} \end{bmatrix},\nonumber \\
    & = \bh_j^{(1)} + \begin{bmatrix} \bzero_{d_0} \\ \bzero_{K_0}\\\overline{\hat{\mu}}_{j\%K}^{(1)}\\0  \\ \bzero_{K_0} \\ \bzero_{K_0} \\ \overline{\hat{\bpi}}^{(1)} \\\bzero_{D-(2d_0+4K_0+1)} \end{bmatrix}, ~j \in [N] . \nonumber
\end{align*}
}

Similarly, we use a two-layer MLP to approximate $\log x$, $x^2$ and clean all $\overline{\bw}_i$, $\overline{\bw}_{i\log}$ and $\overline{\bpi}_i$, which is 
\begin{align}
    \bh_j^{(2)} = \MLP_{\bthetamlp^{(2)}}\paren{\wt{\bh}_i^{(2)}} = \begin{bmatrix} \overline{X}_j \\ \overline{\hat{\bpi}}_{\log}^{(1)}\\\overline{\hat{\mu}}_{j\%K}^{(1)} \\ \hat{c}_{j\%K}^{(1)}  \\ \bzero_{K_0} \\ \bzero_{K_0} \\\bzero_{K_0} \\ \bzero_{D-(2d_0+5K_0+2)} \\ 1 \\ \be_{j \% K}  \end{bmatrix}~, j \in [N], \nonumber
\end{align}
where $\overline{\hat{\bpi}}_{\log}^{(1)} = \widehat{\log} \overline{\hat{\bpi}}^{(1)}$, $\hat{c}_{j\%K}^{(1)} = \widehat{\|\hat{\mu}_{j\%K}^{(1)}\|_2^2}$. 

Similarly, for any $\ell \%2 = 0$, $\ell \in \mathbb{N}_+$ we have
\begin{align*}
     \bh_j^{(\ell)} = \MLP_{\bthetamlp^{(\ell\%2)}}\paren{\brac{\Attn_{\bAtt^{(\ell\%2)}}(\bH^{(\ell - 1)})}_{:,j}} = \begin{bmatrix} \overline{X}_j \\ \overline{\hat{\bpi}}_{\log}^{(\ell/2)}\\\overline{\hat{\mu}}_{j\%K}^{(\ell /2)} \\ \hat{c}_{j\%K}^{(\ell/2)} \\ \bzero_{K_0} \\ \bzero_{K_0} \\\bzero_{K_0} \\ \bzero_{D-(2d_0+5K_0+2)} \\ 1 \\ \be_{j \% K}  \end{bmatrix}~, j \in [N].
\end{align*}
where $\overline{\hat{\bpi}}_{\log}^{(\ell/2)} = \widehat{\log} \overline{\hat{\bpi}}^{(\ell/2)}$, $\hat{c}_{j\%K}^{(\ell/2)} = \widehat{\|\hat{\mu}_{j\%K}^{(\ell/2)}\|_2^2}$.

Thus, we can get $\hat{\bpi}^{(\ell)}$ and $\hat{\mu}_{j}^{(\ell)}$, $j \in [K]$ after $2\ell$ layers of transformer constructed above. (The last-layer MLP block retains $\bpi$ as an output parameter without cleaning it.) Our transformer construction is summarized in \cref{fig:thm_EM_formal}, which is the formal version of \cref{fig:thm_fig} in \cref{sec:proof idea}.

\begin{remark}
\label{rmk:readout}
    The output of transformer $\bH^{(2L)}$ is a large matrix containing lots of elements. To get the estimated parameters, we need to extract specific elements. In details, $\bH^{(2L)}=\brac{\bh_1^{(2L)},\cdots,\bh_N^{(2L)}}\in \R^{D\times N}$, where 
    \begin{align*}
        \bh_i^{(2L)} = \begin{bmatrix} \overline{X}_i \\ \overline{\hat{\bpi}}_{\log}^{(L)}\\\overline{\hat{\mu}}_{i\%K}^{(L)} \\ \hat{c}_{i\%K}^{(L)} \\ \bzero_{K_0} \\ \bzero_{K_0} \\\overline{\hat{\bpi}}^{(L)} \\ \bzero_{D-(2d_0+5K_0+2)} \\ 1 \\ \be_{j \% K}  \end{bmatrix}~, i \in [N].
    \end{align*}
    We use the following linear attentive pooling to get the parameters:
    \begin{align}
    \mathbf{O} = \frac{1}{N}(\bV_o \bH) \paren{ (\bK_o\bH)^\top \bQ_o } \in \R^{(d+K)\times K}, \nonumber
    \end{align}
    where $\bQ_o = \brac{\bq_{o 1},\cdots, \bq_{o K}} \in \R^{(d+K)\times K}$, $\bK_o, \bV_o \in \R^{(d+K)\times N}$ satisfying
    \begin{equation*}
        \bq_{o i} = \begin{bmatrix}
            K\be_i\\\bzero_{d}
        \end{bmatrix},~
        \bK_o \bh_j^{(2L)} = \begin{bmatrix}
            \be_{j\%K}\\\bzero_{d}
        \end{bmatrix},~
        \bV_o \bh_j^{(2L)} = \begin{bmatrix}
            \hat{\bpi}^{(L)}\\\hat{\mu}_{j\%K}
        \end{bmatrix}.
    \end{equation*}
    Thus by $N/K \in \N$, we have 
    \begin{align*}
        \mathbf{o}_i = \frac{1}{N}\sum_{j\in[N]} \bq_{o i}^{\top} \paren{\bK_o \bh_j} \bV_o \bh_j = \frac{K}{N}  \frac{N}{K} 
        \begin{bmatrix}
            \hat{\bpi}^{(L)}\\\hat{\mu}_{i}
        \end{bmatrix}
        =\begin{bmatrix}
            \hat{\bpi}^{(L)}\\\hat{\mu}_{i}
        \end{bmatrix} \in \R^{(d+K)}, i\in[K].
    \end{align*}
    Finally, we get
    \begin{align*}
        \mathbf{O}= \brac{\bq_{o 1},\cdots,\bq_{o N}} =
        \begin{bmatrix}
            \hat{\bpi}^{(L)} &\hat{\bpi}^{(L)} &\cdots &\hat{\bpi}^{(L)}\\
            \hat{\mu}_{1}& \hat{\mu}_{2} &\cdots &\hat{\mu}_{K}
        \end{bmatrix}.
    \end{align*}
\end{remark}

\begin{figure}[H]
    \centering
    \includegraphics[width=0.8\linewidth]{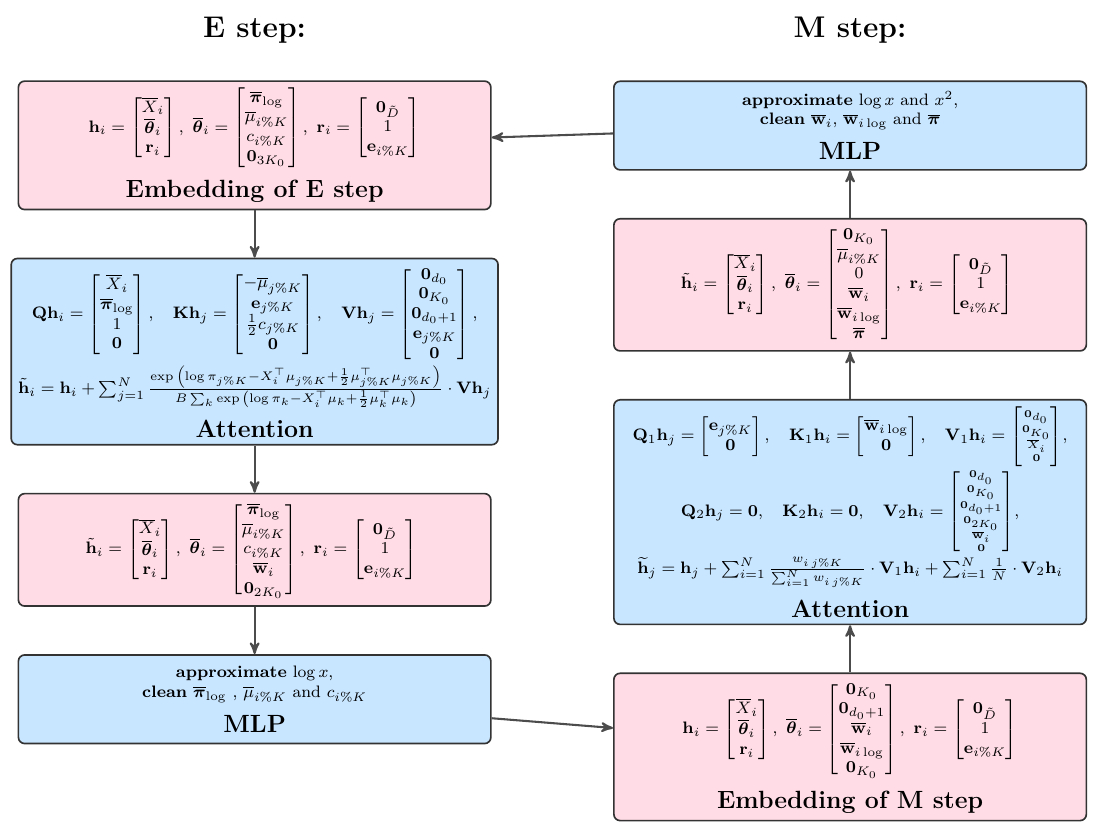}
    \caption{Transformer Construction for Approximating EM Algorithm Iterations.
    The {\it pink box} represents the state of tokens, while the {\it blue box} represents the structure of different parts of the network. The term "{\bf clean}" means setting all positions of the corresponding vector to zero.}
    \label{fig:thm_EM_formal}
\end{figure}

\subsection{Convergence results for EM algorithm}
\label{secapp:pop-EM}
\subsubsection{Convergence results for population-EM algorithm}
First, we review some notations. Recall that $\pi_{min} = \min_i \pi_i^*$, $\rho_\pi = \max_i \pi_i^* / \min_i \pi_i^*$, $R_{ij} = \|\mu_i^* - \mu_j^*\|$, $R_{min} = \min_{i\neq j} R_{ij}$ and $R_{max} = \paren{\max_{i\neq j} R_{ij}} \vee \paren{\max_{i\in[K]}\norm{\mu_i^\ast}} $.
Without the loss of generality, we assume that $R_{max} \geq 1$. 
For clarity, we restate assumption (\ref{ass:A1_copy}), which is consistent with \cite{pmlr-v125-kwon20a}.

\begin{enumerate}[label=(A\arabic*),start=1, leftmargin=*]
    \item \label{ass:A1_copy} Suppose the GMM has parameters $\{(\pi_j^\ast, \mu_j^\ast): j \in [K]\}$ such that
    \begin{equation}
        \label{eq:pop_separation_condition}
        R_{\min} \ge C  \cdot \sqrt{\log(\rho_\pi K)},
    \end{equation}
    and suppose the mean initialization $\mu_1^{(0)}, ..., \mu_K^{(0)}$ satisfies
    \begin{equation}
        \label{eq:pop_initialization_means}
        \forall i \in [K], \norm{\mu_i^{(0)} - \mu_i^\ast}\le \frac{R_{\min}}{16}.
    \end{equation}
    Also, suppose the mixing weights are initialized such that
    \begin{align}
        \label{eq:pop_initialization_weights}
        \forall i \in [K], \left|\pi_i^{(0)} - \pi_i^\ast\right| \le \pi_i / 2.
    \end{align} 
\end{enumerate}

For population-EM, the algorithm can be presented as 
\begin{align*}
    \text{(E-step):} && w_i(X) &= \frac{\pi_i \exp(-\|X - \mu_i\|^2/2 )}{\sum_{j=1}^K \pi_j \exp(-\|X - \mu_j\|^2/2)}, \\
    \text{(M-step):} && \pi_i^+ &= \E[w_i], \quad \mu_i^+ = \E[w_i X] / \E[w_i]. 
\end{align*}

The following results gives linear convergenve guarantees of population-EM, which comes from \cite{pmlr-v125-kwon20a}.

\begin{theorem}[\cite{pmlr-v125-kwon20a}, Theorem 1, part i]
    \label{theorem:gmm_pop_main_1}
    Let $C \ge 64$ be a universal constant for which assumption (\ref{ass:A1_copy}) holds.
    Then, after one-step population-EM update, we have
    \begin{align}
        \label{eq:pop_one-step_result}
        \forall i \in [K], |\pi_i^{+} - \pi_i^*| \le \pi_i^* / 2, ~ \|\mu_i^{+} - \mu_i^*\| \le 1/2.
    \end{align}
\end{theorem}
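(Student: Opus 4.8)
The plan is to read the one-step amplification directly off the softmax form of the E-step weights together with the separation hypothesis \ref{ass:A1_copy}, tracking every error term as a quantity exponentially small in $R_{\min}^2$. Write $X = \mu_Z^\ast + g$ with $Z\sim\bpi^\ast$, $g\sim\gN(\bzero_d,I_d)$ independent; let $\E_g$ denote expectation over $g$ with the component $Z$ held fixed, and set $\delta_j\defeq\mu_j^{(0)}-\mu_j^\ast$, so that $\norm{\delta_j}\le R_{\min}/16$ and $\pi_j^{(0)}\in[\tfrac12\pi_j^\ast,\tfrac32\pi_j^\ast]$. The whole argument reduces to controlling, for $i\ne j$, the ``cross mass'' $\E_g[w_i(\mu_j^\ast+g)]$ and the ``self deficit'' $\eta_i\defeq\E_g[1-w_i(\mu_i^\ast+g)]$, together with their first-moment analogues; these assemble through the identities $\pi_i^+=\E[w_i]=\pi_i^\ast(1-\eta_i)+\sum_{j\ne i}\pi_j^\ast\,\E_g[w_i(\mu_j^\ast+g)]$ and $\mu_i^+-\mu_i^\ast=\E[w_i(X-\mu_i^\ast)]/\E[w_i]$.

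For the masses I would bound $w_i(X)\le(\pi_i^{(0)}/\pi_j^{(0)})\exp\paren{\tfrac12\paren{\norm{X-\mu_j^{(0)}}^2-\norm{X-\mu_i^{(0)}}^2}}$ (and symmetrically for $1-w_i$). At $X=\mu_j^\ast+g$ the exponent equals $\tfrac12\paren{\norm{\delta_j}^2-\norm{\mu_i^\ast-\mu_j^\ast-\delta_i}^2}+\vdot{\mu_i^{(0)}-\mu_j^{(0)}}{g}$, whose deterministic part is $\le -cR_{ij}^2$ (using $\norm{\delta_j}\le R_{ij}/16$) and whose stochastic part $\xi_{ij}$ is a one-dimensional Gaussian with $\operatorname{Var}(\xi_{ij})\le 2R_{ij}^2$; hence $w_i(\mu_j^\ast+g)\le\min\set{1,\ 3\rho_\pi\exp(-cR_{ij}^2+\xi_{ij})}$, and likewise for $1-w_i(\mu_i^\ast+g)$. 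The expectation of $\min\{1,Ae^{\xi}\}$ with $A$ exponentially small in $R_{ij}^2$ and $\operatorname{Var}(\xi)=O(R_{ij}^2)$ is controlled by splitting on $\{\xi\le\tfrac14\log(1/A)\}$ and its complement (the crude bound $Ae^{\xi}$ on the former, the Gaussian tail on the latter), giving $\E_g[w_i(\mu_j^\ast+g)]\le\rho_\pi^{c_0}e^{-c'R_{ij}^2}$ and $\eta_i\le\sum_{j\ne i}\rho_\pi^{c_0}e^{-c'R_{ij}^2}$ for universal $c',c_0>0$. Plugging into the identity for $\pi_i^+$ and using $\sum_j\pi_j^\ast=1$ and $\pi_{\min}\ge 1/(K\rho_\pi)$, the separation $R_{\min}\ge C\sqrt{\log(\rho_\pi K)}$ with $C$ large forces the total error below $\pi_i^\ast/2$.

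The mean update is the delicate part. In the $Z=i$ term, $\E_g[w_i(\mu_i^\ast+g)(X-\mu_i^\ast)]=\E_g[w_i(\mu_i^\ast+g)g]$, and a Cauchy--Schwarz bound would cost a spurious $\sqrt d$; instead I would use Gaussian integration by parts, $\E_g[w_i(\mu_i^\ast+g)g]=\E_g[\nabla_X w_i(\mu_i^\ast+g)]$, together with the softmax-gradient identity $\nabla_X w_i(X)=w_i(X)\paren{\mu_i^{(0)}-\sum_\ell w_\ell(X)\mu_\ell^{(0)}}=-w_i(X)\sum_{\ell\ne i}w_\ell(X)\paren{\mu_\ell^{(0)}-\mu_i^{(0)}}$, which yields the norm bound $\sum_{\ell\ne i}\E_g[w_iw_\ell]\,\norm{\mu_\ell^{(0)}-\mu_i^{(0)}}$. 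Crucially $\E_g[w_\ell(\mu_i^\ast+g)]\le\rho_\pi^{c_0}e^{-c'R_{i\ell}^2}$, so the $\ell$-th summand is at most $C_1 R_{i\ell}e^{-c'R_{i\ell}^2}$, and since $x\mapsto xe^{-c'x^2}$ is decreasing for $x\ge R_{\min}$ this is $\le C_1 R_{\min}e^{-c'R_{\min}^2}$ --- dimension-free and free of $R_{\max}$. The off-diagonal contributions $\pi_j^\ast\E_g[w_i(\mu_j^\ast+g)(X-\mu_i^\ast)]$ split as $(\mu_j^\ast-\mu_i^\ast)\E_g[w_i(\mu_j^\ast+g)]+\E_g[w_i(\mu_j^\ast+g)g]$, the first being at most $C_2 R_{ij}e^{-c'R_{ij}^2}\le C_2 R_{\min}e^{-c'R_{\min}^2}$ and the second handled by the same integration-by-parts and distance-pairing trick. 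Summing over $j$ and dividing by $\E[w_i]\ge\pi_i^\ast/2\ge 1/(2K\rho_\pi)$ gives $\norm{\mu_i^+-\mu_i^\ast}\le\operatorname{poly}(K,\rho_\pi)\,R_{\min}e^{-c'R_{\min}^2}$; because $xe^{-c'x^2}$ decays faster than any power of $x$, this lies below $1/2$ whenever $C$ exceeds a fixed absolute constant, in particular for $C\ge 64$.

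The main obstacle is precisely this dimension-free control of the first moments: the crude ratio bound on $w_i$ blows up like the Gaussian moment generating function $e^{+R_{ij}^2/2}$ and exceeds $1$ on exactly the event where $g$ is large, so one cannot simply move the expectation inside a pointwise bound. The three devices that make it work --- (i) truncating on the one-dimensional projection that enters the softmax ratio, (ii) replacing the crude bound by the softmax-gradient identity to remove the ambient dimension, and (iii) pairing every pairwise distance $R_{i\ell}$ with its matching tail factor $e^{-c'R_{i\ell}^2}$ so that $R_{\max}$ never appears bare --- are what allow the $1/2$ bound to go through under the separation $R_{\min}\ge C\sqrt{\log(\rho_\pi K)}$ alone. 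The residual work is constant-chasing: verifying that $C\ge 64$ suffices simultaneously for the $\pi_i^+$ and $\mu_i^+$ estimates.
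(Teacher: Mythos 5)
The paper does not actually prove this statement: it is imported verbatim as Theorem~1, part~(i) of \cite{pmlr-v125-kwon20a}, so there is no in-paper argument to compare yours against. Judged on its own terms, your blind reconstruction is a sound plan for the imported result. The decomposition of $\pi_i^+$ into the self-deficit $\eta_i$ and the cross masses $\E_g[w_i(\mu_j^\ast+g)]$ is the right skeleton; the pointwise ratio bound on $w_i$, the computation showing the exponent has deterministic part $\le -cR_{ij}^2$ and a one-dimensional Gaussian fluctuation of variance $O(R_{ij}^2)$, and the truncation of $\E\min\{1,Ae^{\xi}\}$ on $\{\xi\le\tfrac14\log(1/A)\}$ are all correct and give the needed $\rho_\pi^{c_0}e^{-c'R_{ij}^2}$ decay. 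You also correctly identify the real difficulty --- dimension-free control of $\E_g[w_i\,g]$ --- and your fix via Stein's identity combined with the softmax-gradient formula $\nabla_X w_i=-w_i\sum_{\ell\ne i}w_\ell(\mu_\ell^{(0)}-\mu_i^{(0)})$ is a clean way to get it (the reference achieves the same dimension-free conclusion by a somewhat different region-splitting argument, so your route is a legitimate alternative). Two places deserve to be spelled out rather than gestured at: (i) for the off-diagonal first-moment terms you need $\E_g[w_iw_\ell]$ at $X=\mu_j^\ast+g$ with $\ell\ne i,j$, which requires something like $\E[w_iw_\ell]\le\sqrt{\E[w_i]\E[w_\ell]}$ together with $R_{i\ell}^2\le 2(R_{ij}^2+R_{j\ell}^2)$ to execute the distance--tail pairing; and (ii) the division by $\E[w_i]\ge\pi_i^\ast/2$ presupposes the $\pi$-bound, so the two halves of \cref{eq:pop_one-step_result} must be established in that order. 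The remaining deferred work is genuinely just constant-chasing to confirm $C\ge 64$ suffices.
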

Now we define 
\begin{equation}
    \label{eq:definte_Dm}
    D_m = \max_{i\in [K]} \left(\|\mu_i - \mu_i^*\|\vee |\pi_i - \pi_i^*| / \pi_i^* \right), \nonumber
\end{equation}
and
\begin{equation}
    \label{eq:definte_Dm+}
    D_m^+ = \max_{i\in [K]} \left(\|\mu_i^+ - \mu_i^*\|\vee |\pi_i^+ - \pi_i^*| / \pi_i^* \right).\nonumber
\end{equation}
The linear convergence of population-EM is stated by the following theorem.

\begin{theorem} [\cite{pmlr-v125-kwon20a}, Theorem 1, part ii]
\label{theorem:gmm_pop_main_2}
     Let $C \ge 128$ be a large enough universal constant. Fix $0< a < 1$. Suppose the separation condition \cref{eq:pop_separation_condition} holds and suppose the initialization parameter satisfies $D_m \leq a$, then $D_m^+ \leq \beta D_m$ for some $0< \beta < 1$.
\end{theorem}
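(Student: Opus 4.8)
This is the population-EM contraction of \cite{pmlr-v125-kwon20a} (Theorem 1, part ii), so the plan is to follow their argument; I describe the route I would take. Write the data law as $X = \mu_Z^\ast + G$ with $\Prob(Z = k) = \pi_k^\ast$ and $G \sim \mathcal{N}(\bzero_d, I_d)$, so that the population updates split according to the latent label: $\pi_i^+ = \sum_{k \in [K]} \pi_k^\ast\, \E\brac{w_i(X) \mid Z = k}$ and $\mu_i^+\,\pi_i^+ = \sum_{k \in [K]} \pi_k^\ast\, \E\brac{w_i(X)\,X \mid Z = k}$. The goal is to show, for every $i \in [K]$, that $\norm{\mu_i^+ - \mu_i^\ast} \le \beta D_m$ and $|\pi_i^+ - \pi_i^\ast| \le \beta \pi_i^\ast D_m$ for a single universal $\beta < 1$, which gives $D_m^+ \le \beta D_m$.

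\emph{Step 1 (near-hard responsibilities).} The crux is that, under $R_{\min} \ge C\sqrt{\log(\rho_\pi K)}$ and $D_m \le a < 1$, the soft responsibilities behave like hard labels: there is a universal $c$ with $\E\brac{1 - w_i(X) \mid Z = i} \le \kappa$ and $\E\brac{w_i(X) \mid Z = k} \le \kappa$ for $k \ne i$, where $\kappa \le (\rho_\pi K)^{-c C^2}$. I would derive this by writing, for $i \ne k$, $\log \frac{w_i(X)}{w_k(X)} = \log\frac{\pi_i}{\pi_k} - \tfrac12\paren{\norm{X - \mu_i}^2 - \norm{X - \mu_k}^2}$, substituting $X = \mu_k^\ast + G$ and replacing current parameters by the (well-separated) truth at the cost of a bounded slack since $\norm{\mu_j - \mu_j^\ast} \le a$ and $|\pi_j/\pi_j^\ast - 1| \le 1/2$; the only randomness left in the comparison is the one-dimensional Gaussian $\langle G, \mu_i^\ast - \mu_k^\ast\rangle$ with standard deviation $R_{ik} \ge R_{\min}$ -- so the whole argument is dimension-free -- and a Gaussian tail bound together with a union bound over the at most $K$ rival components in the softmax denominator yields $\kappa \le c_0\,\rho_\pi K\, e^{-R_{\min}^2 / C'}$, which the separation turns into the stated power of $(\rho_\pi K)^{-1}$. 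One has to split the expectation according to whether $G$ lands in the ``confusing'' halfspace $\set{\langle G, \mu_k^\ast - \mu_i^\ast\rangle \ge \tfrac14 R_{ik}^2}$: outside it $w_i$ is essentially $0$, while inside it one uses only $0 \le w_i \le 1$ and the exponentially small measure of the halfspace, rather than a naive Taylor expansion (which would spuriously produce $O(R_{\max})$ terms).

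\emph{Step 2 (contraction).} Granting Step 1, write $\mu_i^+ - \mu_i^\ast = \E\brac{w_i(X)(X - \mu_i^\ast)} / \E\brac{w_i(X)}$, whose denominator is $\pi_i^\ast\paren{1 + O(\kappa / \pi_{\min})}$. In the numerator the terms with $Z \ne i$ are at most $\sum_{k \ne i} \pi_k^\ast\, \E\brac{w_i(X)\norm{X - \mu_i^\ast} \mid Z = k}$, which by Cauchy--Schwarz and Step 1 is $O\paren{\kappa(R_{\max} + \sqrt d)}$. The $Z = i$ term is $\pi_i^\ast\,\E\brac{w_i(\mu_i^\ast + G)\,G}$, which vanishes by the symmetry $G \mapsto -G$ when all current parameters equal the truth; perturbing $\mu_j$ to $\mu_j^\ast + \delta_j$ and $\pi_j$ by $\epsilon_j \pi_j^\ast$ makes it linear in $(\delta_j, \epsilon_j)$ with coefficients controlled by $\E\brac{w_i(1 - w_i)\norm{X - \mu_i}\norm{G} \mid Z = i}$ and the analogous $\pi$-derivative, each suppressed by $\kappa^{1/2}$ (since $w_i(1 - w_i) \le 1 - w_i$ and Step 1 bounds its mean), up to mild polynomial prefactors. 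Collecting, $\norm{\mu_i^+ - \mu_i^\ast} \le C''\kappa^{1/2} D_m + O\paren{\kappa(R_{\max} + \sqrt d)}$; running the same scheme on $\pi_i^+ - \pi_i^\ast = \pi_i^\ast(\E\brac{w_i \mid Z = i} - 1) + \sum_{k \ne i}\pi_k^\ast \E\brac{w_i \mid Z = k}$ and dividing by $\pi_i^\ast$ gives the matching bound. Taking $C$ large enough forces $C''\kappa^{1/2} \le 1/2$ and the additive term $\le D_m / 4$ (it is bounded by a fixed negative power of $\rho_\pi K$ that does not involve $D_m$), so $D_m^+ \le \beta D_m$ with $\beta = 3/4$.

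\emph{Main obstacle.} The genuinely delicate part is Step 1 carried out \emph{uniformly} over all ordered pairs $(i, k)$ and over the whole admissible region $\set{D_m \le a}$: one must obtain the exponential suppression in $R_{\min}^2$ without losing a factor of $K$ inside the exponent (only in the prefactor), and one must treat the saturation of $w_i$ in the confusing halfspace honestly rather than by linearization, since the naive expansion produces terms of order $R_{\max}$ that would destroy the contraction. Tracking how the weight imbalance propagates is exactly what forces the separation to scale with $\sqrt{\log(\rho_\pi K)}$ rather than $\sqrt{\log K}$, and pinning the constants down so that $C \ge 128$ already suffices is the main bookkeeping; all of this is handled carefully in \cite{pmlr-v125-kwon20a}, and I would reuse their decomposition.
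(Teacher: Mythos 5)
The paper does not actually prove this statement---it is imported verbatim from Kwon and Caramanis \cite{pmlr-v125-kwon20a} (their Theorem~1, part~ii), with only a remark pointing to their Appendix~E for how $\beta$ depends on $C$ and $a$. Your sketch is a faithful outline of the cited argument (near-hard responsibilities under the separation condition, reduction to a one-dimensional Gaussian along $\mu_i^\ast-\mu_k^\ast$, then contraction of the label-decomposed update), so there is nothing in the paper to compare it against. One small imprecision worth noting: the $Z=i$ term $\pi_i^\ast\,\E\brac{w_i(\mu_i^\ast+G)\,G}$ does not vanish by the symmetry $G\mapsto -G$, since with $X=\mu_i^\ast+G$ the responsibility $w_i$ depends on the signed projections $\langle G,\mu_j^\ast-\mu_i^\ast\rangle$ and is not an even function of $G$; what is true is that the \emph{full} expectation $\E\brac{w_i(X)(X-\mu_i^\ast)}$ vanishes at the truth by the tower property, because there $w_i(X)=\Prob(Z=i\mid X)$. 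This does not affect the viability of your overall plan.
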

\begin{remark}
\label{rmk:gmm_pop_main_2}
    Here the contraction parameter $\beta$ is only dependent with $C$ and $a$. In other words, if we fix $a \in (0,1)$, then for any $\beta \in (0,1)$, there exists a large enough $C$ such that \cref{theorem:gmm_pop_main_2} holds. For details, see Appendix E in \cite{pmlr-v125-kwon20a}.
\end{remark}

Combing \cref{theorem:gmm_pop_main_1} and \cref{theorem:gmm_pop_main_2}, we can get the linear convergence of population-EM algorithm.

\subsubsection{Convergence results for empirical-EM algorithm}
Now we consider the empirical-EM, i.e., \cref{alg:EM1}. For convenience, the algorithm can be presented as  
\begin{align*}
    \text{(E-step):}   && w_i(X_\ell) &= w_{\ell i} =  \frac{\pi_i \exp(-\|X_\ell - \mu_i\|^2/2 )}{\sum_{j=1}^K \pi_j \exp(-\|X_\ell - \mu_j\|^2/2)} \\
    \text{(M-step):}   &&\pi_i^+ &= \frac{1}{n} \sum_{l=1}^{n} w_i(X_\ell), ~ \mu_i^+ = \frac{\sum_{l=1}^{n} w_i(X_\ell) X_\ell}{\sum_{l=1}^{n} w_i(X_\ell)} = \frac{1}{n\pi_i^+}\sum_{l=1}^{n} w_i(X_\ell) X_\ell.
\end{align*}
Similarly, we can define $D_m$ and $D_m^+$ in empirical sense.

For the linear convergence of empirical-EM, we have the following theorem.
\begin{theorem}
\label{theorem:gmm_empi_main_2}   
Fix $0< \delta, \beta < 1$ and $0<a<1$. Let $C \ge 128$ be a large enough universal constant. Suppose the separation condition \cref{eq:pop_separation_condition} holds and suppose the initialization parameter satisfies $D_m \leq a$. If $n$ is suffcient large such that
\begin{align*}
    \varepsilon_{unif} := \frac{\tilde{c}}{\left(1-a\right)\pi_{min}} \sqrt{\frac{Kd\log(\frac{\tilde{C}n}{\delta})}{n}} < a(1-\beta) 
\end{align*}
where $\tilde{C} =72K^2(\sqrt{d} + 2R_{max} + \frac{1}{1-a})^2$ and $\tilde{c}$ is a universal constant. Then 
$$
D_m^+ \leq \beta D_m + \varepsilon_{unif} \leq a
$$ 
uniformly holds with probability at least $1-\delta$.
\end{theorem}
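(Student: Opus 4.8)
The plan is to combine the population-level contraction of \cref{theorem:gmm_pop_main_2} with a uniform concentration bound that controls the discrepancy between one empirical-EM step and one population-EM step, both evaluated at the \emph{same} current parameter. Fix a current parameter $\btheta = \sets{\pi_i, \mu_i}_{i\in[K]}$ with $D_m(\btheta) \le a$; write $\sets{\pi_i^+, \mu_i^+}$ for its empirical-EM update (\cref{alg:EM1}) and $\sets{\bar\pi_i^+, \bar\mu_i^+}$ for its population-EM update. For each $i\in[K]$ I would split
\begin{align*}
\norm{\mu_i^+ - \mu_i^\ast} &\le \norm{\bar\mu_i^+ - \mu_i^\ast} + \norm{\mu_i^+ - \bar\mu_i^+}, \\
\left|\pi_i^+ - \pi_i^\ast\right| &\le \left|\bar\pi_i^+ - \pi_i^\ast\right| + \left|\pi_i^+ - \bar\pi_i^+\right|,
\end{align*}
so that $D_m^+ \le \bar D_m^+ + \Delta$, where $\bar D_m^+ \le \beta D_m$ by \cref{theorem:gmm_pop_main_2} (whose separation hypothesis is exactly \cref{eq:pop_separation_condition}, and whose applicability uses $D_m\le a$), and $\Delta$ collects the empirical-vs-population fluctuation terms over all components. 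The theorem then reduces to showing $\Delta \le \varepsilon_{unif}$ with probability at least $1-\delta$, uniformly over $\sets{\btheta : D_m(\btheta) \le a}$: the bound $D_m^+ \le \beta D_m + \varepsilon_{unif}$ is immediate, and $D_m^+ \le \beta a + \varepsilon_{unif} \le a$ follows from $\varepsilon_{unif} < a(1-\beta)$. It is this last self-improving property that makes the bound iterable along the empirical-EM trajectory.

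The core of the argument is the uniform concentration statement. The relevant empirical quantities are the weighted averages
\begin{align*}
\widehat A_i(\btheta) = \frac1n\sum_{\ell=1}^n w_i(X_\ell;\btheta), \qquad \widehat B_i(\btheta) = \frac1n\sum_{\ell=1}^n w_i(X_\ell;\btheta)\, X_\ell,
\end{align*}
with population counterparts $\E[w_i]$ and $\E[w_i X]$, and $\pi_i^+ = \widehat A_i$, $\mu_i^+ = \widehat B_i / \widehat A_i$. I would proceed in three steps. First, since $w_i \in [0,1]$, $\widehat A_i$ concentrates at the $n^{-1/2}$ rate by Hoeffding, and its fluctuation is controlled uniformly in $\btheta$ by a covering argument, using that $\btheta \mapsto w_i(X;\btheta)$ is Lipschitz with constant polynomial in $\norm{X}$ and the separation parameters. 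Second, $\widehat B_i$ is handled by truncating each $X_\ell$ at radius of order $R_{\max} + \sqrt{d\log(n/\delta)}$: the truncated part is bounded by a vector Bernstein / chaining estimate over the same net, while the tail is negligible by Gaussian concentration; this step is the source of the factor $\sqrt{d}$, the logarithmic factor $\log(\tilde{C} n/\delta)$, and the specific form of $\tilde{C}$, which records the covering radius together with the ambient scale $\sqrt{d} + 2R_{\max} + \tfrac{1}{1-a}$. Third, the bounds on $\widehat A_i$ and $\widehat B_i$ are combined through the quotient $\mu_i^+ - \bar\mu_i^+ = \widehat B_i/\widehat A_i - \E[w_iX]/\E[w_i]$; this requires a lower bound on $\widehat A_i = \pi_i^+$ by a constant multiple of $\pi_{\min}$, which holds because the population value $\bar\pi_i^+ \ge \pi_i^\ast/2$ by \cref{theorem:gmm_pop_main_1} while the fluctuation is small, and it explains the prefactor $\frac{1}{(1-a)\pi_{\min}}$ in $\varepsilon_{unif}$.

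Putting the three steps together, and taking a union bound over the $K$ components (the factor $K$ inside $\varepsilon_{unif}$), gives $\Delta \le \frac{\tilde{c}}{(1-a)\pi_{\min}}\sqrt{Kd\log(\tilde{C} n/\delta)/n} = \varepsilon_{unif}$ on an event of probability at least $1-\delta$; combined with the decomposition and \cref{theorem:gmm_pop_main_2} this yields $D_m^+ \le \beta D_m + \varepsilon_{unif} \le a$, as claimed. The main obstacle is the second step: obtaining a uniform-in-$\btheta$ bound on the \emph{unbounded}, $X$-weighted sums $\widehat B_i(\btheta)$ with only polynomial dependence on $d$ and $K$, which forces the truncation-plus-chaining analysis and careful bookkeeping of the Lipschitz constants of $w_i(\cdot;\btheta)$ (the weights depend on $\btheta$ through Gaussian exponents, so their gradients involve $\norm{X}$ and the current means, and must be controlled uniformly over the region $D_m\le a$). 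The remaining steps are comparatively routine concentration and algebra.
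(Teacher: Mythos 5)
Your overall strategy is the same as the paper's: decompose each coordinate of $D_m^+$ into a population-EM term, controlled by the contraction in \cref{theorem:gmm_pop_main_2}, plus an empirical-vs-population fluctuation term, controlled uniformly over $\sets{\btheta : D_m \le a}$ by a covering/chaining argument, and then close the loop using $\varepsilon_{unif} < a(1-\beta)$. The place where you diverge is the mean update. The paper writes $\mu_i^+ - \mu_i^\ast = \frac{1}{n\pi_i^+}\sum_\ell w_i(X_\ell)(X_\ell - \mu_i^\ast)$, i.e., it centers the weighted sum at $\mu_i^\ast$ \emph{before} concentrating; the summands $w_i(X)(X-\mu_i^\ast)$ are then sub-Gaussian with an $O(R_{\max})$-type $\psi_2$ norm, so the generic uniform-convergence lemma (\cref{uniform_convergence_lemma}, applied in \cref{lem::uniform bound of mu}) gives the $\sqrt{Kd\log(\tilde C n/\delta)/n}$ rate directly, with no truncation. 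The division by $\pi_i^+$ is handled by the crude bound $\pi_i^+ \ge (1-a)\pi_i^\ast$, which follows from the $\pi$-part of the argument already established, and the resulting $(1+a)/(1-a)$ inflation of the population term is absorbed by running \cref{theorem:gmm_pop_main_2} with a smaller contraction factor $\tilde\beta = \frac{1-a}{1+a}\beta$. You instead concentrate the un-centered sum $\widehat B_i$ and analyze the quotient difference $\widehat B_i/\widehat A_i - \E[w_iX]/\E[w_i]$ via truncation at radius $R_{\max}+\sqrt{d\log(n/\delta)}$ plus chaining. This also works and has the small advantage that your population term is exactly $\bar\mu_i^+ - \mu_i^\ast$, so you get $\beta D_m$ without the $(1+a)/(1-a)$ bookkeeping; the price is a heavier concentration step (truncation, tail control, and quotient algebra) where the paper's centering makes the standard sub-Gaussian machinery apply off the shelf.

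One imprecision worth fixing: you justify the lower bound on $\widehat A_i = \pi_i^+$ by appealing to \cref{theorem:gmm_pop_main_1}, but that theorem concerns the first population-EM step from an initialization satisfying \ref{ass:A1_copy}, not an arbitrary iterate in the region $D_m \le a$. The correct justification here is the one the paper uses: first establish $|\pi_i^+ - \pi_i^\ast|/\pi_i^\ast \le a$ from the bounded-weights concentration plus the population contraction, and then read off $\pi_i^+ \ge (1-a)\pi_i^\ast \ge (1-a)\pi_{\min}$. This is a local repair, not a structural gap.
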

\begin{proof}
    First, we have
    \begin{align*}
        \frac{|\pi_i^+ - \pi_i^*|}{\pi_i^*} &= \frac{1}{\pi_i^*}\left|\frac{1}{n} \sum_{l=1}^{n} w_i(X_\ell) - \pi_i^*\right|\\
        &\leq \frac{1}{\pi_i^*}\left(\left|\frac{1}{n} \sum_{l=1}^{n} w_i(X_\ell) - \E\left[w_i(X)\right] \right| + \left|\E\left[w_i(X)\right] - \pi_i^*\right|\right)\\
        &:= (I) + (II).
    \end{align*}
    By \cref{theorem:gmm_pop_main_2}, we get
    \begin{align*}
        (II) = \frac{1}{\pi_i^*} \left|\E\left[w_i(X)\right] - \pi_i^*\right| \leq \tilde{\beta} D_m.
    \end{align*}
    And by \cref{lem::unifrom bound of pi}, we have
    \begin{align*}
        (I) =  \frac{1}{\pi_i^*} \left|\frac{1}{n} \sum_{l=1}^{n} w_i(X_\ell) - \E\left[w_i(X)\right] \right| \leq \frac{\tilde{c}_1}{\pi_{min}} \sqrt{\frac{Kd\log(\frac{\tilde{C}_1 n}{\delta_1})}{n}},
    \end{align*}
    where $\tilde{C}_1 =18K^2(\sqrt{d} + 2R_{max} + \frac{1}{1-a})$ and $\tilde{c}_1$ is a suitable universal constant. Thus, by taking $\tilde{\beta} =  \beta$. $\delta_1 = \delta/2$ and suitable $\tilde{c}$, ${|\pi_i^+ - \pi_i^*|}/{\pi_i^*}\leq \beta D_m + \varepsilon_{unif} \leq a$, $\forall i \in [K]$.
    
    For the second term, we have
    \begin{align*}
        \|\mu_i^+ - \mu_i^*\| &=\left\|\frac{1}{n\pi_i^+}\sum_{l=1}^{n} w_i(X_\ell) (X_\ell - \mu_i^{\ast}) \right\| \\
        &\leq\frac{1}{\pi_i^+}\paren{\left\|\frac{1}{n}\sum_{l=1}^{n} w_i(X_\ell) (X_\ell - \mu_i^{\ast}) - \E\brac{w_i(X)(X - \mu_i^{\ast})} \right\|+\left\|\E\brac{w_i(X)(X - \mu_i^{\ast})} \right\|}\\
        &:=(III) + (IV),
    \end{align*}
     By \cref{theorem:gmm_pop_main_2} and \cref{rmk:gmm_pop_main_2} we get,
    \begin{align*}
        (IV) &= \frac{1}{\pi_i^+} \left\|\E\brac{w_i(X)(X - \mu_i^{\ast})} \right\|\\
        &\overset{(i)}{\leq}\frac{1}{(1-a)\pi_i^{\ast}} \left\|\E\brac{w_i(X)(X - \mu_i^{\ast})} \right\|\\
        &=\frac{\E\brac{w_i(X)}}{(1-a)\pi_i^{\ast}} \left\|\frac{\E\brac{w_i(X)X}}{\E\brac{w_i(X)}} - \mu_i^{\ast}\right\|\\
        &\leq \frac{1+a}{1-a} \tilde{\beta} D_m.
    \end{align*}
    where $(i)$ follows from ${|\pi_i^+ - \pi_i^*|}/{\pi_i^*} \leq a$.
    And by \cref{lem::uniform bound of mu}, we have
    \begin{align*}
        (III) &= \frac{1}{\pi_i^{+}}\paren{\left\|\frac{1}{n}\sum_{l=1}^{n} w_i(X_\ell) (X_\ell - \mu_i^{\ast}) - \E\brac{w_i(X)(X - \mu_i^{\ast})} \right\|}\\
        &\leq \frac{1}{(1-a)\pi_i^{\ast}}\paren{\left\|\frac{1}{n}\sum_{l=1}^{n} w_i(X_\ell) (X_\ell - \mu_i^{\ast}) - \E\brac{w_i(X)(X - \mu_i^{\ast})} \right\|}\\
        &\leq\frac{\tilde{c}_2}{(1-a)\pi_{min}} \sqrt{\frac{Kd\log(\frac{\tilde{C}_2 n}{\delta_2})}{n}},
    \end{align*}
    where $\tilde{C} =18K^2(\sqrt{d} + 2R_{max} + \frac{1}{1-a})^2$ and $\tilde{c}$ is a suitable universal constant.
    Thus, by taking $\tilde{\beta} = (1-a)/(1+a) \beta$, $\delta_2 = \delta/2$ and suitable $\tilde{c}$, $\|\mu_i^+ - \mu_i^*\| \leq \beta D_m + \varepsilon_{unif} \leq a$, $\forall i \in [K]$.

    In conclusion, if we take $\tilde{\beta} = (1-a)/(1+a) \beta$, $\tilde{c} = \tilde{c}_1 \vee \tilde{c}_2$, $C = C(\beta, a) \geq 128$ large enough such that \cref{theorem:gmm_pop_main_2} holds, and take $\delta_1 = \delta_2 = \delta /2$ and use union bound argument, then we get $D_m^+ \leq \beta D_m + \varepsilon_{unif} \leq a$.

\end{proof}

We need the following technical lemma.
\begin{lemma}[\cite{10.1214/21-EJS1905}, Lemma B.2.]
    \label{uniform_convergence_lemma}
    Fix $0<\delta<1$. Let $B_{1},\ldots,B_{K}\subset\mathbb{R}^{d}$ be Euclidean balls of radii $r_{1},\ldots,r_{K}$. Define $\mathcal{B}=\otimes_{k=1}^{K}B_{k} \subset \mathbb{R}^{Kd}$ and $r=\max_{k\in[K]}r_{k}$. Let $X$ be a random vector in $\mathbb{R}^d$ and $W:\mathbb{R}^{d}\times{\cal B}\to \mathbb{R}^k$ where $k\leq d$. 
    Assume the following  hold: 
	
	1. There exists a constant $L\ge 1$ such that for any $\theta\in\mathcal{B},\varepsilon>0$, and $\theta^{\varepsilon}\in \mathcal B$
	which satisfies $\max_{i\in[K]}\|\theta_{i}-\theta_{i}^{\varepsilon}\|\le\varepsilon$,
	then 
	$
	\mathbb{E}_X\left[\sup_{\mu \in\mathcal B}\|W(X,\theta)-W(X,\theta^{\varepsilon})\|\right]\le L\varepsilon
	$.

	2. There exists a constant $R$ such that for any $\theta\in {\cal{B}}$, $\|W(X,\theta)\|_{\psi_2}\le R$.
	
	Let $X_{1},\ldots,X_{n}$ be i.i.d.
	random vectors with the same distribution as $X$. Then there exists a universal constant $\tilde{c}$ such that with probability at least $1-\delta$,
	\begin{equation}\label{etaeq}
	\sup_{\theta\in\mathcal{B}}\left\Vert \frac{1}{n}\sum_{\ell=1}^{n}W\left(X_{\ell},\theta\right)-\mathbb{E}\left[W\left(X,\theta\right)\right]\right\Vert \le R\sqrt{\tilde{c}\frac{Kd\log\left(1+\frac{12nLr}{\delta}\right)}{n}}.
	\end{equation}
\end{lemma}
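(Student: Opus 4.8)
The plan is to prove this by the standard covering (\(\varepsilon\)-net) argument over the parameter space \(\mathcal B\), combining a pointwise sub-Gaussian concentration bound, a union bound over a suitably fine net, and a discretization step that invokes hypothesis~1. Write \(S_n(\theta):=\frac{1}{n}\sum_{\ell=1}^n W(X_\ell,\theta)-\E[W(X,\theta)]\in\R^k\), so the target is an upper bound on \(\sup_{\theta\in\mathcal B}\norm{S_n(\theta)}\).

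\textbf{Step 1 (pointwise concentration).} Fix \(\theta\in\mathcal B\). By hypothesis~2 the i.i.d.\ centered summands \(W(X_\ell,\theta)-\E[W(X,\theta)]\) are sub-Gaussian vectors in \(\R^k\) with \(\psi_2\)-norm at most \(C_0 R\); hence for every unit \(u\in\R^k\) the scalar \(\langle u,S_n(\theta)\rangle\) is sub-Gaussian with parameter at most \(C_0 R/\sqrt n\). Taking a \(\frac12\)-net of the unit sphere of \(\R^k\) (of cardinality at most \(5^k\)) upgrades this to a deviation bound for the Euclidean norm, namely \(\Prob(\norm{S_n(\theta)}\ge s)\le 5^{k}\exp(-c\,ns^2/R^2)\) for universal constants \(C_0,c\).

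\textbf{Step 2 (net, union bound, discretization).} Each \(B_k\) is a ball of radius \(r_k\le r\) in \(\R^d\), hence admits an \(\varepsilon\)-net of size at most \((1+2r/\varepsilon)^d\); the product \(\mathcal B\) then admits a net \(\mathcal N\), in the metric \(\theta\mapsto\max_k\norm{\theta_k-\theta_k'}\), with \(|\mathcal N|\le(1+2r/\varepsilon)^{Kd}\). I would take \(\varepsilon:=\delta/(6nL)\), so that \(1+2r/\varepsilon=1+12nLr/\delta\), which is exactly the logarithmic factor appearing in the claim. A union bound over \(\mathcal N\) using Step~1 gives
\[
\Prob\Big(\sup_{\theta\in\mathcal N}\norm{S_n(\theta)}\ge s\Big)\le\exp\Big(Kd\log(1+12nLr/\delta)+C_0'k-c\,ns^2/R^2\Big),
\]
so choosing \(s=R\sqrt{\tilde c\,Kd\log(1+12nLr/\delta)/n}\) with \(\tilde c\) a large enough universal constant makes the right-hand side at most \(\delta/2\) (the term \(C_0'k\le C_0'Kd\) and \(\log(1/\delta)\) being absorbed into the main term since \(k\le d\)). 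For general \(\theta\in\mathcal B\), letting \(\theta^\varepsilon\in\mathcal N\) be a nearest net point and using the triangle inequality,
\[
\norm{S_n(\theta)-S_n(\theta^\varepsilon)}\le\frac{1}{n}\sum_{\ell=1}^n\norm{W(X_\ell,\theta)-W(X_\ell,\theta^\varepsilon)}+\norm{\E[W(X,\theta)]-\E[W(X,\theta^\varepsilon)]},
\]
where the second summand is at most \(L\varepsilon\) by hypothesis~1, and the first is a nonnegative empirical average whose expectation is at most \(L\varepsilon\), which by a Markov-type argument controls \(\sup_\theta\) of it with probability at least \(1-\delta/2\); with \(\varepsilon=\delta/(6nL)\) both contributions are \(O(L\varepsilon/\delta)=O(1/n)\), dominated by \(s\) after enlarging \(\tilde c\). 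Combining with a final union bound over the two failure events yields the stated high-probability bound.

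\textbf{Expected main obstacle.} The delicate point is the uniform control of the discretization error. Hypothesis~1 supplies only a bound on the \emph{expected} oscillation of \(W\) under a small perturbation, not a uniform or high-probability one; converting it into a uniform (over \(\theta\in\mathcal B\)) bound on \(\frac1n\sum_\ell\norm{W(X_\ell,\theta)-W(X_\ell,\theta^\varepsilon)}\) is what forces the net to be taken as fine as \(\varepsilon=\Theta(\delta/(nL))\), and it is precisely this choice that inserts \(nLr/\delta\) — rather than merely \(1/\delta\) — inside the logarithm. Making this step rigorous, and checking that the resulting extra terms together with the \(5^k\) sphere-net factor and the \(\log(1/\delta)\) deviation term are genuinely absorbed into \(\tilde c\,Kd\log(1+12nLr/\delta)\), is the part that needs care; the remaining ingredients are routine sub-Gaussian estimates.
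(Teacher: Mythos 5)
The paper does not actually prove this lemma: it is imported (up to replacing $\frac{18nLr}{\delta}$ by $1+\frac{12nLr}{\delta}$ inside the logarithm, as noted in the accompanying remark) from Lemma B.2 of the cited reference, so there is no in-paper argument to compare against. Your covering-number proof is the standard route and is, structurally, the proof behind the cited result: pointwise sub-Gaussian concentration of $\norm{S_n(\theta)}$ via a $\tfrac12$-net of the unit sphere of $\R^k$, a union bound over an $\varepsilon$-net of $\mathcal B$ of cardinality $(1+2r/\varepsilon)^{Kd}$, and a Markov-inequality control of the discretization error using hypothesis~1; your choice $\varepsilon=\delta/(6nL)$ reproduces exactly the factor $1+12nLr/\delta$ that the authors substituted in order to drop the original requirement $r_k\ge 1$. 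Two caveats, both of which you partially anticipate in your ``main obstacle'' paragraph. First, the absorption of the additive $O(1/n)$ discretization error and of the $5^k\exp(\log(2/\delta))$ factors into $R\sqrt{\tilde c\,Kd\log(1+12nLr/\delta)/n}$ genuinely fails in degenerate regimes: if $12nLr/\delta$ is small the logarithm is near zero while the left-hand side is not, and if $R\ll 1/\sqrt{n}$ the $O(1/n)$ term cannot be dominated merely by enlarging $\tilde c$. So the inequality with the modified logarithm must be read under the implicit non-degeneracy $nLr/\delta\gtrsim 1$ and $R\gtrsim n^{-1/2}$, which hold in the paper's application (there $r$, $R$ are of constant order and $L$ grows with $n$); under that reading your argument closes. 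Second, for your Markov step to bound $\E\bigl[\sup_{\theta\in\mathcal B}\frac1n\sum_{\ell}\norm{W(X_\ell,\theta)-W(X_\ell,\theta^{\varepsilon})}\bigr]$ by $L\varepsilon$, the supremum inside $\E_X$ in hypothesis~1 must be interpreted as ranging over all admissible pairs $(\theta,\theta^{\varepsilon})$ (the printed $\sup_{\mu\in\mathcal B}$ is evidently a typo); with that interpretation the step is exactly right.
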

\begin{remark}
    There is one difference between \cref{uniform_convergence_lemma} and LemmaB.2. in \cite{10.1214/21-EJS1905}: in \cref{uniform_convergence_lemma}, we use $1+\frac{12nLr}{\delta}$ to replace $\frac{18nLr}{\delta}$, thus we avoid the condition $r_1,\cdots,r_K \geq 1$.
\end{remark}
Hence we can get the uniform convergence of $w_i(X,\theta)$ and $w_i(X,\theta)(X-\mu_i^*)$, $i \in [K]$. Our proof is similar to \cite{10.1214/21-EJS1905}, except that we consider the variation of both $\pi$ and $\mu$. From now on, we denote $\theta_i = \{\pi_i, \mu_i\}$, $\theta = \{\theta_i\}_{i=1}^{n}$.
\begin{lemma}
\label{lem::unifrom bound of pi}
    Fix $0 < \delta <1$ and $0< a <1$. Consider the parameter region $\gD_a := \{D_m \leq a\}$. Let $X_1,\cdots, X_n \overset{\text{i.i.d.}}{\sim}  \text{GMM}(\pi^*, \mu^*)$, then with probability at least $1-\delta$,
    \begin{align}
        \sup_{\theta \in \gD_a} \left|\frac{1}{n}\sum_{\ell=1}^n w_i(X_\ell,\theta)- \E[w_i(X,\theta)] \right| \leq \tilde{c} \sqrt{\frac{Kd\log(\frac{\tilde{C}n}{\delta})}{n}},~\forall i\in [K],
    \end{align}
    where $\tilde{C} =18K^2(\sqrt{d} + 2R_{max} + \frac{1}{1-a})$ and $\tilde{c}$ is a suitable universal constant.
\end{lemma}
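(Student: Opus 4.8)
The plan is to deduce this uniform concentration bound from the generic empirical‑process lemma \cref{uniform_convergence_lemma}, instantiated with the scalar‑valued map $W(X,\theta) = w_i(X,\theta)$ (so that $k = 1 \le d$), with the $K$ parameter blocks taken to be $\theta_j = (\pi_j, \mu_j)$, and with $\mathcal{B} \supseteq \gD_a$ a product of Euclidean balls whose $j$-th factor is centred at $(\pi_j^\ast, \mu_j^\ast)$. On $\gD_a$ one has $|\pi_j - \pi_j^\ast| \le a\pi_j^\ast \le a$ and $\|\mu_j - \mu_j^\ast\| \le a$, so each ball may be taken of radius $\le 2$, whence the radius parameter is $r = O(1)$, and the per‑block ambient dimension $d+1 \le 2d$ only costs universal constants. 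It then suffices to verify the two hypotheses of \cref{uniform_convergence_lemma} on $\gD_a$.

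The sub‑Gaussian hypothesis is immediate: $w_i(X,\theta) \in [0,1]$ deterministically, so $\|w_i(X,\theta)\|_{\psi_2} \le R$ for a universal constant $R$, uniformly in $\theta$. The expected‑Lipschitz hypothesis is the substantive step. Writing $S = \sum_k \pi_k \exp(-\|X - \mu_k\|^2/2)$ and differentiating the softmax weight $w_i = \pi_i \exp(-\|X - \mu_i\|^2/2)/S$ yields the exact identities $\nabla_{\mu_j} w_i = w_i\bigl[\delta_{ij}(X - \mu_i) - w_j(X - \mu_j)\bigr]$ and $\partial_{\pi_j} w_i = w_i\delta_{ij}/\pi_i - w_i w_j/\pi_j$. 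On $\gD_a$ we have $\pi_j \ge (1-a)\pi_{min}$ and $\|\mu_j - \mu_j^\ast\| \le a \le 1$, so, using $\sum_j w_j = 1$, the sum over blocks of the gradient norms is at most $\|X - \mu_i\| + \sum_{j} \|X - \mu_j\| + 2/((1-a)\pi_{min})$. Integrating this bound along the segment joining $\theta$ to $\theta^\varepsilon$ (whose blocks differ by at most $\varepsilon$) controls $|w_i(X,\theta) - w_i(X,\theta^\varepsilon)|$ by $\varepsilon$ times that quantity; taking $\E_X$ under the data law $X = \mu_y^\ast + Z$ with $\|\mu_y^\ast\| \le R_{max}$ and $Z \sim \gN(0, I_d)$, so that $\E_X\|X\| \le R_{max} + \sqrt{d}$, yields a Lipschitz‑in‑expectation constant $L = O\bigl(K(\sqrt{d} + R_{max} + ((1-a)\pi_{min})^{-1})\bigr)$, which may be taken $\ge 1$; note that $L$ enters the conclusion of \cref{uniform_convergence_lemma} only inside a logarithm.

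Feeding $R = O(1)$, $r = O(1)$ and this $L$ into \cref{uniform_convergence_lemma}, and then taking a union bound over $i \in [K]$ (which replaces $\delta$ by $\delta/K$), gives with probability at least $1 - \delta$ that $\sup_{\theta \in \gD_a}\bigl|\frac1n \sum_{\ell=1}^n w_i(X_\ell,\theta) - \E[w_i(X,\theta)]\bigr| \le \tilde c\sqrt{Kd\log(1 + C_0 n L K/\delta)/n}$ for every $i \in [K]$. Since $r = O(1)$ and $C_0 L K$ is polynomial in $(K, d, R_{max}, (1-a)^{-1}, \pi_{min}^{-1})$, the logarithm is at most $\log(\tilde C n/\delta)$ with $\tilde C = 18K^2(\sqrt{d} + 2R_{max} + 1/(1-a))$ after absorbing numerical constants (and a harmless $\pi_{min}$‑dependent factor inside the logarithm), which is exactly the asserted bound; the two powers of $K$ in $\tilde C$ originate from the factor $K$ in $L$ and from the union bound over the $K$ coordinates.

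The verification of the expected‑Lipschitz hypothesis is the step I expect to be the main obstacle, and it is where the analysis genuinely departs from the mean‑only treatment of \cite{10.1214/21-EJS1905}: because $\gD_a$ also permits perturbations of the mixing weights, one must additionally bound $\partial_{\pi_j} w_i$, which is singular as $\pi_j \to 0$; the lower bound $\pi_j \ge (1-a)\pi_{min}$ available on $\gD_a$ is precisely what keeps $L$ finite and polynomial, and the passage from a pointwise gradient bound to the required estimate $\E_X\|W(X,\theta) - W(X,\theta^\varepsilon)\| \le L\varepsilon$ must be combined with a sub‑Gaussian tail bound for $\|X\|$ under the GMM law. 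The companion bound for $w_i(X,\theta)(X - \mu_i^\ast)$ used subsequently follows the same template, the only new ingredient being that the deterministic bound $w_i \le 1$ lets one reduce the sub‑Gaussian norm of the product to that of $X - \mu_i^\ast$.
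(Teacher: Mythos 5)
Your route is the one the paper intends: the paper omits this proof entirely, deferring to Lemma 5.1 of \cite{10.1214/21-EJS1905}, and that argument is exactly an application of \cref{uniform_convergence_lemma} with $W(X,\theta)=w_i(X,\theta)$, the trivial sub-Gaussian bound $w_i\in[0,1]$, a gradient computation to verify the expected-Lipschitz hypothesis, and a union bound over $i\in[K]$. Your softmax derivative identities and the lower bound $\pi_j\ge(1-a)\pi_{\min}$ on $\gD_a$ are all correct, so the proof is sound in substance.

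The one point I would not let pass is the claim that the $\pi_{\min}$-dependence of your Lipschitz constant $L$ is ``a harmless factor absorbed inside the logarithm.'' The stated $\tilde C=18K^2(\sqrt d+2R_{max}+\tfrac{1}{1-a})$ has no $\pi_{\min}$ in it, and since $\pi_{\min}$ can be arbitrarily small, a term $\log(nL/\delta)$ with $L\gtrsim((1-a)\pi_{\min})^{-1}$ is genuinely larger than $\log(\tilde Cn/\delta)$; what you prove is the lemma with a $\pi_{\min}$-dependent constant inside the log, which is qualitatively the same rate but not the stated constant. The clean fix is to parametrize the mixing weights in the relative coordinates implicit in the definition of $D_m$, i.e.\ $u_j=\pi_j/\pi_j^\ast$ with $|u_j-1|\le a$ on $\gD_a$. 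Then $\partial_{u_j}w_i=\pi_j^\ast\,\partial_{\pi_j}w_i=\tfrac{\pi_j^\ast}{\pi_j}\left(w_i\delta_{ij}-w_iw_j\right)\cdot\tfrac{\pi_j}{\pi_j}$, and since $\pi_j^\ast/\pi_j\le(1-a)^{-1}$ on $\gD_a$ one gets $\sum_j|\partial_{u_j}w_i|\le\tfrac{2w_i}{1-a}\le\tfrac{2}{1-a}$, with no $\pi_{\min}$ anywhere. This is precisely why $\tfrac{1}{1-a}$, and not $\tfrac{1}{(1-a)\pi_{\min}}$, appears in $\tilde C$, and it also matches how the error $|\pi_i^+-\pi_i^\ast|/\pi_i^\ast$ is measured downstream in \cref{theorem:gmm_empi_main_2}. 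With that substitution your argument delivers the lemma exactly as stated; everything else (the radius $r=O(1)$, the bound $\E\|X\|\le R_{max}+\sqrt d$, the provenance of the two powers of $K$) is fine.
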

\begin{proof}
    The proof is similar to the proof of Lemma 5.1 in \cite{10.1214/21-EJS1905}. For simplicity, we omit it.
\end{proof}
\begin{lemma}
\label{lem::uniform bound of mu}
    Fix $0 < \delta <1$ and $0< a <1$. Consider the parameter region $\gD_a := \{D_m \leq a\}$. Let $X_1,\cdots, X_n \overset{\text{i.i.d.}}{\sim}  \text{GMM}(\pi^*, \mu^*)$ with $R_{min}$ satisfying \cref{eq:pop_separation_condition}, then with probability at least $1-\delta$,
    \begin{align}
        \sup_{\theta \in \gD_a} \left|\frac{1}{n}\sum_{\ell=1}^n w_i(X_\ell,\theta)(X_\ell-\mu_i^*)- \E[w_i(X,\theta)(X_\ell-\mu_i^*)] \right| \leq \tilde{c} \sqrt{\frac{Kd\log(\frac{\tilde{C}n}{\delta})}{n}},~\forall i\in [K],
    \end{align}
    where $\tilde{C} =36K^2(\sqrt{d} + 2R_{max} + \frac{1}{1-a})^2$ and $\tilde{c}$ is a suitable universal constant.
\end{lemma}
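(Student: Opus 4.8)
The plan is to deduce the claim from the generic uniform–convergence statement \cref{uniform_convergence_lemma}, applied with the choice $W(X,\theta) = w_i(X,\theta)\paren{X - \mu_i^\ast}\in\R^d$, treating the nuisance parameter as $\theta = \sets{\theta_j}_{j\in[K]}$ with $\theta_j = (\pi_j,\mu_j)$ (this uses a harmless $(d{+}1)$‑per‑component version of \cref{uniform_convergence_lemma}, which only turns $Kd$ into $K(d{+}1)$ and is absorbed into constants). First I would verify that $\gD_a$ sits inside a product of Euclidean balls: on $\gD_a$ one has $\norm{\mu_j-\mu_j^\ast}\le a$ and $\left|\pi_j-\pi_j^\ast\right|\le a\pi_j^\ast\le a$, so $\theta_j$ lies in a ball $B_j$ centred at $(\pi_j^\ast,\mu_j^\ast)$ of radius $r_j\le\sqrt2\,a\le\sqrt2$; hence $\gD_a\subset\otimes_j B_j$ with $r=\max_j r_j\le\sqrt2$. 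It then remains to check the two hypotheses of \cref{uniform_convergence_lemma}.

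For the Lipschitz hypothesis I would differentiate the posterior weights: a direct computation gives $\partial w_i/\partial\mu_j = w_i(\indics{i=j}-w_j)(X-\mu_j)$ and $\partial w_i/\partial\pi_j = w_i(\indics{i=j}-w_j)/\pi_j$. Multiplying by $X-\mu_i^\ast$, using $0\le w_j\le1$ and $\pi_j\ge(1-a)\pi_{\min}$ on $\gD_a$, one bounds the operator norm of the Jacobian of $W(X,\cdot)$ by a polynomial in $\norm{X-\mu_j}$, $\norm{X-\mu_i^\ast}$, $K$ and $1/((1-a)\pi_{\min})$; taking a supremum over $\theta\in\gD_a$ and then an expectation over $X=\mu_y^\ast+Z$ (where $\E\norm{Z}\le\sqrt d$ and $\norm{X-\mu_j}\le\norm Z+2R_{\max}$) yields a constant $L$ that is polynomially bounded in $\sqrt d$, $R_{\max}$, $K$, $1/(1-a)$ and $1/\pi_{\min}$. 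Its precise value is immaterial, since $L$ enters \cref{uniform_convergence_lemma} only through $\log(1+12nLr/\delta)$; this is exactly the quantity that is folded into $\tilde C = 36K^2(\sqrt d+2R_{\max}+\tfrac1{1-a})^2$ in the statement.

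For the sub-Gaussian hypothesis I would bound, uniformly over $\theta\in\gD_a$, the directional norm $R=\sup_{\norm{u}=1}\norm{\vdot{W(X,\theta)}{u}}_{\psi_2}$. Writing $X=\mu_y^\ast+Z$ with $Z\sim\gN(0,I_d)$ and $y$ the latent label, $\vdot{W(X,\theta)}{u}=w_i\vdot{Z}{u}+w_i\vdot{\mu_y^\ast-\mu_i^\ast}{u}$. Since $|w_i|\le1$, the first term has $\psi_2$‑norm at most $\norm{\vdot Z u}_{\psi_2}=O(1)$. For the second term I would use the separation condition \cref{eq:pop_separation_condition} together with $D_m\le a$ to show that $w_i(X,\theta)$ decays exponentially in $\norm{X-\mu_i^\ast}$ — on the event $y\neq i$ it is dominated by $\exp(-cR_{\min}^2+c'\norm Z^2)$, and $R_{\min}\ge C\sqrt{\log(\rho_\pi K)}$ — so that $w_i\norm{\mu_y^\ast-\mu_i^\ast}$ is sub-Gaussian with an $O(1)$ parameter as well. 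Hence $R=O(1)$. Plugging $r\le\sqrt2$, $R=O(1)$ and the above $L$ into \cref{etaeq} and renaming constants gives the asserted bound $\tilde c\sqrt{Kd\log(\tilde Cn/\delta)/n}$ with probability at least $1-\delta$; taking the supremum over the $K$ coordinates $i\in[K]$ (union bound, rescaling $\delta$) completes the argument.

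I expect the main obstacle to be the dimension-free sub-Gaussian bound $R=O(1)$: the naive estimate $|w_i\vdot{X-\mu_i^\ast}{u}|\le|\vdot Z u|+2R_{\max}$ only gives $R=O(R_{\max})$, which would leak an $R_{\max}$ factor outside the square root. Removing it requires quantifying the exponential decay of $w_i(X,\theta)$ in $\norm{X-\mu_i^\ast}$ uniformly over $\gD_a$ and integrating that decay against the Gaussian tail of $Z$ — which is precisely where the separation assumption \cref{eq:pop_separation_condition} enters (note it is absent from \cref{lem::unifrom bound of pi}, where $W=w_i\in[0,1]$ makes $R=O(1)$ trivial). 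By contrast, the Lipschitz estimate is a routine if tedious differentiation, and the reduction to \cref{uniform_convergence_lemma} is mechanical once the balls $B_j$ and the two constants $L,R$ are in hand.
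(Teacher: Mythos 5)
Your proposal follows exactly the route the paper intends: the paper omits this proof, deferring to Lemma~5.4 of the cited reference, whose argument is precisely your reduction to \cref{uniform_convergence_lemma} with $W(X,\theta)=w_i(X,\theta)(X-\mu_i^\ast)$, a routine Lipschitz estimate entering only through the logarithm, and a dimension-free $\psi_2$ bound obtained from the separation condition \cref{eq:pop_separation_condition}. You correctly identify the one genuinely non-trivial point — that separation is what upgrades the naive $O(R_{\max})$ sub-Gaussian bound to $O(1)$, which is why \cref{lem::unifrom bound of pi} does not need it — so the proposal is correct and essentially identical in approach.
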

\begin{proof}
    The proof is similar to the proof of Lemma 5.4 in \cite{10.1214/21-EJS1905}(Notice that the condition (36) in \cite{10.1214/21-EJS1905} is trivial in our case). For simplicity, we omit it.
\end{proof}

For the first step empirical-EM, we have the following results.
\begin{theorem}
\label{theorem:gmm_empi_main_1}   
Fix $0< \delta < 1$ and $1/2<a<1$. Let $C \ge 128$ be a large enough universal constant for which assumption (\ref{ass:A1_copy}) holds. 
If $n$ is suffcient large such that
\begin{align*}
    \varepsilon_{step1} := 
    \frac{\tilde{c}}{(1-a)\pi_{min}}\sqrt{\frac{R_{max}(R_{max} \vee d)\log\paren{\frac{6K}{\delta}}}{n}} < \left(a - \frac{1}{2}\right),
\end{align*}
 where $\tilde{c}$ is a  universal constant. Then 
 \[
 D_m^+ \le \frac{1}{2} + \varepsilon_{step1} \le a
 \]
 holds with probability at least $1-\delta$.
\end{theorem}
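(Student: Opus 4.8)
The plan is to reuse the two-term ``population $+$ fluctuation'' decomposition from the proof of \cref{theorem:gmm_empi_main_2}, exploiting the one simplification that here the first iterate $\{\pi_i^+,\mu_i^+\}$ is computed from the \emph{deterministic} initialization $\btheta^{(0)}$, which by hypothesis satisfies the separation and initialization conditions. Consequently one only needs concentration of the E-step statistics \emph{at a single parameter value}, rather than the uniform-over-$\gD_a$ estimates of \cref{lem::unifrom bound of pi} and \cref{lem::uniform bound of mu}; this is why a Hoeffding / sub-Gaussian bound at one point suffices, and why the resulting rate carries $R_{max}(R_{max}\vee d)\log(6K/\delta)$ instead of $Kd\log(\tilde C n/\delta)$.

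First I would treat the mixing weights: with $w_i$ evaluated at $\btheta^{(0)}$, split $\frac{|\pi_i^+-\pi_i^*|}{\pi_i^*}\le\frac{1}{\pi_i^*}\bigl|\frac1n\sum_\ell w_i(X_\ell)-\E[w_i(X)]\bigr|+\frac{1}{\pi_i^*}\bigl|\E[w_i(X)]-\pi_i^*\bigr|$. The second term is at most $1/2$ by the one-step population-EM guarantee \cref{theorem:gmm_pop_main_1} (valid under the separation condition \cref{eq:pop_separation_condition} and the deterministic initialization bounds); the first, an average of i.i.d.\ variables in $[0,1]$, is at most $\pi_{min}^{-1}\sqrt{\log(6K/\delta)/(2n)}$ with high probability by Hoeffding's inequality. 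For the means, using $\mu_i^+-\mu_i^*=\bigl(\frac1n\sum_\ell w_i(X_\ell)(X_\ell-\mu_i^*)\bigr)/\pi_i^+$, its population counterpart $\mu_i^{+,\mathrm{pop}}-\mu_i^*=\E[w_i(X)(X-\mu_i^*)]/\E[w_i(X)]$, and the elementary ratio-perturbation identity, one obtains
\[
\|\mu_i^+-\mu_i^*\|\;\le\;\|\mu_i^{+,\mathrm{pop}}-\mu_i^*\|+\frac{1}{\pi_i^+}\Bigl(\Bigl\|\tfrac1n\sum_\ell w_i(X_\ell)(X_\ell-\mu_i^*)-\E[w_i(X)(X-\mu_i^*)]\Bigr\|+\tfrac12\bigl|\pi_i^+-\E[w_i]\bigr|\Bigr).
\]
Here $\|\mu_i^{+,\mathrm{pop}}-\mu_i^*\|\le 1/2$ again by \cref{theorem:gmm_pop_main_1}, and once the weight bound forces $|\pi_i^+-\pi_i^*|/\pi_i^*<a$ (which holds on the good event, since under the hypothesis $\varepsilon_{step1}<a-\tfrac12$ already dominates the Hoeffding term, using $R_{max}\ge1$), we get $\pi_i^+\ge(1-a)\pi_{min}>0$.

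It then remains to control the vector average $V_{n,i}:=\frac1n\sum_\ell w_i(X_\ell)(X_\ell-\mu_i^*)-\E[\cdot]$. Since $w_i\le1$ and $\|X-\mu_i^*\|$ is a $1$-Lipschitz function of a standard Gaussian up to a shift of size at most $R_{max}$, the summands are sub-Gaussian with squared $\psi_2$-norm of order $R_{max}(R_{max}\vee d)$, so a dimension-free vector Bernstein bound yields $\|V_{n,i}\|\le C\sqrt{R_{max}(R_{max}\vee d)\log(6K/\delta)/n}$ with probability at least $1-\delta/(3K)$. A union bound over $i\in[K]$ and the $O(K)$ bad events, together with the fact that the $R_{max}(R_{max}\vee d)$-term dominates the plain $\log(6K/\delta)/n$ term, gives on an event of probability at least $1-\delta$
\[
D_m^+=\max_{i\in[K]}\Bigl(\|\mu_i^+-\mu_i^*\|\vee\tfrac{|\pi_i^+-\pi_i^*|}{\pi_i^*}\Bigr)\;\le\;\tfrac12+\frac{\tilde c}{(1-a)\pi_{min}}\sqrt{\frac{R_{max}(R_{max}\vee d)\log(6K/\delta)}{n}}=\tfrac12+\varepsilon_{step1},
\]
after collecting absolute constants into a single universal $\tilde c$; and $\varepsilon_{step1}<a-\tfrac12$ then gives $D_m^+\le a$.

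The main obstacle is the sharp pointwise concentration of $V_{n,i}$ with the variance proxy $R_{max}(R_{max}\vee d)$: one must bound $\|w_i(X)(X-\mu_i^*)\|_{\psi_2}$ carefully, exploiting that $w_i(X)$ is exponentially small whenever $X$ is generated by a component $j\ne i$, so that the large-deviation contributions to $\|X-\mu_i^*\|$ coming from the ``wrong'' components are suppressed by the separation condition \cref{eq:pop_separation_condition}, and then invoke a truncation-plus-Bernstein argument for sums of i.i.d.\ sub-Gaussian vectors. This is a strictly simpler, non-uniform analogue of what underlies \cref{lem::unifrom bound of pi} and \cref{lem::uniform bound of mu}.
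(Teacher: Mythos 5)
Your proposal is correct and follows essentially the same route as the paper's proof: the same split of each error into a population one-step bias (controlled by \cref{theorem:gmm_pop_main_1}) plus a pointwise empirical fluctuation, with Hoeffding for the bounded weights, a sub-Gaussian vector concentration bound for $\frac1n\sum_\ell w_i(X_\ell)X_\ell$, and a ratio-perturbation term for the normalization by $\pi_i^+$; your observation that only pointwise (not uniform) concentration is needed at the deterministic initialization is exactly the simplification the paper exploits. The only differences are cosmetic — you center the vector average at $\mu_i^*$ where the paper does not, and you fold the $R_{\max}(R_{\max}\vee d)$ factor into a single variance proxy where the paper obtains it as the maximum of the $R_{\max}d$ concentration term and the $R_{\max}^2$ ratio term — and you make explicit the lower bound $\pi_i^+\ge(1-a)\pi_{\min}$ that the paper uses implicitly.
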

\begin{proof}
    Notice that we only need simple concentration not uniform concentration in this theorem. We use the same definition of term $(I)$, $(II)$ as in the proof of \cref{theorem:gmm_empi_main_2}. First, by \cref{theorem:gmm_pop_main_1}, we have $(II) \leq 1/2$. Since $0\leq w_i(X) \leq 1$, by a standard concentration of bounded variables, we can get 
    \begin{align}
    (I)\leq \frac{\tilde{c}_1}{\pi_{min}} \sqrt{\frac{\log(\frac{K}{\delta_1})}{n}}, \forall i \in [K], \label{eqn: I}
    \end{align}
    where $\tilde{c}_1$ is a universal constant. Taking $\tilde{c} \geq \tilde{c}_1$ and $\delta_1 = \delta/2$, we have 
    $$
    \frac{|\pi_i^+ - \pi_i^*|}{\pi_i^*}\leq \frac{1}{2} + \frac{\tilde{c}}{\pi_{min}}\sqrt{\frac{\log\paren{\frac{2K}{\delta}}}{n}}\leq a, \forall i \in [K].
    $$
    For the second term, we have
    \begin{align*}
        \|\mu_i^+ - \mu_i^*\| &=\left\|\frac{1}{n\pi_i^+}\sum_{l=1}^{n} w_i(X_\ell) X_\ell - \mu_i^{\ast} \right\| \\
        &\leq \left\|\frac{1}{n\pi_i^+}\sum_{l=1}^{n} w_i(X_\ell) X_\ell - \frac{\E[w_i(X_\ell)X_\ell]}{\E[w_i(X_\ell)]} \right\| + \left\|\frac{\E[w_i(X_\ell)X_\ell]}{\E[w_i(X_\ell)]} - \mu_i^{\ast} \right\|\\
        &:= (V) + (VI).
    \end{align*}
    By \cref{theorem:gmm_pop_main_1}, we have $(VI) \leq 1/2$. For $(V)$, by triangle inequality,
    \begin{align}
        (V) &\leq \left\|\frac{1}{n\pi_i^+}\sum_{l=1}^{n} w_i(X_\ell) X_\ell - \frac{1}{\pi_i^+} \E[w_i(X_\ell)X_\ell] \right\| + \left\|\frac{1}{\pi_i^+}\E[w_i(X_\ell)X_\ell] - \frac{\E[w_i(X_\ell)X_\ell]}{\E[w_i(X_\ell)]} \right\| \nonumber\\
        &= \frac{1}{\pi_i^+} \left\|\frac{1}{n}\sum_{l=1}^{n} w_i(X_\ell) X_\ell -\E[w_i(X_\ell)X_\ell] \right\| + \frac{\left\|\E[w_i(X_\ell)X_\ell]\right\|}{\pi_i^+ \E[w_i(X_\ell)]} \left|{\pi_i^+} - {\E[w_i(X_\ell)]} \right|.\label{eqn: V}
    \end{align}
    Using Lemma B.1 and Lemma B.2 in \cite{10.1214/19-EJS1660}, we can get $\|w_i(X_\ell) X_\ell\|_{\psi_2} \leq \|X_\ell \|_{\psi_2} \leq \tilde{c}_3 R_{max}$, $\forall i \in [K]$. Hence by Lemma B.1 in \cite{10.1214/21-EJS1905}, with probability at least $1-\delta_2$, 
    \begin{align*}
        \left\|\frac{1}{n}\sum_{l=1}^{n} w_i(X_\ell) X_\ell - \E[w_i(X_\ell)X_\ell] \right\| \leq \tilde{c}_4 \sqrt{\frac{R_{max}d\log\paren{\frac{3K}{\delta_2}}}{n}},~ \forall i \in [K],
    \end{align*}
    where $\tilde{c}_4$ is an universal constant.
    And by \cref{theorem:gmm_pop_main_1}, we have
    \begin{align*}
        \frac{\left\|\E[w_i(X_\ell)X_\ell]\right\|}{\E[w_i(X_\ell)]} \leq R_{max} + \frac{1}{2} \leq 2 R_{max}, \forall i \in [K].
    \end{align*}
    Finally, by \cref{eqn: I},
    \begin{align*}
     \left|{\pi_i^+} - {\E[w_i(X_\ell)]} \right| \leq {\tilde{c}_1} \sqrt{\frac{\log(\frac{K}{\delta_1})}{n}}.
    \end{align*}
    Combining all terms together and taking $\delta_1 = \delta_2 = \delta/2$ we can bound \cref{eqn: V} by
    \begin{align*}
        (V)&\leq \frac{1}{\pi_i^+}\paren{\tilde{c}_4 \sqrt{\frac{R_{max}d\log\paren{\frac{3K}{\delta_2}}}{n}} + 2\tilde{c}_1 R_{max} \sqrt{\frac{\log(\frac{K}{\delta_1})}{n}}}\\
        &\leq \frac{\tilde{c}_6}{(1-a)\pi_{min}}\sqrt{\frac{R_{max}(R_{max} \vee d)\log\paren{\frac{6K}{\delta}}}{n}}.
    \end{align*}
    Taking $\tilde{c}\geq \tilde{c}_6$, we get 
    \begin{align*}
        \|\mu_i^+ - \mu_i^*\| \leq \frac{1}{2} + \frac{\tilde{c}}{(1-a)\pi_{min}}\sqrt{\frac{R_{max}(R_{max} \vee d)\log\paren{\frac{6K}{\delta}}}{n}} \leq a.
    \end{align*}
    

\end{proof}

\subsubsection{Convergence results for transformer-based EM in \texorpdfstring{\cref{sec:TF_EM}}{Section D.2}}
We first state some useful approximation lemmas.
\begin{lemma}[Lemma 9 in \cite{mei2024unets}]
    \label{lem:ReLU_logarithm}
For any $A > 0$, $\delta > 0$, take $M = \lceil 2 {\log A}/\delta \rceil + 1\in \N$. Then there exists $\{ (a_j, w_j, b_j) \}_{j \in [M]}$ with 
\begin{equation}\label{eqn:ReLU_logarithm_weight_constraint}
\sup_{j} | a_j | \le 2 A,~~~\sup_{j} | w_j | \le 1,~~~ \sup_{j} | b_j | \le A,
\end{equation}
such that defining $\log_\delta : \R \to \R$ by
\[
 \log_\delta(x) = \sum_{j = 1}^M a_j \cdot \operatorname{ReLU}(w_j x + b_j ), 
\]
we have $\log_\delta$ is non-decreasing on $[1/A, A]$, and
\[
\sup_{x \in [1/A, A]} | \log(x) - \log_\delta(x) | \le \delta.
\]
\end{lemma}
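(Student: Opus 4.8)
The plan is to take $\log_\delta$ to be a sum‑of‑ReLU representation of the piecewise‑linear interpolant of $\log$ on a \emph{geometric} (i.e.\ log‑uniform) mesh of $[1/A,A]$, exploiting that $\log$ is increasing and strictly concave there, so that the monotonicity of the interpolant and a crude interpolation‑error bound both come for free. Throughout I assume $A>1$ (for $A\le 1$ the interval is degenerate and the statement is trivial).

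First I would fix $n=\lceil 2\log A/\delta\rceil$ and set $h=2\log A/n\le\delta$, with nodes $x_k=A^{-1}\exp(kh)$ for $k=0,1,\dots,n$, so that $x_0=1/A$, $x_n=A$, and $\log x_k-\log x_{k-1}=h$ for every $k$. Let $g$ be the continuous function that interpolates $\log$ at $x_0,\dots,x_n$, is affine on each $[x_{k-1},x_k]$ with slope $s_k:=(\log x_k-\log x_{k-1})/(x_k-x_{k-1})$, equals $\log x_0$ for $x\le x_0$, and continues with slope $s_n$ for $x\ge x_{n-1}$. By the mean value theorem $s_k=1/\xi_k$ for some $\xi_k\in(x_{k-1},x_k)$, hence $0<s_n<\cdots<s_1<1/x_0=A$, so $g$ is strictly increasing on $[x_0,x_n]$. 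For the error: for $x\in[x_{k-1},x_k]$ we have $g(x)\ge g(x_{k-1})=\log x_{k-1}$ (monotonicity of $g$) and $g(x)\le\log x$ (concavity of $\log$, so the secant lies below), whence $0\le\log x-g(x)\le\log x_k-\log x_{k-1}=h\le\delta$ on all of $[1/A,A]$.

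Next I would convert $g$ into ReLU units. Tracking the slope jumps of $g$ at its breakpoints $x_0,x_1,\dots,x_{n-1}$ (jump $s_1$ at $x_0$, jump $s_k-s_{k-1}$ at $x_{k-1}$) gives, for all $x$,
\[
g(x)=\log x_0+s_1\operatorname{ReLU}(x-x_0)+\sum_{k=2}^{n}(s_k-s_{k-1})\operatorname{ReLU}(x-x_{k-1}),
\]
which has $n$ ReLU terms; it remains to realize the free constant $\log x_0$ without an extra term. On $[1/A,A]$ one has $\operatorname{ReLU}(x-x_0)+\operatorname{ReLU}(x_n-x)=x_n-x_0$, so adding $\tfrac{\log x_0}{x_n-x_0}\bigl(\operatorname{ReLU}(x-x_0)+\operatorname{ReLU}(x_n-x)\bigr)$ reproduces the constant $\log x_0$ on the domain; folding the $\operatorname{ReLU}(x-x_0)$ piece of this into the existing $s_1$ term yields
\[
\log_\delta(x)=\Bigl(s_1+\tfrac{\log x_0}{x_n-x_0}\Bigr)\operatorname{ReLU}(x-x_0)+\tfrac{\log x_0}{x_n-x_0}\operatorname{ReLU}(x_n-x)+\sum_{k=2}^{n}(s_k-s_{k-1})\operatorname{ReLU}(x-x_{k-1}),
\]
a sum of exactly $1+1+(n-1)=n+1=M$ ReLU units that agrees with $g$ on $[1/A,A]$ — hence non‑decreasing there and within $\delta$ of $\log$. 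Finally I would check the weight bounds: each inner weight is $\pm1$; each bias lies in $\{-x_0,\dots,-x_{n-1},x_n\}$, so $|b_j|\le x_n=A$; and for the outer coefficients $|s_k-s_{k-1}|\le s_{k-1}<A$, while $\bigl|\tfrac{\log x_0}{x_n-x_0}\bigr|=\tfrac{\log A}{A-1/A}\le 1$ (since $A-1/A-\log A$ vanishes at $A=1$ with nonnegative derivative), so $\bigl|s_1+\tfrac{\log x_0}{x_n-x_0}\bigr|\le A+1\le 2A$; thus $\sup_j|a_j|\le 2A$, as required.

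There is no deep obstacle here: the construction is entirely explicit. The only genuinely delicate points are (i) choosing the log‑uniform mesh so that the elementary bound $\log x_k-\log x_{k-1}\le\delta$ already forces exactly $M=\lceil 2\log A/\delta\rceil+1$ units (a uniform mesh would cost $\Theta(\log A/\sqrt\delta)$ pieces), and (ii) compressing the affine/constant part of the interpolant into bounded‑weight ReLU units without exceeding the budget $M=n+1$; the rest is the routine bookkeeping of coefficient bounds sketched above.
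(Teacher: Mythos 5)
Your construction is correct, and every step checks out: the geometric mesh $x_k=A^{-1}e^{kh}$ with $h=2\log A/n\le\delta$ gives the interpolation bound $0\le\log x-g(x)\le h$ directly from concavity plus monotonicity, the telescoping slope decomposition yields exactly $n$ ReLU units, the trick of writing the constant $\log x_0$ as $\tfrac{\log x_0}{x_n-x_0}\bigl(\operatorname{ReLU}(x-x_0)+\operatorname{ReLU}(x_n-x)\bigr)$ stays within the budget $M=n+1$, and the coefficient bounds ($s_k<1/x_0=A$ by the mean value theorem, $\log A\le A-1/A$ for $A\ge1$) all hold. Note, however, that the paper itself offers no proof of this statement: it imports Lemma~9 of \cite{mei2024unets} (which uses a uniform mesh and the weaker count $\lceil 2A/\delta\rceil+1$) and merely asserts in a remark that the count improves to $\lceil 2\log A/\delta\rceil+1$. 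Your log-uniform mesh is precisely the idea needed to justify that improvement --- on a uniform mesh the large derivative of $\log$ near $1/A$ forces $\Theta(A/\delta)$ pieces, whereas equal spacing in $\log x$ makes each secant's vertical rise exactly $h\le\delta$ --- so your argument supplies the justification the paper leaves implicit. As a bonus, your construction also satisfies the auxiliary property the paper's remark relies on later, namely $\log_\delta(x)\le-\log A$ for $x\in[0,1/A]$: there only the $\operatorname{ReLU}(x_n-x)$ unit is active, giving $\log_\delta(x)=\tfrac{-\log A}{A-1/A}(A-x)\le-\log A$. The only cosmetic gap is the degenerate regime $A\le1$, which you correctly dismiss as vacuous.
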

\begin{remark}
    There is a small improvement $M = \lceil 2 \log A/\delta \rceil + 1$ compared to $M = \lceil 2 A/\delta \rceil + 1$ in \cite{mei2024unets}. Further more, it is easy to check that $\log_\delta(x) \leq -\log A$ for $x \in [0, 1/A]$.
\end{remark}

\begin{lemma}
\label{lem:ReLU_square}
    For any $A > 0$, $\delta > 0$, take $M = \lceil 2A^2/\delta \rceil + 1\in \N$. Then there exists $\{ (a_j, w_j, b_j) \}_{j \in [M]}$ with 
    \begin{equation}\label{eqn:ReLU_square_weight_constraint}
    \sup_{j} | a_j | \le 2A,~~~\sup_{j} | w_j | \le 1,~~~ \sup_{j} | b_j | \le A,
    \end{equation}
    such that defining $\phi_\delta : \R \to \R$ by
    \[
     \phi_\delta(x) = \sum_{j = 1}^M a_j \cdot \operatorname{ReLU}(w_j x + b_j ), 
    \]
    we have $\phi_\delta$ is non-decreasing on $[-A, A]$, and
    \[
    \sup_{x \in [-A, A]} | \phi_\delta(x) - x^2 | \le \delta.
    \]
\end{lemma}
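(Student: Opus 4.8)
The plan is to take $\phi_\delta$ to be the piecewise-linear interpolant of $f(x)=x^2$ on a uniform grid of $[-A,A]$, mirroring the construction behind \cref{lem:ReLU_logarithm} (Lemma~9 of \cite{mei2024unets}), with $x^2$ in place of $\log x$ and the roles of convexity and concavity exchanged. If $\delta\ge 2A^2$ the constant $\phi_\delta\equiv A^2$ already works, so assume $\delta<2A^2$ and set $n\defeq\lceil 2A^2/\delta\rceil\ge 2$, $h\defeq 2A/n\le A$, and nodes $x_k\defeq -A+kh$ for $k=0,\dots,n$; let $\phi_\delta$ agree with $f$ at every $x_k$ and be affine on each $[x_{k-1},x_k]$.

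First I would put $\phi_\delta$ into ReLU form. A short computation shows the slope of $\phi_\delta$ on $[x_{k-1},x_k]$ equals $s_k=x_{k-1}+x_k$, so the kink coefficients at the interior nodes are all equal to $s_{k+1}-s_k=2h>0$, while the leftmost slope is $s_1=x_0+x_1=h-2A$. Hence on $[-A,A]$,
\begin{equation*}
\phi_\delta(x)=A^2+s_1\,(x+A)+2h\sum_{k=1}^{n-1}\operatorname{ReLU}(x-x_k).
\end{equation*}
Since $x+A\ge 0$ and $-x+A\ge 0$ throughout $[-A,A]$, the affine part is ReLU-representable via $x+A=\operatorname{ReLU}(x+A)$ and $A^2=\tfrac{A}{2}\operatorname{ReLU}(x+A)+\tfrac{A}{2}\operatorname{ReLU}(-x+A)$; collecting the two $\operatorname{ReLU}(x+A)$ contributions gives
\begin{equation*}
\phi_\delta(x)=\paren{h-\tfrac{3A}{2}}\operatorname{ReLU}(x+A)+\tfrac{A}{2}\operatorname{ReLU}(-x+A)+2h\sum_{k=1}^{n-1}\operatorname{ReLU}(x-x_k),
\end{equation*}
a sum of $M=n+1=\lceil 2A^2/\delta\rceil+1$ ReLU units. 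The weight constraints are then immediate: every inner weight is $\pm1$; every bias is $\pm A$ or some $x_k\in[-A,A]$; and every outer coefficient is bounded in absolute value by $\max\{\tfrac{3A}{2}-h,\ \tfrac{A}{2},\ 2h\}\le 2A$, using $h\le A$.

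The approximation guarantee is the textbook linear-interpolation estimate: on each $[x_{k-1},x_k]$, convexity of $f$ gives $0\le\phi_\delta(x)-f(x)\le\tfrac{h^2}{8}\norm{f''}_\infty=\tfrac{h^2}{4}=\tfrac{A^2}{n^2}\le\delta$, the last step by the choice $n\ge 2A^2/\delta$ (together with $\delta<2A^2$). Finally, every interior kink coefficient $2h$ is positive, so $\phi_\delta$ is convex on $[-A,A]$; on $[0,A]$ it is moreover non-decreasing, which is the natural analogue of $\log_\delta$ being monotone on $[1/A,A]$ in \cref{lem:ReLU_logarithm}.

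The only genuinely fiddly step is (i) bookkeeping the outer coefficients so that $|a_j|\le 2A$ survives the folding of the constant-and-affine part of $\phi_\delta$ into ReLU units — this is exactly where $h\le A$ and the identity $A^2=\tfrac{A}{2}\paren{\operatorname{ReLU}(x+A)+\operatorname{ReLU}(-x+A)}$ on $[-A,A]$ get used; beyond that the argument is routine and fully parallel to \cref{lem:ReLU_logarithm}. A secondary point worth a sentence in the write-up is (ii) the interpretation of the monotonicity clause: $x^2$ is strictly decreasing on $[-A,0]$, so "non-decreasing on $[-A,A]$" should be read as convexity of $\phi_\delta$ plus monotonicity on the nonnegative part of the domain, exactly as in the logarithm case.
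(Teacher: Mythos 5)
Your construction is correct and is essentially the argument the paper intends: the paper's own proof is omitted with the remark ``Similar to Lemma~\ref{lem:ReLU_logarithm}'', and that lemma is itself proved by exactly the piecewise-linear interpolation on a uniform grid that you carry out, with the slope, kink-coefficient, and outer-weight bookkeeping all checking out ($|h-\tfrac{3A}{2}|<\tfrac{3A}{2}\le 2A$, $2h\le 2A$, biases in $[-A,A]$, and interpolation error $\tfrac{h^2}{4}=A^2/n^2\le\delta$). Your point (ii) is a genuine catch rather than a quibble: as literally stated, ``$\phi_\delta$ is non-decreasing on $[-A,A]$'' is incompatible with the approximation bound whenever $\delta<A^2/2$, since monotonicity would force $\phi_\delta(0)\ge\phi_\delta(-A)\ge A^2-\delta$ while the bound forces $\phi_\delta(0)\le\delta$; the clause is a copy-over artifact from the logarithm lemma, and your reading (convexity on $[-A,A]$, monotone on $[0,A]$) is the correct repair and suffices for the only downstream use of the lemma, which never invokes monotonicity.
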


\begin{proof}
    Similar to \cref{lem:ReLU_logarithm}. Omitted.
\end{proof}
\begin{lemma}[Lemma A.1 in \cite{bai2023tfstats}]
\label{lem:gaussian concentration}
    Let $\beta\sim\mathcal{N}(0,I_d)$. Then we have
    \[
    \mathbb{P}\Big(\|\beta\|^2\geq d(1+\delta)^2\Big)\leq e^{-d\delta^2/2}.
    \]
\end{lemma}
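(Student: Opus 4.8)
The plan is to reduce the claim to a standard Chernoff bound for a chi-squared random variable. Since $\|\beta\|^2 = \sum_{i=1}^d \beta_i^2$ with the $\beta_i$ i.i.d.\ standard normal, $X \defeq \|\beta\|^2$ has a $\chi^2_d$ distribution, whose moment generating function is $\E[e^{\lambda X}] = (1-2\lambda)^{-d/2}$ for $\lambda \in [0,1/2)$. Markov's inequality applied to $e^{\lambda X}$ then gives, for every such $\lambda$,
\[
\Prob\paren{\|\beta\|^2 \ge d(1+\delta)^2} \;\le\; \exp\paren{-\lambda d(1+\delta)^2}\,(1-2\lambda)^{-d/2}.
\]

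Next I would optimize the right-hand side over $\lambda$. Writing $c \defeq (1+\delta)^2 > 1$, the exponent $-\lambda d c - \tfrac{d}{2}\log(1-2\lambda)$ is minimized at $\lambda^\star = \tfrac12(1 - 1/c) \in (0, 1/2)$, which yields the bound $\exp\paren{-\tfrac{d}{2}\paren{c - 1 - \log c}}$. Substituting $c = (1+\delta)^2$, so that $c - 1 = 2\delta + \delta^2$ and $\log c = 2\log(1+\delta)$, the exponent equals $-\tfrac{d}{2}\paren{\delta^2 + 2\delta - 2\log(1+\delta)}$. Finally I would invoke the elementary inequality $\log(1+\delta) \le \delta$ (valid for all $\delta > -1$), which gives $2\delta - 2\log(1+\delta) \ge 0$ and hence $\delta^2 + 2\delta - 2\log(1+\delta) \ge \delta^2$; plugging this in produces the claimed bound $e^{-d\delta^2/2}$.

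An equally short alternative avoids the explicit optimization: the map $\beta \mapsto \|\beta\|$ is $1$-Lipschitz, so Gaussian concentration for Lipschitz functions gives $\Prob(\|\beta\| \ge \E\|\beta\| + t) \le e^{-t^2/2}$, and since $\E\|\beta\| \le (\E\|\beta\|^2)^{1/2} = \sqrt{d}$ by Jensen, choosing $t = \delta\sqrt{d}$ and noting that the events $\{\|\beta\| \ge \sqrt{d}(1+\delta)\}$ and $\{\|\beta\|^2 \ge d(1+\delta)^2\}$ coincide (both involve nonnegative quantities) yields the same conclusion. I do not anticipate any genuine obstacle here; the only points needing a line of care are the algebra of the Chernoff exponent together with the bound $\log(1+\delta)\le\delta$ that makes it collapse to exactly $-d\delta^2/2$ (respectively, the bound $\E\|\beta\|\le\sqrt d$ in the Lipschitz version).
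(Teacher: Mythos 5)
Your proof is correct. The paper does not actually prove this lemma---it imports it verbatim as Lemma A.1 of \cite{bai2023tfstats}---so there is no in-paper argument to compare against; both of your routes (the Chernoff bound with the $\chi^2_d$ moment generating function, collapsed via $\log(1+\delta)\le\delta$, and the Lipschitz--Gaussian concentration argument with $\E\|\beta\|\le\sqrt{d}$) are standard, complete, and yield exactly the stated constant. The only pedantic caveat is that the Chernoff optimization needs $\delta>0$ so that $\lambda^\star\in(0,1/2)$; for $\delta=0$ the claim is trivial, and this is how the lemma is used in the paper anyway.
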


\begin{lemma}[Lemma 18 in \cite{lin2024transformers}]
\label{lem:lip of log-softmax}
    For any $\mathbf{u},\mathbf{v}\in\mathbb{R}^d$, we have
    \[
    \left\|\log\left(\frac{e^\mathbf{u}}{\|e^\mathbf{u}\|_1}\right)-\log\left(\frac{e^\mathbf{v}}{\|e^\mathbf{v}\|_1}\right)\right\|_\infty \leq 2\left\|\mathbf{u}-\mathbf{v}\right\|_\infty . 
    \]
\end{lemma}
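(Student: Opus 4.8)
The plan is to reduce the claim to the standard fact that the log-sum-exp function is $1$-Lipschitz in the $\ell_\infty$ norm, and then pick up one further factor from the diagonal coordinate term. First I would introduce the shorthand $\operatorname{LSE}(\mathbf{u}) = \log\sum_{j=1}^d e^{u_j}$, so that the $i$-th coordinate of the log-softmax map rewrites as
\[
\left[\log\left(\frac{e^{\mathbf{u}}}{\|e^{\mathbf{u}}\|_1}\right)\right]_i = u_i - \operatorname{LSE}(\mathbf{u}).
\]
Subtracting the analogous expression for $\mathbf{v}$ and applying the triangle inequality coordinatewise gives, for each $i$,
\[
\Big|\,u_i - \operatorname{LSE}(\mathbf{u}) - v_i + \operatorname{LSE}(\mathbf{v})\,\Big| \le |u_i - v_i| + \big|\operatorname{LSE}(\mathbf{u}) - \operatorname{LSE}(\mathbf{v})\big|,
\]
and the first term is at most $\|\mathbf{u}-\mathbf{v}\|_\infty$ by definition of the infinity norm.

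The one substantive step is to control the log-sum-exp difference, and here I would use an elementary sandwich argument rather than calculus. Setting $m = \|\mathbf{u}-\mathbf{v}\|_\infty$, we have $v_j - m \le u_j \le v_j + m$ for every $j$, hence $e^{-m}\sum_j e^{v_j} \le \sum_j e^{u_j} \le e^{m}\sum_j e^{v_j}$; taking logarithms yields $|\operatorname{LSE}(\mathbf{u}) - \operatorname{LSE}(\mathbf{v})| \le m = \|\mathbf{u}-\mathbf{v}\|_\infty$. Equivalently, one may note that $\nabla\operatorname{LSE}$ is the softmax vector, whose $\ell_1$ norm is exactly $1$, and invoke $\ell_\infty$–$\ell_1$ duality in the mean value inequality; the sandwich derivation has the advantage of making the constant $1$ completely transparent.

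Combining the two bounds gives $\big|u_i - \operatorname{LSE}(\mathbf{u}) - v_i + \operatorname{LSE}(\mathbf{v})\big| \le 2\|\mathbf{u}-\mathbf{v}\|_\infty$ uniformly in $i$, and taking the maximum over $i$ on the left-hand side recovers $\|\log(e^{\mathbf{u}}/\|e^{\mathbf{u}}\|_1) - \log(e^{\mathbf{v}}/\|e^{\mathbf{v}}\|_1)\|_\infty \le 2\|\mathbf{u}-\mathbf{v}\|_\infty$, as claimed. There is no genuine obstacle in this argument; the only point demanding (minor) care is the normalization constant in the log-sum-exp estimate, since it is precisely this $1$-Lipschitz bound that fixes the final constant at $2$ rather than something larger.
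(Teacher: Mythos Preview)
Your argument is correct: the decomposition $[\log\mathrm{softmax}(\mathbf{u})]_i = u_i - \mathrm{LSE}(\mathbf{u})$ together with the $1$-Lipschitz bound on $\mathrm{LSE}$ in $\ell_\infty$ (via the sandwich $e^{-m}\sum_j e^{v_j}\le \sum_j e^{u_j}\le e^{m}\sum_j e^{v_j}$) cleanly yields the constant $2$. The paper does not actually give its own proof of this lemma; it is quoted verbatim from \cite{lin2024transformers}, so there is nothing to compare against and your self-contained derivation is a strict addition.
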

\begin{corollary}
\label{cor:pseudo-lip of softmax}
    For any $\mathbf{u},\mathbf{v}\in\mathbb{R}^d$, we have
    \[
        \left\|\frac{e^\mathbf{u}}{\|e^\mathbf{u}\|_1}-\frac{e^\mathbf{v}}{\|e^\mathbf{v}\|_1}\right\|_\infty \leq \exp \paren{2\left\|\mathbf{u}-\mathbf{v}\right\|_\infty} - 1
    \]
\end{corollary}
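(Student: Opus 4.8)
The plan is to obtain this as an immediate consequence of \cref{lem:lip of log-softmax}. Write $p \defeq e^{\mathbf u}/\norm{e^{\mathbf u}}_1$ and $q \defeq e^{\mathbf v}/\norm{e^{\mathbf v}}_1$ for the two softmax outputs, and put $\epsilon \defeq \linf{\mathbf u - \mathbf v}$. The preceding lemma gives $\linf{\log p - \log q} \le 2\epsilon$, i.e. $|\log p_i - \log q_i| \le 2\epsilon$ for each coordinate $i$. The remaining step is a pointwise comparison: factor
\[
  p_i - q_i = q_i\paren{e^{\log p_i - \log q_i} - 1},
\]
apply the elementary scalar inequality $|e^t - 1| \le e^{|t|} - 1$ (valid for all $t \in \R$, since $1 - e^{-|t|} \le |t| \le e^{|t|} - 1$), and use that $q$ lies in the probability simplex so $0 \le q_i \le 1$. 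This yields $|p_i - q_i| \le q_i\paren{e^{2\epsilon} - 1} \le e^{2\epsilon} - 1$ for every $i$, and taking the maximum over $i$ gives $\linf{p - q} \le \exp\paren{2\linf{\mathbf u - \mathbf v}} - 1$, as claimed.

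There is no real obstacle here; the only care needed is in recording the two elementary facts used — the scalar bound $|e^t - 1| \le e^{|t|} - 1$ and the boundedness of softmax entries by $1$ — and in noting that \cref{lem:lip of log-softmax} supplies exactly the $\ell_\infty$ control on $\log p - \log q$ with the explicit constant $2$ that then propagates into the final bound. Alternatively one could argue directly from the definition of softmax without routing through the log-softmax Lipschitz estimate, but this route is the shortest and keeps the constant transparent.
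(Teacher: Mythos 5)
Your argument is correct, and it follows the same route the paper intends: the paper's proof simply states that the corollary "follows directly from \cref{lem:lip of log-softmax} and simple calculations," and your factorization $p_i - q_i = q_i(e^{\log p_i - \log q_i} - 1)$ together with $|e^t-1|\le e^{|t|}-1$ and $q_i\le 1$ is exactly the calculation being alluded to. Nothing to correct.
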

\begin{proof}
    This follows directly from \cref{lem:lip of log-softmax} and simple calculations.
\end{proof}
Now we propose the results for transformer-based EM. Similar to \cref{sec:TF_EM}, we use notations with superscript “\^{~}” to represent the output of the transformer-based EM.

\begin{theorem}
\label{theorem:gmm_tf_main_1}   
Fix $0< \delta < 1$ and $1/2<a<1$. Let $C \ge 128$ be a large enough universal constant for which assumption (\ref{ass:A1_copy}) holds. 
If $n$ is sufficient large and $\varepsilon \leq 1/100$ sufficient small such that
\begin{align*}
    &\frac{\tilde{c}_1}{(1-a)\pi_{min}}\sqrt{\frac{R_{max}(R_{max} \vee d)\log\paren{\frac{12K}{\delta}}}{n}} \\
    &\qquad + \tilde{c}_2 \left(R_{\max} + d\left(1+\sqrt{\frac{2\log(\frac{2n}{\delta})}{d}}\right)\right) n \varepsilon  <\frac{1}{2} \left(a - \frac{1}{2}\right),
\end{align*}
 where $\tilde{c}_1$, $\tilde{c}_2$ are universal constants. Then there exists a 2-layer transformer $\TF_\bTheta$ such that $\hat{D}_m^+ \le a$ holds with probability at least $1-\delta$. Moreover, $\TF_\bTheta$ falls within the class $\gF$ with parameters satisfying:
 \begin{align*}
     &D = O(d_0 + K_0), D^\prime \leq \tilde{O}\paren{K_0 R_{\max}\paren{R_{\max} + d_0} \varepsilon^{-1}},\\
     &M = O(1),\\
     &\log B_{\bTheta} \leq O\paren{K_0 R_{\max}\paren{R_{\max} + d_0}}. 
 \end{align*}
\end{theorem}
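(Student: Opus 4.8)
The plan is to instantiate the explicit two-block transformer of \cref{sec:TF_EM} (one attention-plus-MLP block for the E-step, one for the M-step), show that it reproduces one step of empirical-EM up to a controllable error, and chain this with the one-step guarantee \cref{theorem:gmm_empi_main_1}. In the first block, the softmax attention with the query/key/value choices of \cref{sec:TF_EM} produces at token $i$ exactly the responsibilities $\hat w_{ik}=\pi_k^{(0)}\phi(X_i;\mu_k^{(0)})/\sum_j\pi_j^{(0)}\phi(X_i;\mu_j^{(0)})$: the factor $\exp(-\tfrac12\|X_i\|^2)$ cancels in the softmax ratio, the mixing weight enters through the $\overline{\bpi}_{\log}$ slot of the query, and the $\|\mu_k^{(0)}\|^2$ correction through the $c_k$ slot; a two-layer ReLU MLP then stores $\widehat{\log}\hat w_{ik}$ coordinatewise via \cref{lem:ReLU_logarithm}. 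In the second block, one head with uniform attention averages $\hat w_{ik}$ over $i$ to output $\hat\pi_k^{\tt TF}=\tfrac1N\sum_i\hat w_{ik}$, which coincides with the exact M-step $\pi_k^{+}$ since the E-step was exact; a second head applies softmax to $(\widehat{\log}\hat w_{ik})_i$, obtaining attention weights proportional to $\exp(\widehat{\log}\hat w_{ik})\approx\hat w_{ik}$, and uses them to average the $X_i$ into $\hat\mu_k^{\tt TF}$; a final MLP writes $\widehat{\log}\hat\pi_k^{\tt TF}$ and $\widehat{\|\hat\mu_k^{\tt TF}\|^2}$, which are only needed by subsequent iterations and are therefore harmless here.

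Before the error analysis I would fix a high-probability boundedness event: by \cref{lem:gaussian concentration} and a union bound over $i\in[n]$, with probability at least $1-\delta/2$ one has $\|Z_i\|^2\le d(1+\sqrt{2\log(2n/\delta)/d})^2$ for all $i$; hence $\|X_i\|$ is bounded by $R_{\max}$ plus a term of order $d(1+\sqrt{2\log(2n/\delta)/d})$, and, using the separation condition \cref{eq:pop_separation_condition} and the initialization bounds of assumption~(\ref{ass:A1_copy}), every responsibility lies in $[1/A,1]$ with $\log A = O(R_{\max}^2+d+\log(\rho_\pi K/\delta))$. On this event \cref{lem:ReLU_logarithm} gives $\max_k|\widehat{\log}\hat w_{ik}-\log\hat w_{ik}|\le\varepsilon$ for all $i$; feeding these perturbed logits into the M-step softmax and applying \cref{cor:pseudo-lip of softmax} shows the M-step attention weights differ from the exact weights $\hat w_{ik}/\sum_{i'}\hat w_{i'k}$ by at most $e^{2\varepsilon}-1$ in each coordinate. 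Summing this bound over the $N$ tokens and multiplying by $\max_i\|X_i\|$ gives $\|\hat\mu_k^{\tt TF}-\mu_k^{+}\|\le \tilde{c}_2(R_{\max}+d(1+\sqrt{2\log(2n/\delta)/d}))\,n\varepsilon$, while $\hat\pi_k^{\tt TF}=\pi_k^{+}$ with no error.

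I would then invoke \cref{theorem:gmm_empi_main_1} with confidence parameter $\delta/2$ (which is why $\log(12K/\delta)$ appears in the hypothesis after the substitution $\delta\mapsto\delta/2$), obtaining $D_m^{+}\le\tfrac12+\varepsilon_{step1}$. Writing $\hat D_m^{+}=\max_k\big(\|\hat\mu_k^{\tt TF}-\mu_k^\ast\|\vee|\hat\pi_k^{\tt TF}-\pi_k^\ast|/\pi_k^\ast\big)$ and using the triangle inequality in each component, the first hypothesis of the theorem bounds $\tfrac12+\varepsilon_{step1}+\tilde{c}_2(\cdots)n\varepsilon$ below $a$, so $\hat D_m^{+}\le a$ on the intersection of the two events, which has probability at least $1-\delta$. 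The architectural bounds are read off the construction: $D=O(d_0+K_0)$ from the encoding \cref{eqn:refined input-format}; $M=O(1)$ since the two blocks use one and two heads; $D'=\widetilde{O}(K_0 R_{\max}(R_{\max}+d_0)\varepsilon^{-1})$ from the $O(K_0)$ coordinatewise copies of the $\log$- and square-approximations of \cref{lem:ReLU_logarithm} and \cref{lem:ReLU_square}, each needing $O((R_{\max}^2+d_0)\varepsilon^{-1})$ neurons; and $\log B_\bTheta=O(K_0 R_{\max}(R_{\max}+d_0))$, dominated by the coefficient bound $|a_j|\le 2A$ in those lemmas, the $\bQ,\bK,\bV$ entries being only polynomially large.

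The step I expect to be the main obstacle is the M-step error propagation: one must push the $\varepsilon$-level error of $\widehat{\log}\hat w$ through the exponential and then through the softmax renormalization, which needs a lower bound on the normalizing mass $\sum_i\hat w_{ik}=N\pi_k^{+}$ — here the one-step population bound $\pi_k^{+}\ge\pi_k^\ast/2$ from \cref{theorem:gmm_pop_main_1} is what keeps this of order $N\pi_{\min}$ — and since \cref{cor:pseudo-lip of softmax} only delivers an $\ell_\infty$ bound on the weight perturbation, summing over the $N$ tokens is what produces the (apparently unavoidable) factor $n$. A secondary technical point is verifying that the $d\le d_0$, $K\le K_0$ zero-padding of \cref{eqn:refined input-format} is genuinely inert, so that the dummy coordinates, on which $\widehat{\log}$ is evaluated at $0$, are never routed into an attention or MLP output feeding the readout.
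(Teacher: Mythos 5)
Your proposal is correct and follows essentially the same route as the paper's proof: decompose $\hat D_m^+ \le D_m^+ + \max_k(\|\hat\mu_k^+-\mu_k^+\|\vee|\hat\pi_k^+-\pi_k^+|/\pi_k^\ast)$, observe that the E-step softmax attention computes the responsibilities exactly so the $\pi$-update incurs no error, control the $\mu$-update error by pushing the $\varepsilon$-accuracy of $\widehat{\log}$ (\cref{lem:ReLU_logarithm} on the high-probability event from \cref{lem:gaussian concentration}) through \cref{cor:pseudo-lip of softmax} and summing over the $n$ tokens, and then chain with \cref{theorem:gmm_empi_main_1} at confidence $\delta/2$. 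The architectural bounds you read off the construction also match the paper's.
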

\begin{proof}
    Recall that $\hat{D}_m^+ = \max_{i\in [K]} \left(\|\hat{\mu}_i^+ - \mu_i^*\|\vee |\hat{\pi}_i^+ - \pi_i^*| / \pi_i^* \right) $. Thus 
    \begin{align*}
        \hat{D}_m^+ &\leq \max_{i\in [K]} \left(\|\mu_i^+ - \mu_i^*\|\vee |\pi_i^+ - \pi_i^*| / \pi_i^* \right) + \max_{i\in [K]} \left(\|\hat{\mu}_i^+ - \mu_i^+\|\vee |\hat{\pi}_i^+ - \pi_i^+| / \pi_i^* \right) \\
        & = D_m^+ + \max_{i\in [K]} \left(\|\hat{\mu}_i^+ - \mu_i^+\|\vee |\hat{\pi}_i^+ - \pi_i^+| / \pi_i^* \right)
    \end{align*}
    We first claim that with probability at least $1-\delta/2$,
    \begin{align}
    \label{eqn:approx bound in first step}
        \max_{i\in [K]} \left(\|\hat{\mu}_i^+ - \mu_i^+\|\vee |\hat{\pi}_i^+ - \pi_i^+| / \pi_i^* \right) \leq \tilde{c}_2 \left(R_{\max} + d\left(1+\sqrt{\frac{2\log(\frac{2n}{\delta})}{d}}\right)\right) n \varepsilon.
    \end{align}  
    Then by \cref{theorem:gmm_empi_main_1}, with probability at least $1-\delta$, we have
    \begin{align*}
        \hat{D}_m^+ &\leq D_m^+ + \max_{i\in [K]} \left(\|\hat{\mu}_i^+ - \mu_i^+\|\vee |\hat{\pi}_i^+ - \pi_i^+| / \pi_i^* \right)\\
        &\leq \frac{1}{2} + \frac{\tilde{c}}{(1-a)\pi_{min}}\sqrt{\frac{R_{max}(R_{max} \vee d)\log\paren{\frac{12K}{\delta}}}{n}} + \tilde{c}_2 \left(R_{\max} + d\left(1+\sqrt{\frac{2\log(\frac{2n}{\delta})}{d}}\right)\right) n \varepsilon \\
        &\leq a.
    \end{align*}
    Now we only need to prove \cref{eqn:approx bound in first step}.
    By the construction in \cref{sec:TF_EM}, we can see that $w_{\ell i}$ in first step can be well calculated, thus $|\hat{\pi}_i^+ - \pi_i^+| = 0$ and the error comes only from the calculation of $\sets{\hat{\mu}_i^+}$. Recall that $\mu_i^+ = \frac{\sum_{\ell=1}^{n} w_{\ell i} X_\ell}{\sum_{\ell=1}^{n} w_{\ell i}}$
    and 
    \[
    \hat{\mu}_i^+ = \frac{\sum_{\ell=1}^{n} \exp\paren{\widehat{\log} ({w}_{\ell i})} X_\ell}{\sum_{l=1}^{n} \exp\paren{\widehat{\log}({w}_{\ell i})}}.
    \]
    Recall that 
    \[
        w_{\ell i} = \frac{\pi_i \exp(-\|X_\ell - \mu_i\|^2/2 )}{\sum_{j=1}^K \pi_j \exp(-\|X_\ell - \mu_j\|^2/2)} = \frac{1}{1 + \sum_{j\neq i} \frac{\pi_j}{\pi_i} \exp\paren{\paren{\mu_j - \mu_i}^\top X_l-\|\mu_j\|^2/2 + \|\mu_i\|^2/2}}.
    \]
    By the initial condition \cref{eq:pop_initialization_weights} and \cref{eq:pop_initialization_means}, we have
    \[
    \|\mu_j - \mu_i\|\leq R_{\max} + 2*\frac{1}{16}R_{\min} = O(R_{\max}),~ \|\mu_j\|^2 = O(R_{\max}^2).
    \]
    Since $X_\ell \overset{\text{i.i.d.}}{\sim}  \text{GMM}(\pi^*, \mu^*) $, using \cref{lem:gaussian concentration}, with probability at least $1-\delta/2$, we have
    \[
    \sup_{\ell\in[n]}\|X_\ell\| \leq R_{\max} + d\paren{1+\sqrt{\frac{2\log(\frac{2n}{\delta})}{d}}} = \tilde{O}(R_{\max} + d).
    \]
    Combine all things together, we get that with probability at least $1-\delta/2$,
    \[
        w_{\ell i}^{-1} \leq  \exp\paren{\tilde{O}(K_0R_{\max}(R_{\max} + d_0))}, ~\forall \ell \in [n] \text{ and } i \in [K].
    \]
    Thus taking $A = \exp\paren{\tilde{O}(K_0R_{\max}(R_{\max} + d_0))}$ and and $\delta = \varepsilon$ in \cref{lem:ReLU_logarithm}, we can get $|\log - \widehat{\log} | \big|_{[1/A, A]} \leq \varepsilon$. Then by \cref{lem:lip of log-softmax}, we have
    {\allowdisplaybreaks
    \begin{align*}
        \|\hat{\mu}_i^+ - \mu_i^+\|  &= \left\|\frac{\sum_{\ell=1}^{n} \exp\paren{\widehat{\log} ({w}_{\ell i})} X_\ell}{\sum_{l=1}^{n} \exp\paren{\widehat{\log}({w}_{\ell i})}} - \frac{\sum_{\ell=1}^{n} \exp\paren{\log w_{\ell i}} X_\ell}{\sum_{\ell=1}^{n} \exp\paren{\log w_{\ell i}}}\right\| \\
        &\leq \sum_{\ell = 1}^n  \left\|\frac{ \exp\paren{\widehat{\log} ({w}_{\ell i})} X_\ell}{\sum_{l=1}^{n} \exp\paren{\widehat{\log}({w}_{\ell i})}} - \frac{\exp\paren{\log w_{\ell i}} X_\ell}{\sum_{\ell=1}^{n} \exp\paren{\log w_{\ell i}}}\right\| \\
        &\leq \sup_{\ell \in [n]} \|X_\ell\| \paren{ \sum_{\ell = 1}^{n} \left|\frac{ \exp\paren{\widehat{\log} ({w}_{\ell i})}}{\sum_{l=1}^{n} \exp\paren{\widehat{\log}({w}_{\ell i})}} - \frac{\exp\paren{\log w_{\ell i}} }{\sum_{\ell=1}^{n} \exp\paren{\log w_{\ell i}}}\right| } \\
        &\leq n \paren{R_{\max} + d +\sqrt{{2d\log\left(\frac{2n}{\delta}\right)}}} \paren{\exp\paren{ 2 \left\|\paren{\widehat{\log}({w}_{\ell i})}_{\ell} - \paren{{\log}({w}_{\ell i})}_{\ell}\right\|_{\infty}} -1} \\
        &\leq 4 n \paren{R_{\max} + d +\sqrt{{2d\log\left(\frac{2n}{\delta}\right)}}} \varepsilon, ~\forall i \in [K].
    \end{align*}    
    }
Thus \cref{eqn:approx bound in first step} is proved. The parameter bounds can be directly computed by the construction in \cref{sec:TF_EM} and \cref{lem:ReLU_logarithm}.

    
\end{proof}

\begin{theorem}
\label{theorem:gmm_tf_main_2}   
Fix $0< \delta, \beta < 1$ and $1/2<a<1$. Let $C \ge 128$ be a large enough universal constant. Suppose the separation condition \cref{eq:pop_separation_condition} holds and suppose the initialization parameter input to transformer satisfies ${D}_m \leq a$. If $n$ is suffcient large and $K_0 \varepsilon \leq 1/100$ sufficient small such that
\begin{align*}
    \epsilon(n,\varepsilon,\delta,a) &:= 
    \frac{\tilde{c}_1}{\left(1-a\right)\pi_{\min}} \sqrt{\frac{Kd\log(\frac{\tilde{C}n}{\delta})}{n}} + \tilde{c}_2 \paren{\frac{1}{\pi_{\min}} + n\paren{R_{\max} + d+\sqrt{2d\log\left(\frac{2n}{\delta}\right)}}}  \varepsilon  \\
    &<a(1-\beta),
\end{align*}
 where $\tilde{c}_1$, $\tilde{c}_2$ are universal constants, $\tilde{C} = 144 K^2(\sqrt{d} + 2R_{max} + \frac{1}{1-a})^2$. Then there exists a 2-layer transformer $\TF_\bTheta$ such that 
 $$
    \hat{D}_m^+ \le \beta {D}_m + \epsilon(n,\varepsilon,\delta,a) \le a
 $$ 
uniformly holds with probability at least $1-\delta$. Moreover, $\TF_\bTheta$ falls within the class $\gF$ with parameters satisfying:
 \begin{align*}
     &D = O(d_0 + K_0), D^\prime \leq \tilde{O}\paren{K_0 R_{\max}\paren{R_{\max} + d_0} \varepsilon^{-1}}, \\
     &M = O(1),\\
     &\log B_{\bTheta} \leq \tilde{O}\paren{ K_0 R_{\max}\paren{R_{\max} + d_0}}. 
 \end{align*}
\end{theorem}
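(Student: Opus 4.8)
\textbf{Proof proposal for \cref{theorem:gmm_tf_main_2}.}
The plan is to follow the template of \cref{theorem:gmm_tf_main_1}, but replace the first-step population guarantee \cref{theorem:gmm_empi_main_1} with the uniform linear-contraction guarantee \cref{theorem:gmm_empi_main_2}. For the two-layer block constructed in \cref{sec:TF_EM} I start from the triangle split
\[
\hat{D}_m^+ \;\le\; D_m^+ \;+\; \max_{i\in[K]}\paren{\norm{\hat\mu_i^+-\mu_i^+}\;\vee\; |\hat\pi_i^+-\pi_i^+|/\pi_i^\ast},
\]
where $(\mu_i^+,\pi_i^+)$ is the exact empirical-EM update of the input parameters and $(\hat\mu_i^+,\hat\pi_i^+)$ the transformer output. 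By \cref{theorem:gmm_empi_main_2}, on $\gD_a=\set{D_m\le a}$ the first term obeys $D_m^+\le\beta D_m+\varepsilon_{unif}$ uniformly with probability at least $1-\delta/2$, where $\varepsilon_{unif}$ is exactly the first summand of $\epsilon(n,\varepsilon,\delta,a)$ (the mild mismatch in the constant inside the logarithm is absorbed by the $\delta/2$ split). It therefore remains to bound the transformer approximation error by the second summand of $\epsilon(n,\varepsilon,\delta,a)$, uniformly over $\gD_a$.

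The one genuinely new ingredient relative to \cref{theorem:gmm_tf_main_1} is that the E-step is no longer exact: the $\log\bpi$ and squared-norm slots of the input tokens carry the $\varepsilon$-errors left by the preceding M-step MLP, so the reconstructed responsibilities $\hat w_{\ell i}$ differ from the exact $w_{\ell i}$. To control this I note that on $\gD_a$ one has $\norm{\mu_i}\le\norm{\mu_i^\ast}+a\le R_{\max}+1$ and $\pi_j/\pi_i\le\rho_\pi(1+a)/(1-a)$, while a union bound through \cref{lem:gaussian concentration} gives $\sup_{\ell\in[n]}\norm{X_\ell}\le R_{\max}+d+\sqrt{2d\log(2n/\delta)}$ with probability at least $1-\delta/2$. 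Hence the E-step logits $-X_\ell^\top\mu_j+\log\pi_j+\tfrac12\norm{\mu_j}^2$ are uniformly bounded, so $w_{\ell i}^{-1}\le A:=\exp\paren{\tilde{O}(K_0R_{\max}(R_{\max}+d_0))}$ for all $\ell,i$ and all parameters in $\gD_a$. Choosing accuracy $\varepsilon$ in \cref{lem:ReLU_logarithm} over $[1/A,A]$ and in \cref{lem:ReLU_square} over $[-(R_{\max}+1),R_{\max}+1]$ makes every call of the $\widehat{\log}$ and squared-norm modules in the block $\varepsilon$-accurate. Propagating the resulting logit perturbations through the softmax with \cref{cor:pseudo-lip of softmax} (using $e^{2x}-1\le4x$ for $x$ small, which is where the condition $K_0\varepsilon\le1/100$ is used) yields $|\hat w_{\ell i}-w_{\ell i}|=O(\varepsilon)$, hence $|\hat\pi_i^+-\pi_i^+|=O(\varepsilon)$ and $|\hat\pi_i^+-\pi_i^+|/\pi_i^\ast=O(\varepsilon/\pi_{\min})$; and, repeating the $\mu$-error computation of \cref{theorem:gmm_tf_main_1} with the M-step softmax taken over $\widehat{\log}\hat w_{\ell i}$ (the normalizer again cancelling), $\norm{\hat\mu_i^+-\mu_i^+}\le\sup_{\ell}\norm{X_\ell}\cdot n\cdot O(\varepsilon)=O\paren{n\paren{R_{\max}+d+\sqrt{2d\log(2n/\delta)}}\varepsilon}$. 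Adding the two contributions reproduces the second summand of $\epsilon(n,\varepsilon,\delta,a)$.

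Combining the two bounds via a final union bound yields, with probability at least $1-\delta$, $\hat{D}_m^+\le\beta D_m+\epsilon(n,\varepsilon,\delta,a)$; since $D_m\le a$ and $\epsilon(n,\varepsilon,\delta,a)<a(1-\beta)$ this is at most $\beta a+a(1-\beta)=a$, as claimed. The membership $\TF_\bTheta\in\gF$ with the stated bounds follows by direct bookkeeping from the construction in \cref{sec:TF_EM} together with \cref{lem:ReLU_logarithm} and \cref{lem:ReLU_square}, exactly as for \cref{theorem:gmm_tf_main_1}: the block uses two attention layers with $O(1)$ heads, so $D=O(d_0+K_0)$ and $M=O(1)$; the hidden width is dominated by the $O(K_0d_0)$ coordinate-wise ReLU approximations of $\log$ and of the squared norm, each of size $\tilde{O}(\varepsilon^{-1})$ times the relevant log-range, giving $D'\le\tilde{O}(K_0R_{\max}(R_{\max}+d_0)\varepsilon^{-1})$; and the largest weights are the $\Theta(R_{\max}^2)$ attention entries and the $O(A)$ coefficient bound of \cref{lem:ReLU_logarithm}, giving $\log B_\bTheta\le\tilde{O}(K_0R_{\max}(R_{\max}+d_0))$.

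\textbf{Main obstacle.} The delicate point is \emph{uniformity} over $\gD_a$: a single parameter vector $\bTheta$ must realize the empirical-EM step for every parameter in $\gD_a$, which forces the $\widehat{\log}$ module to be uniformly accurate on one fixed range $[1/A,A]$, and it must keep the softmax-Lipschitz errors from compounding across the two attention layers — precisely what the smallness condition on $\varepsilon$ buys. One then has to intersect the three high-probability events (the uniform concentration behind \cref{theorem:gmm_empi_main_2}, the bound on $\sup_\ell\norm{X_\ell}$, and the auxiliary concentration in the $\mu$-step) with only a constant loss in the failure probability, and check that $\beta$ and the universal constant $C$ can be chosen independently of $d,K,n$ as in \cref{rmk:gmm_pop_main_2}.
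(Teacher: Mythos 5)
Your proposal is correct and follows essentially the same route as the paper: the same triangle-inequality split of $\hat{D}_m^+$ into the exact empirical-EM contraction (handled by \cref{theorem:gmm_empi_main_2} on half the failure probability, with the constant $\tilde{C}$ doubled accordingly) plus a transformer approximation term, the same control of the perturbed responsibilities via \cref{lem:ReLU_logarithm}, \cref{lem:ReLU_square} and \cref{cor:pseudo-lip of softmax}, the same reuse of the $\mu$-error computation from \cref{theorem:gmm_tf_main_1}, and the same parameter bookkeeping. The only cosmetic difference is that the paper tracks the two $\widehat{\log}$ approximations (for $\log\pi_i$ on $[1/A,A]$ with $A=((1-a)\pi_{\min})^{-1}$, and for $\log w_{\ell i}$ with the larger $A$) separately and bounds $\|\hat{\bw}_\ell-\bw_\ell\|_\infty\le\exp(3\varepsilon)-1\le 6\varepsilon$ explicitly, which your $O(\varepsilon)$ gloss captures.
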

\begin{proof}
    Similar to the proof of \cref{theorem:gmm_tf_main_1}, using \cref{theorem:gmm_empi_main_2}, we only need to prove that with probability at least $1-\delta/2$, 
    \begin{align}
    \label{eqn:approx bound in second step}
        \max_{i\in [K]} \left(\|\hat{\mu}_i^+ - \mu_i^+\|\vee |\hat{\pi}_i^+ - \pi_i^+| / \pi_i^* \right) \leq \tilde{c}_2 \paren{\frac{1}{\pi_{\min}} + n\paren{R_{\max} + d+\sqrt{2d\log\left(\frac{2n}{\delta}\right)}}}  \varepsilon.
    \end{align}  
    Define $u_\ell = (u_{\ell,1}, \cdots, u_{\ell,K})^{\top}$, $\hat{u}_\ell = (\hat{u}_{\ell,1}, \cdots, \hat{u}_{\ell,K})^{\top}$, where $u_{\ell,i} = \log \pi_i + \mu_i^\top X_\ell - 1/2\|\mu_i\|^2$ and $\hat{u}_{\ell,i} = \widehat{\log} \pi_i + \mu_i^\top X_\ell - 1/2\widehat{\|\mu_i\|^2}$ By the construction in \cref{sec:TF_EM} and \cref{cor:pseudo-lip of softmax}, we have
    \begin{align*}
        \|\hat{\bw}_{\ell} - \bw_{\ell} \|_\infty  = \left\| \frac{e^{\hat{u}_\ell}}{\|e^{\hat{u}_\ell}\|_1} - \frac{e^{u_\ell}}{\|e^{u_\ell}\|_1}\right\|_\infty \leq \exp\paren{2\|\hat{u}_\ell - u_\ell\|_\infty} - 1 ,~\forall \ell \in [n].
    \end{align*}
    Now taking $\delta = \varepsilon$, $A = \paren{(1-a)\pi_{\min}}^{-1}$ in \cref{lem:ReLU_logarithm} and $\delta = \varepsilon / K$, $A = (R_{\max} + a)^2$  in \cref{lem:ReLU_square}, we have $\|\hat{u}_\ell - u_\ell\|_\infty \leq 3\varepsilon/2$, hence
    \begin{align*}
         \|\hat{\bw}_{\ell} - \bw_{\ell} \|_\infty \leq \exp\paren{2\|\hat{u}_\ell - u_\ell\|_\infty} - 1 \leq \exp(3\varepsilon) - 1 \leq 6\varepsilon, ~\forall \ell \in [n].
    \end{align*}
    Then by the construction in \cref{sec:TF_EM}, we have 
    \begin{align}
    \label{eqn:approx of pi}
    |\hat{\pi}_i^+ - \pi_i^+| \leq 6\varepsilon, ~\forall i \in [K].
    \end{align}
    For the term $\|\hat{\mu}_i^+ - \mu_i^+\|$, we can calculate it similar to the proof of \cref{theorem:gmm_tf_main_1}. First, we recall that with probability at least $1-\delta/2$,
    \[
        w_{\ell i}^{-1} \leq  \exp\paren{\tilde{O}(K_0R_{\max}(R_{\max} + d_0))},~ \forall \ell \in [n] \text{ and } i \in [K].
    \]
    Similarly, for $\hat{w}_{\ell, i}$, we can also get(just calculate again) that with probability at least $1-\delta/2$,
     \[
        \hat{w}_{\ell i}^{-1} \leq  \exp\paren{\tilde{O}(K_0R_{\max}(R_{\max} + d_0))},~ \forall \ell \in [n] \text{ and } i \in [K].
    \]
    Then following the same argument in \cref{theorem:gmm_tf_main_1}, taking $A = \exp\paren{\tilde{O}(K_0R_{\max}(R_{\max} + d_0))}$ and and $\delta = \varepsilon$ in \cref{lem:ReLU_logarithm}, we have also
    \begin{align}
    \label{eqn:approx of mu}
     \|\hat{\mu}_i^+ - \mu_i^+\| \leq 4 n \paren{R_{\max} + d +\sqrt{{2d\log\left(\frac{2n}{\delta}\right)}}} \varepsilon, ~\forall i \in [K].   
    \end{align}
    Combining \cref{eqn:approx of pi} and \cref{eqn:approx of mu}, \cref{eqn:approx bound in second step} is proved. The parameter bounds can be directly computed by the construction in \cref{sec:TF_EM}, \cref{lem:ReLU_logarithm}, \cref{lem:ReLU_square} and the parameter $A$, $\delta$ taken in the proof.
    
\end{proof}

\subsection{Proof of \texorpdfstring{\cref{thm: EM approx}}{Theorem D.1}}
First, by \cref{theorem:gmm_tf_main_1} and the first condition in \cref{thm: EM approx} , there exist a $2$-layer transformer $\TF_{\bTheta_1}$ such that
\begin{align}\label{eqn:proof-initial-1}
    D_{\bTheta_1}^{\tt TF} \leq a,
\end{align}
holds with probability at least $1-\delta/2$. Then using \cref{theorem:gmm_pop_main_2}, \cref{eqn:proof-initial-1} and the second condition in \cref{thm: EM approx}, there $2$-layer transformer $\TF_{\bTheta_2}$ such that
\begin{align}
    D_{\bTheta_1 \cup \bTheta_2}^{\tt TF} \le \beta D_{\bTheta_1}^{\tt TF} + \epsilon(n,\varepsilon,\delta/2,a) \le a,\nonumber
\end{align}
uniformly holds with probability at least $1-\delta/2$. 
Denote as $\bTheta_2^L = \cup_{\ell\in[L]}\bTheta_2$. Thus, for any $L\in\N$, by reduction, we have
\begin{align}
    D_{\bTheta_1 \cup \bTheta_2^L}^{\tt TF} \le \beta^L D_{\bTheta_1}^{\tt TF} + \paren{1+\beta+\cdots+\beta^{L-1}}\epsilon(n,\varepsilon,\delta/2,a) , \nonumber
\end{align}
uniformly holds with probability at least $1-\delta/2$.
Combine all things together, we have, for any $L\in\N$,
\begin{align*}
    D_{\bTheta_1 \cup \bTheta_2^L}^{\tt TF} &\le \beta^L D_{\bTheta_1}^{\tt TF} + \paren{1+\beta+\cdots+\beta^{L-1}}\epsilon(n,\varepsilon,\delta/2,a)\\
    &\le \beta^L a + \frac{1}{1-\beta} \epsilon(n,\varepsilon,\delta/2,a)
\end{align*}
holds with probability at least $1-\delta$ (Note that the definitions of $\epsilon\paren{\cdot}$ in \cref{theorem:gmm_tf_main_2} and \cref{thm: EM approx} differ slightly). The parameter bounds can be directly computed by \cref{theorem:gmm_tf_main_1} and \cref{theorem:gmm_tf_main_2}. The theorem is proved.

\section{Formal statement of \texorpdfstring{\cref{thm:tensor approx}}{Theorem 2}  and proofs}
\label{secapp:tensor}
Following \cref{secapp:EM}, we implement $\operatorname{Readin}$ as an identity transformation and define $\operatorname{Readout}$ to extract targeted matrix elements hence they are both fixed functions.

\subsection{Formal statement of \texorpdfstring{\cref{thm:tensor approx}}{Theorem 2}}
\label{secapp:state_tensor}
In this section, we give the formal statement of \cref{thm:tensor approx}. First, we need to introduce the embeddings of the transformer. Let $\rmT$ be the matrix representation of the cubic tensor $T$, which is 
\begin{align*}
    \rmT \defeq \begin{bmatrix}
        \bt_1,\bt_2,\cdots,\bt_d
    \end{bmatrix}
    \defeq
    \begin{bmatrix}
        T_{:,1,1} & T_{:,2,1} & \cdots & T_{:,d,1}\\
        T_{:,1,2} & T_{:,2,2} & \cdots & T_{:,d,2}\\
        \vdots &\vdots & \ddots &\vdots\\
        T_{:,1,d} & T_{:,2,d} & \cdots & T_{:,d,d}
    \end{bmatrix} \in \R^{d^2\times d},
\end{align*}
where $T_{:,i,j} = \paren{T_{1,i,j},T_{2,i,j},\cdots,T_{d,i,j}} \in \R^d$, $i,j\in[d]$. For dimension adaptation, we assume $d\leq d_0$. The augment version of $\rmT$ is defined as
\begin{align}
\label{eqn:augment-T}
    \overline{\rmT} \defeq \begin{bmatrix}
        \overline{\bt}_1,\overline{\bt}_2,\cdots,\overline{\bt}_{d_0} 
    \end{bmatrix}
    \defeq
    \begin{bmatrix}
        \overline{T}_{:,1,1} & \overline{T}_{:,2,1} & \cdots & \overline{T}_{:,d_0,1}\\
        \overline{T}_{:,1,2} & \overline{T}_{:,2,2} & \cdots & \overline{T}_{:,d_0,2}\\
        \vdots &\vdots & \ddots &\vdots\\
        \overline{T}_{:,1,d_0} & \overline{T}_{:,2,d_0} & \cdots & \overline{T}_{:,d_0,d_0}
    \end{bmatrix}
    \in \R^{d_0^2\times d_0},
\end{align}
where $\overline{T}_{:,i,j} \in \R^{d_0}$. If $i \leq d$ and $j\leq d$, $\overline{T}_{:,i,j} = \brac{T_{:,i,j}^\top, \bzero_{d_0-d}^\top}^\top$; Else $\overline{T}_{:,i,j} =\bzero_{d_0}$.
We construct the following input sequence:
\begin{align}
\label{eqn:tensor-input-format}
    \bH = \begin{bmatrix}
    \overline{\bt}_1 & \overline{\bt}_2 & \dots & \overline{\bt}_{d}  \\
    \bp_1 & \bp_2 & \dots & \bp_{d} 
    \end{bmatrix} \in \R^{D\times d}, ~
    \bp_i = \begin{bmatrix}
        \overline{v}^{(0)}\\ \be_i \\ 1\\ d \\ \bzero_{\tilde{D}}
    \end{bmatrix},
\end{align}
where $\overline{\bt}_i\in \R^{d_0^2}$ is defined as \cref{eqn:augment-T}, $\overline{v}^{(0)} = \brac{v^{(0)\top},\bzero_{d_0-d}^\top}^\top \in \R^{d_0}$, $\be_i \in \R^{d_0}$ denotes the $i$-th standard unit vector in $\R^{d_0}$. We choose $D = O(d_0^2)$ and $\tilde{D} = D - d_0^2 - 2d_0-2$ to get the encoding above. 
Then we give a rigorous definition of ReLU-activated transformer following \cite{bai2023tfstats}.
\begin{definition}[ReLU-attention layer]
\label{def:attention-ReLU}
A (self-)attention layer activated by ReLU function with $M$ heads is denoted as $\Attn_{\bAtt}(\cdot)$ with parameters $\bAtt=\sets{ (\bV_m,\bQ_m,\bK_m)}_{m\in[M]}\subset \R^{D\times D}$. On any input sequence $\bH\in\R^{D\times N}$,
\begin{talign}
    \wt{\bH} = \Attn_{\bAtt}^R(\bH)\defeq \bH + \frac{1}{N}\sum_{m=1}^M (\bV_m \bH) \sigma \paren{ (\bK_m\bH)^\top (\bQ_m\bH) } \in \R^{D\times N}, \nonumber
\end{talign}
In vector form,
\begin{talign*}
    \wt{\bh}_i = \brac{\Attn_{\bAtt}^R(\bH)}_i = \bh_i + \sum_{m=1}^M \frac{1}{N}\sum_{j=1}^{N} \sigma \paren{\paren{\bQ_m \bh_i}^\top \paren{\bK_m \bh_j}} \bV_m \bh_j.
\end{talign*}
Here $\sigma(x) = x \vee 0$ denotes the ReLU function.
\end{definition}
The MLP layer is the same as \cref{def:mlp}. The ReLU-activated transformer is defined as follows.
\begin{definition}[ReLU-activated transformer]
    An $L$-layer transformer, denoted as $\TF_{\bTheta}^R(\cdot)$, is a composition of $L$ ReLU-attention layers each followed by an MLP layer:
\begin{align*}
    \TF_{\bTheta}^R(\bH) = \MLP_{\bthetamlp^{(L)}}\paren{ \Attn_{\bAtt^{(L)}}^R\paren{\cdots \MLP_{\bthetamlp^{(1)}}\paren{ \Attn_{\bAtt^{(1)}}^R\paren{\bH}} }}.
\end{align*}
Above, the parameter $\bTheta=(\bAtt^{(1:L)},\bthetamlp^{(1:L)})$ consists of the attention layers $\bAtt^{(\ell)}=\sets{ (\bV^{(\ell)}_m,\bQ^{(\ell)}_m,\bK^{(\ell)}_m)}_{m\in[M^{(\ell)}]}\subset \R^{D\times D}$ and the MLP layers $\bthetamlp^{(\ell)}=(\bW^{(\ell)}_1,\bW^{(\ell)}_2)\in\R^{D^{(\ell)}\times D}\times \R^{D\times D^{(\ell)}}$. 
\end{definition}
Similar to \cref{sec:thm statement}, We consider the following function class of transformer. 
\begin{align*}
    \gF \defeq \gF(L, D, D^\prime, M, B_\bTheta) = \set{\TF_{\bTheta}^R,  \nrmp{\bTheta} \leq B_\bTheta, D^{(\ell)} \leq D^\prime, M^{\ell} \leq M,  \ell \in[L]}. \nonumber
\end{align*}

Now we can give the formal statement of \cref{thm:tensor approx}.
\begin{theorem} [Formal version of \cref{thm:tensor approx}]
    \label{thm:tensor-approx-formal}
    There exists a transformer $\TF_\bTheta$ with  ReLU activation such that for any $d\leq d_0$, $T\in \R^{d\times d\times d}$ and $v^{(0)} \in \R^d$,  given the encoding \cref{eqn:tensor-input-format}, $\TF_\bTheta$ implements $L$ steps of \cref{eqn:tensor iteration} exactly. Moreover, $\TF_\bTheta$ falls within the class $\gF$ with parameters satisfying:
    \begin{align*}
        D = D^\prime = O(d_0^2), M = O(d_0),\log B_{\bTheta} \leq O(1). 
    \end{align*}
\end{theorem}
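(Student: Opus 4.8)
The plan is to realise a single step of the iteration \cref{eqn:tensor iteration}, $v^{(j+1)}=T(I,v^{(j)},v^{(j)})$, by a short block of ReLU‑attention and MLP layers, and then stack $L$ such blocks. The key structural fact I would exploit is that, in contrast to softmax attention, a ReLU‑attention head outputs $\frac1N\sum_j\sigma\!\big((\bQ\bh_i)^\top(\bK\bh_j)\big)\bV\bh_j$, which — after pairing each head with a sign‑flipped twin, using $x=\sigma(x)-\sigma(-x)$ — is an \emph{exact} form that is bilinear in $(\bh_i,\bh_j)$ through the score and linear in $\bh_j$ through the value. Since the target $v^{(j+1)}_a=\sum_{m,n\in[d]}T_{a,m,n}v^{(j)}_mv^{(j)}_n$ is cubic in the hidden state, one head cannot produce it in a single shot; I would therefore split each step into three consecutive sub‑blocks, using the $d_0^2$‑dimensional slice encoding of \cref{eqn:tensor-input-format} (whence $D=O(d_0^2)$) and $O(d_0)$ heads per block.

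First, an \textbf{extraction} sub‑block writes the vector $v^{(j)}_m\be_m\in\R^{d_0}$ (pre‑scaled by the number of tokens $d$) into a scratch slot of token $m$. Here the score $(\be_m)_k\,v^{(j)}_k$, with $\be_m$ read from the query token and $v^{(j)}$ read from an arbitrary key token, does not depend on the key, so the head merely performs coordinate selection; the sign of $v^{(j)}_m$ is recovered by the twin head, and the factor $d$ is supplied by routing the coordinate of $\bp_j$ that stores the value $d$ through the value matrix — so every parameter matrix stays a bounded selection/routing matrix and $\nrmp{\bTheta}=O(1)$. Second, a \textbf{contraction} sub‑block computes $u_m:=T(I,\be_m,v^{(j)})=\sum_n T_{:,m,n}v^{(j)}_n\in\R^{d_0}$ into token $m$: with $v^{(j)}_n\be_n$ now linearly available in token $n$, the score $(T_{a,m,1},\dots,T_{a,m,d_0})^\top(v^{(j)}_n\be_n)=T_{a,m,n}v^{(j)}_n$ is genuinely bilinear (query carries the slice $\bt_m$, key carries the extracted vector), and summing over the $d$ key tokens, with the sign‑twin trick, yields $\sum_nT_{a,m,n}v^{(j)}_n$ exactly; this uses $2d_0$ heads indexed by $(a,\text{sign})$. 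Third, an \textbf{aggregation} sub‑block forms $v^{(j+1)}=\sum_m v^{(j)}_m u_m$ in the $v$‑slot of every token, using a score proportional to the extracted scalar $v^{(j)}_m$ and the value $u_m$, together with an auxiliary head that adds $-v^{(j)}$ so that after the residual update the $v$‑slot holds exactly $v^{(j+1)}$. The MLP layers only do residual bookkeeping and zero out the scratch slots between steps, which is exact because $x\mapsto\pm x$ is realisable by ReLU.

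It then remains to check dimension adaptation and the claimed parameters. When $d<d_0$, the augmented tensor and the augmented iterate are zero‑padded, so every extra summand in $\sum_{n\in[d_0]}$ vanishes and the block outputs the correct $d$‑dimensional update; the token count is $N=d$, and the $1/N$ prefactor is cancelled by the embedded value $d$ exactly as above. Counting, each step costs a constant number of attention–MLP blocks with $D=D'=O(d_0^2)$, $M=O(d_0)$ heads, and $\nrmp{\bTheta}=O(1)$, so stacking $L$ copies gives $\TF_\bTheta\in\gF$ with the stated bounds; $\operatorname{Readin}$ is the identity and $\operatorname{Readout}$ reads the $v$‑slot of the first token, returning $v^{(L)}$ exactly.

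The step I expect to be the crux is precisely the degree mismatch flagged above: the answer is cubic in the tokens, yet the encoding \cref{eqn:tensor-input-format} packs the full vector $v^{(j)}$ and the positional indicator $\be_m$ into the \emph{same} token, so $v^{(j)}_m$ is only ``degree‑$2$ accessible'' and none of the three factors $T_{a,m,n}\cdot v^{(j)}_m\cdot v^{(j)}_n$ can be multiplied inside a single head. The resolution — and the technical heart of the construction — is the extraction sub‑block, which turns the cross‑token bilinearity of the ReLU score (query supplies $\be_m$, key supplies $v^{(j)}$) into a linear read‑out of $v^{(j)}_m$ from token $m$; once that is in place, contraction and aggregation are honest bilinear/linear operations, and the remaining delicacy is purely one of exactness — eliminating ReLU clipping via sign‑twins and cancelling the $1/N$ normalisation through the embedded constant $d$ rather than through large weights.
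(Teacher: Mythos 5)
Your construction is correct and is essentially the paper's: the same extraction trick (positional indicator $\be_i$ as query against the stored iterate as key, with the embedded constant $d$ cancelling the $1/N$ normalisation), the same sign-twin device $x=\sigma(x)-\sigma(-x)$ to turn ReLU heads into exact bilinear ones, and the same zero-padding argument for $d<d_0$. The only difference is organisational: the paper merges your contraction and aggregation sub-blocks into a single attention layer with $d_0+1$ heads indexed by the middle tensor index $m$ (head $m$ scores $v^{(j)}_m\cdot d\,v^{(j)}_{j'}$ and emits the slice $T_{:,j',m}$ read from the key token, the extra head subtracting $v^{(j)}$), which also lets it extract only the \emph{scalar} $d\,v^{(j)}_{j'}$ into a fixed slot rather than the vector $v^{(j)}_m\be_m$ --- worth noting, since a single head's value direction cannot depend on the query token, so your vector-valued extraction genuinely requires the full $O(d_0)$ heads you budgeted (one per coordinate) rather than one head plus its twin.
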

\begin{remark}
    In fact, \cref{thm:tensor-approx-formal} is also hold for attention-only transformers since the MLP layer do not use in the proof. To do that, we only need to add another head in every odd attention layer to clean the terms $\sets{d \overline{v}_i}$. For details, see the proof.
\end{remark}
\begin{remark}\label{rmk:thm2}
    Readers might question why the normalization step is omitted in our theorem. The key challenge is that we have absolutely no knowledge of a lower bound for  $\norm{T\paren{I,v^{(j)},v^{(j)}}}$. Without this bound, approximating the normalization step becomes infeasible.
\end{remark}
\begin{remark}
    The use of the ReLU activation function here is primarily for technical convenience and does not alter the fundamental nature of the attention mechanism. Several studies have demonstrated that transformers with ReLU-based attention perform comparably to those using softmax attention\citep{shen2023study,bai2023tfstats,he2025learningspectralmethodstransformers}.
\end{remark}

\subsection{Proof of \texorpdfstring{\cref{thm:tensor-approx-formal}}{Theorem E.8}}
\begin{proof}
    For simplicity, we only proof the case that $\sigma(x) = x$ in the attention layer. For ReLU activated transformer, the result can be similarly proved by $\operatorname{ReLU}(x) - \operatorname{ReLU}(-x) = x$ and the $\sigma(x) = x$ case. Hence we omit the notation $\sigma$ in the following proof. We take $\bH^{(0)} = \bH$.
    In the first attention layer, consider the following attention structures:
    \begin{align*}
        \bQ^{(1)} \bh_i^{(0)} = \begin{bmatrix}
            \be_i\\ \bzero
        \end{bmatrix},~
        \bK^{(1)} \bh_j^{(0)} = \begin{bmatrix}
            \overline{v}^{(0)}\\ \bzero
        \end{bmatrix},~
        \bV^{(1)} \bh_j^{(0)} = \begin{bmatrix}
            \bzero_{d_0^2}\\\bzero_{d_0}\\0\\0\\d\\\bzero
        \end{bmatrix}.
    \end{align*}
    After the attention operation, we have
    \begin{align*}
        \wt{\bh}_i^{(1)} &= \brac{\Attn_{\bAtt^{(1)}}^R\paren{\bH^{0}}}_{:,i} = \bh_i^{(0)} +  \frac{1}{d}\sum_{j=1}^{d} \paren{\paren{\bQ^{(1)} \bh_i^{0}}^\top \paren{\bK^{(1)} \bh_j^{0}}} \bV^{(1)}\bh_j^{0} \\
        &= \bh_i^{(0)} + \begin{bmatrix}
            \bzero_{d_0^2}\\\bzero_{d_0}\\0\\0\\d \overline{v}^{(0)}_i\\\bzero
        \end{bmatrix} = \begin{bmatrix}
            \overline{\bt}_i\\ \overline{v}^{(0)} \\1\\d\\d \overline{v}^{(0)}_i\\\bzero
        \end{bmatrix},~ i \in [d].
    \end{align*}
    Then we use a two-layer MLP to implement identity operation, which is 
    \begin{align*}
        \bh_i^{(1)} = \MLP_{\bthetamlp^{(1)}}\paren{\wt{\bh}_i^{(1)}} = \begin{bmatrix}
         \overline{\bt}_i\\ \overline{v}^{(0)} \\1\\d\\d \overline{v}^{(0)}_i\\\bzero
        \end{bmatrix},~ i \in [d].
    \end{align*}
    Now we use an attention layer with $d_0+1$ heads to implement the power iteration step of the cubic tensor. Consider the following attention structure:
    \begin{align*}
        \bQ_m^{(2)} \bh_i^{(1)} = \begin{bmatrix}
            \overline{v}^{(0)}_m\\ \bzero
        \end{bmatrix},~
        \bK_m^{(2)} \bh_j^{(1)} = \begin{bmatrix}
            d \overline{v}^{(0)}_j\\ \bzero
        \end{bmatrix},~
        \bV_m^{(2)} \bh_j^{(1)} = \begin{bmatrix}
            \bzero_{d_0^2}\\\overline{T}_{:,j,m}\\\bzero
        \end{bmatrix},~m \in [d_0],
    \end{align*}
    and 
    \begin{align*}
        \bQ_{d_0+1}^{(2)} \bh_i^{(1)} = \begin{bmatrix}
            1\\ \bzero
        \end{bmatrix},~
        \bK_{d_0+1}^{(2)} \bh_j^{(1)} = \begin{bmatrix}
            d\\ \bzero
        \end{bmatrix},~
        \bV_{d_0+1}^{(2)} \bh_j^{(1)} = \begin{bmatrix}
            \bzero_{d_0^2}\\-\overline{v}^{(0)}\\ \bzero
        \end{bmatrix}.
    \end{align*}
    After the attention operation, we have
    \begin{align*}
        \wt{\bh}_i^{(2)} &= \brac{\Attn_{\bAtt^{(2)}}^R\paren{\bH^{(1)}}}_{:,i}\\ 
        &= \bh_i^{(1)} + \sum_{m=1}^{d_0} \frac{1}{d}\sum_{j=1}^{d} \paren{\paren{\bQ^{(2)}_m \bh_i^{(1)}}^\top \paren{\bK^{(2)}_m \bh_j^{(1)}}} \bV^{(2)}_m \bh_j^{(1)} \\
        &\qquad + \frac{1}{d}\sum_{j=1}^{d} \paren{\paren{\bQ^{(2)}_{d_0+1} \bh_i^{(1)}}^\top \paren{\bK^{(2)}_{d_0+1} \bh_j^{(1)}}} \bV^{(2)}_{d_0+1} \bh_j^{(1)}\\
        &= \bh_i^{(1)} +\sum_{m=1}^{d_0} \frac{1}{d}\sum_{j=1}^{d} \paren{d\overline{v}^{(0)}_m \overline{v}^{(0)}_j}\begin{bmatrix}
            \bzero_{d_0^2}\\\overline{T}_{:,j,m}\\\bzero
        \end{bmatrix} +  \frac{1}{d}\sum_{j=1}^{d} d \begin{bmatrix}
            \bzero_{d_0^2}\\-\overline{v}^{(0)}\\ \bzero
        \end{bmatrix}\\
        &=\bh_i^{(1)} +\begin{bmatrix}
            \bzero_{d_0^2}\\\overline{v}^{(1)}\\ \bzero
        \end{bmatrix} +\begin{bmatrix}
            \bzero_{d_0^2}\\-\overline{v}^{(0)}\\ \bzero
        \end{bmatrix}\\
        &=\begin{bmatrix}
         \overline{\bt}_i\\ \overline{v}^{(1)} \\1\\d\\d \overline{v}^{(0)}_i\\\bzero
        \end{bmatrix},~ i \in [d],
    \end{align*}
    where $\overline{v}^{(1)} = \brac{{v}^{(1)\top},\bzero_{d_0-d}^{\top}}^\top$ and ${v}^{(1)} = \sum_{j,m\in[d]}v_m^{(0)}v_j^{(0)}T_{:,j,m}$.
    
    Then we use a two-layer MLP to clean the term $d \overline{v}^{(0)}_i$, which is 
    \begin{align*}
        \bh_i^{(2)} = \MLP_{\bthetamlp^{(2)}}\paren{\wt{\bh}_i^{(2)}} = \begin{bmatrix}
         \overline{\bt}_i\\ \overline{v}^{(1)} \\1\\d\\\bzero
        \end{bmatrix},~ i \in [d].
    \end{align*}
    Similarly, for any $\ell \in \N_+$, we have
    \begin{align*}
        \bh_i^{(2\ell)} = \brac{\MLP_{\bthetamlp^{(2)}}\paren{\Attn_{\bAtt^{(2)}}\paren{\MLP_{\bthetamlp^{(1)}}\paren{\Attn_{\bAtt^{(1)}}\paren{\bH^{(2\ell - 2)}}}}}}_{:,i} =
        \begin{bmatrix}
         \overline{\bt}_i\\ \overline{v}^{(\ell)} \\1\\d\\\bzero
        \end{bmatrix},~ i \in [d].
    \end{align*}
    The parameter bounds can be directly computed by the construction above. The theorem is proved.

\end{proof}

\section{More on empirical studies}\label{sec: more_exps}
\label{secapp:experiment-app}
\subsection{More on experimental setups}\label{sec: more_exp_setups}
\paragraph{Anisotropic adjustments} We consider anisotropic Gaussian mixtures that takes the following form: A $K$-component anisotropic Gaussian mixture distribution is defined with parameters $\btheta = \bpi \cup \bmu \cup \bsigma$, where $\bpi \defeq \sets{\pi_1^\ast, \pi_2^\ast, \cdots, \pi_K^\ast}$, $\pi_k^\ast \in \R$,   $\bmu = \sets{\mu_1^\ast, \mu_2^\ast, \cdots,\mu_K^\ast},\mu_k^\ast\in \R^d$, $k \in [K]$ and $\bsigma = \sets{\sigma_1^\ast, \sigma_2^\ast, \cdots,\sigma_K^\ast},\sigma_k^\ast\in \R_+^d$, $k \in [K]$. A sample $X_i$ from the aforementioned anisotropic GMM is expressed as:
\begin{align}\label{eqn: anisotropic_gmm}
    X_i = \mu_{y_i}^\ast + \sigma_{y_i}^\ast Z_i,
\end{align}
where $\sets{y_i}_{i\in[N]}$ are iid discrete random variables with $\sP\paren{y=k} = \pi_k^\ast$ for $k\in[K]$ and $\sets{Z_i}_{i\in[N]}$ are iid standard Gaussian random vector in $\R^d$. Analogous to that in the isotropic case and overload some notations, we define an anisotropic GMM task to be $\mathcal{T} = (\rmX, \btheta, K)$.\par
To adapt the TGMM framework to be compatible to anisotropic problems, we expand the output dimension of the attentive pooling module from $(d + K) \times K$ to $(d + 2K) \times K$, with the additional $K$ rows reserved for the estimate $\widehat{\bsigma}$ of $\bsigma$, with the corresponding estimation loss function augmented with a scale estimation part:
\begin{align}
\label{eqn:loss_anisotropic}
    \widehat{L}_n\paren{\bTheta} = \frac{1}{n} \sum_{i=1}^n \ell_\mu(\widehat{\bmu}_i,\bmu_i) + \ell_\pi(\widehat{\bpi}_i,\bpi_i) + \ell_\sigma(\widehat{\bsigma}_i,\bsigma_i),
\end{align}
where the loss function $\ell_\sigma$ is chosen as the mean-square loss. During the experiments, we inherit configurations from those of isotropic counterparts, except for the calculation of the $\ell_2$-error metric, where we additionally considered contributions from the estimation error of scales.
\paragraph{Configurations related to Mamba2 architecture} We adopt a Mamba2 \cite{dao2024transformers} model comprising $12$-layers and $128$-dimensional hidden states, with the rest hyper-parameters chosen so as to approximately match the number of a $12$-layer transformer with $128$-dimensional hidden states. As the Mamba series of models are essentially recurrent neural networks (RNNs), we tested two different kinds of $\operatorname{Readout}$ design with either (i). the attentive pooling module as used in the case of transformer backbone and (ii). a more natural choice of using simply the last hidden state to decode all the estimates, as RNNs compress input information in an ordered fashion. We observe from our empirical investigations that using attentive pooling yields better performance even with a Mamba2 backbone. The other training configurations are cloned from those in TGMM experiments with transformer backbones.
\paragraph{Software and hardware infrastructures}
Our framework is built upon PyTorch \cite{pytorch} and \texttt{transformers} \cite{wolf-etal-2020-transformers} libraries, which are open-source software released under BSD-style \footnote{\url{https://github.com/pytorch/pytorch/blob/master/LICENSE}} and Apache license \footnote{\url{https://github.com/huggingface/transformers/blob/main/LICENSE}}. The code implementations will be open-sourced after the reviewing process of this paper.  All the experiments are conducted using $8$ NVIDIA A100 GPUs with $80$ GB memory each.

\subsection{A complete report regarding different evaluation metrics}\label{sec: complete_exp_metrics}
In this section, we present complete reports of empirical performance regarding the evaluation problems mentioned in section \ref{sec:experiment}. Aside from the $\ell_2$-error metric that was reported in section \ref{sec:results-and-findings}, we additionally calculated all the experimental performance under the following metrics:
\begin{description}
    \item[Clustering accuracy] We compare estimated cluster membership with the true component assignment, after adjusting for permutation invariance as mentioned in section \ref{sec: meta_training_procedure}.
    \item[Log-likelihood] We compute average log-likelihood as a standard metric in unsupervised statistical estimation. 
\end{description}
The results are reported in figure \ref{fig: baseline_full}, \ref{fig: length_generalization_full}, \ref{fig: ood_full}, \ref{fig: architecture_full} and \ref{fig: anisotropic_full}, respectively. According to the evaluations, the learned TGMM models show comparable clustering accuracy against the spectral algorithm and outperform EM algorithm when $K > 2$ across all comparisons. Regarding the log-likelihood metric, TGMM demonstrates comparable performance with the other two classical algorithms in comparatively lower dimensional cases. i.e., $d \in \sets{2, 8}$, but underperforms both baselines in larger dimensional problems. We conjecture that is might be due to the fact that EM algorithm is essentially a maximum-likelihood algorithm \cite{EM1977}, while the TGMM estimation objective \eqref{eqn:loss} is not explicitly related to likelihood-based training. 
\begin{figure}[]
    \centering
    \begin{subfigure}[b]{\textwidth}
        \includegraphics[width=\textwidth]{figures/v2/baseline_comparison_v2_err.pdf}
        \caption{$\ell_2$-error}
    \end{subfigure}
    \begin{subfigure}[b]{\textwidth}
        \includegraphics[width=\textwidth]{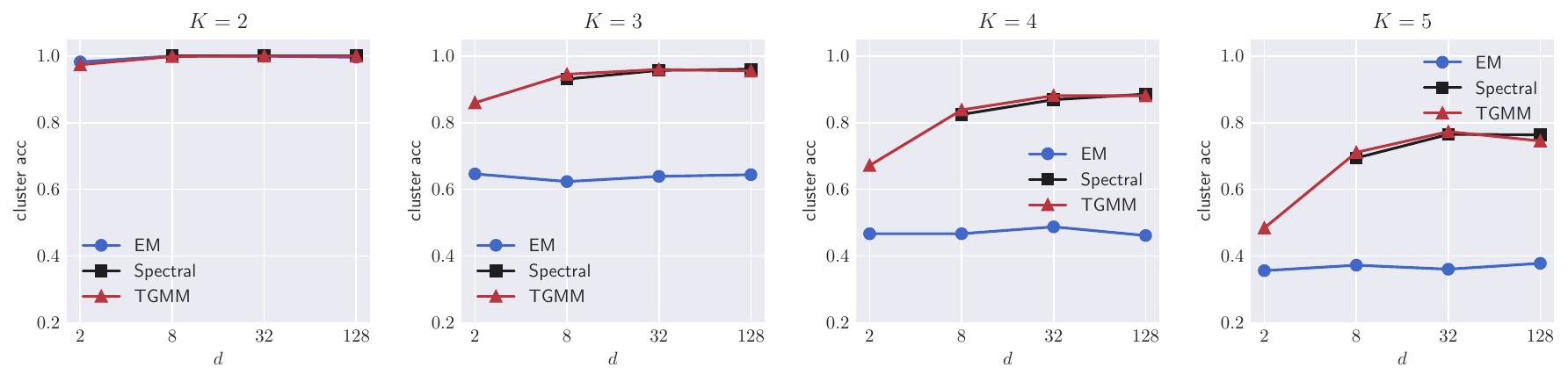}
        \caption{clustering accuracy}
    \end{subfigure}
    \begin{subfigure}[b]{\textwidth}
        \includegraphics[width=\textwidth]{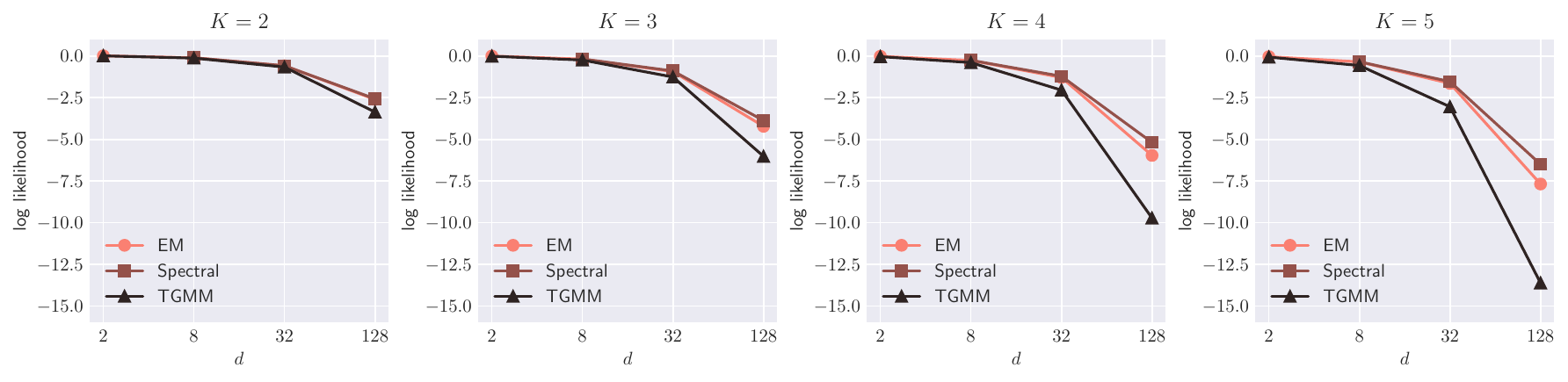}
        \caption{log-likelihood}
    \end{subfigure}
    \caption{Performance comparison between TGMM and two classical algorithms, reported in three metrics.}
    \label{fig: baseline_full}
\end{figure}

\begin{figure}[]
    \centering
    \begin{subfigure}[b]{\textwidth}
        \includegraphics[width=\textwidth]{figures/v2/length_generalization_v2_err.pdf}
        \caption{$\ell_2$-error}
    \end{subfigure}
    \begin{subfigure}[b]{\textwidth}
        \includegraphics[width=\textwidth]{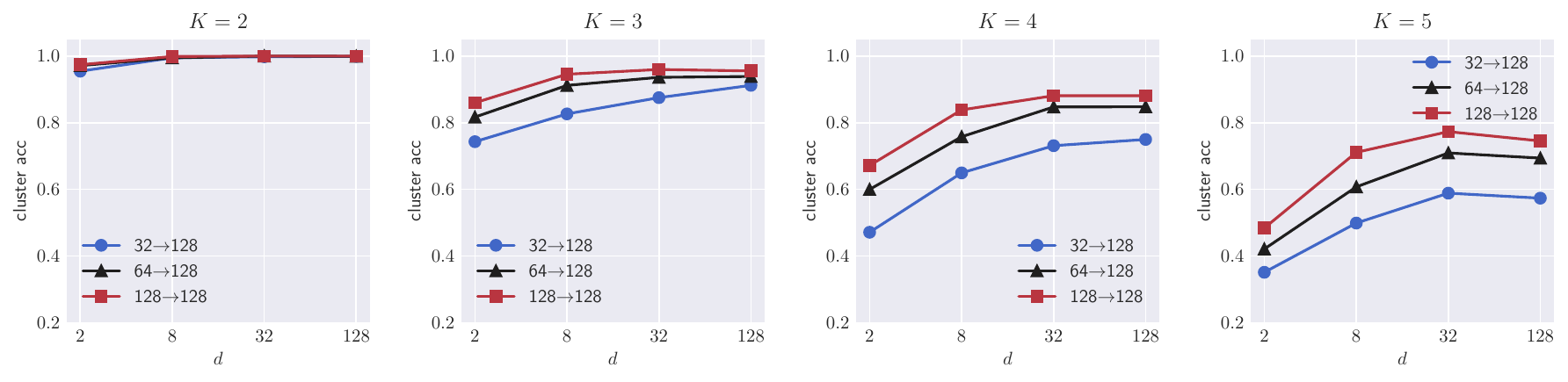}
        \caption{clustering accuracy}
    \end{subfigure}
    \begin{subfigure}[b]{\textwidth}
        \includegraphics[width=\textwidth]{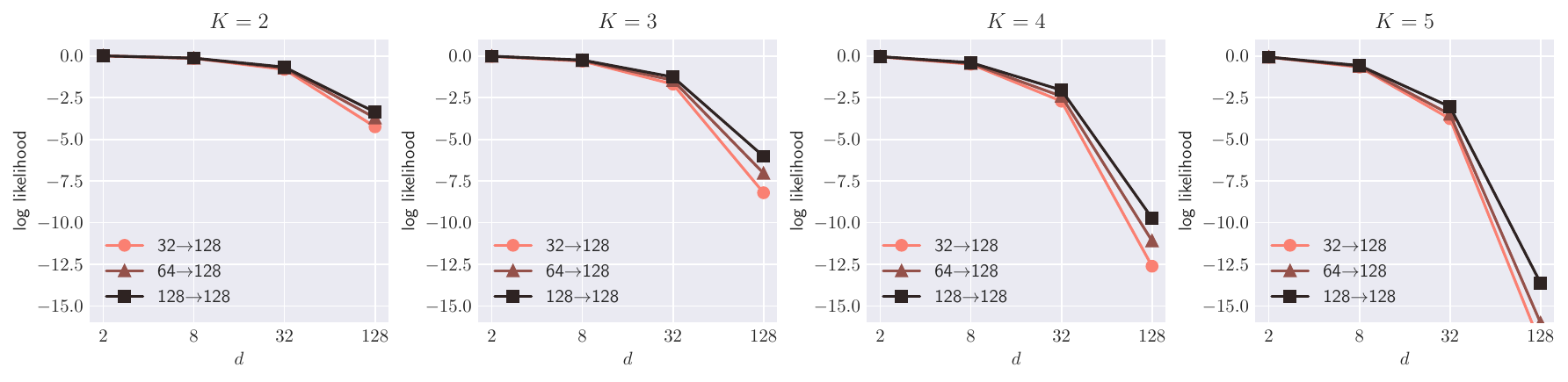}
        \caption{log-likelihood}
    \end{subfigure}
    \caption{Assessments of TGMM under test-time task distribution shifts I: A line with $N_0^\text{train} \rightarrow N^\text{test}$ draws the performance of a TGMM model trained over tasks with sample size randomly sampled in $[N_0^\text{train} / 2, N_0^\text{train}]$ and evaluated over tasks with sample size $N^\text{test}$. We can view the configuration $128\rightarrow 128$ as an in-distribution test and rest as out-of-distribution tests.}
    \label{fig: length_generalization_full}
\end{figure}

\begin{figure}[]
    \centering
    \begin{subfigure}[b]{\textwidth}
        \includegraphics[width=\textwidth]{figures/v2/ood_v2_err.pdf}
        \caption{$\ell_2$-error}
    \end{subfigure}
    \begin{subfigure}[b]{\textwidth}
        \includegraphics[width=\textwidth]{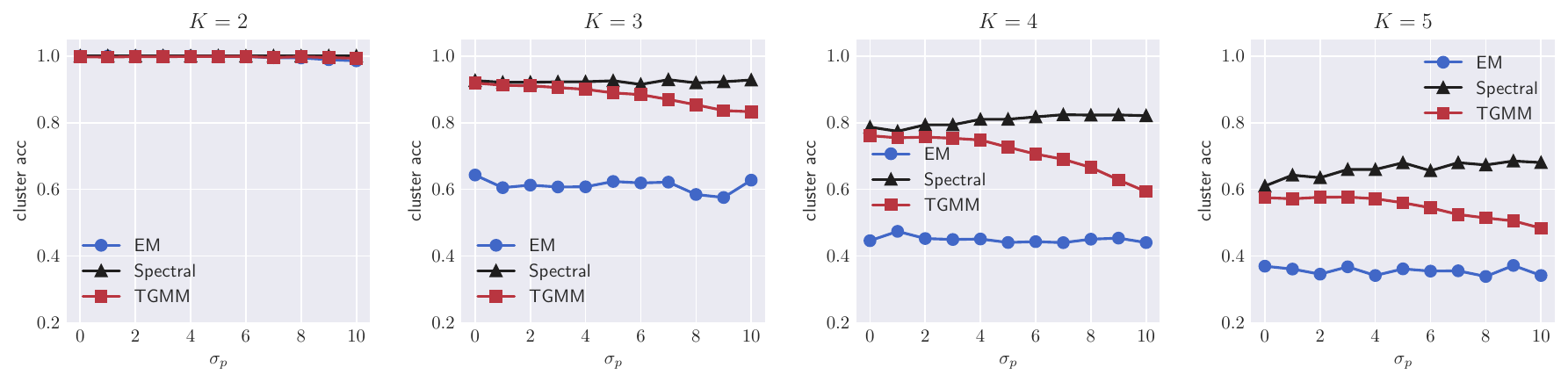}
        \caption{clustering accuracy}
    \end{subfigure}
    \begin{subfigure}[b]{\textwidth}
        \includegraphics[width=\textwidth]{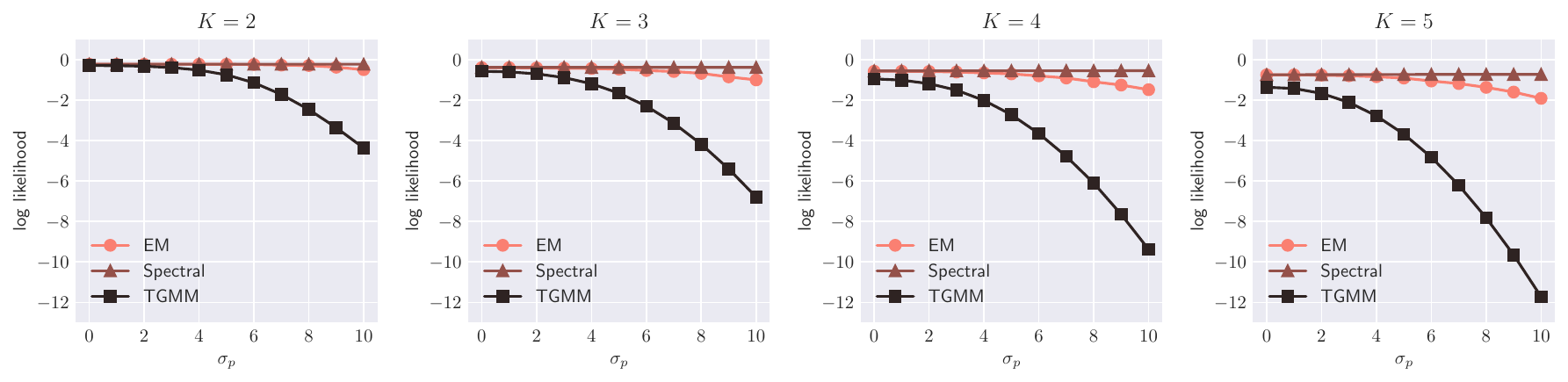}
        \caption{log-likelihood}
    \end{subfigure}
    \caption{Assessments of TGMM under test-time task distribution shifts II: $\ell_2$-error of estimation when the test-time tasks $\mathcal{T}^\text{test}$ are sampled using a mean vector sampling distribution $p_\mu^\text{test}$ different from the one used during training.}
    \label{fig: ood_full}
\end{figure}

\begin{figure}[]
    \centering
    \begin{subfigure}[b]{\textwidth}
        \includegraphics[width=\textwidth]{figures/v2/architecture_comparison_v2_err.pdf}
        \caption{$\ell_2$-error}
    \end{subfigure}
    \begin{subfigure}[b]{\textwidth}
        \includegraphics[width=\textwidth]{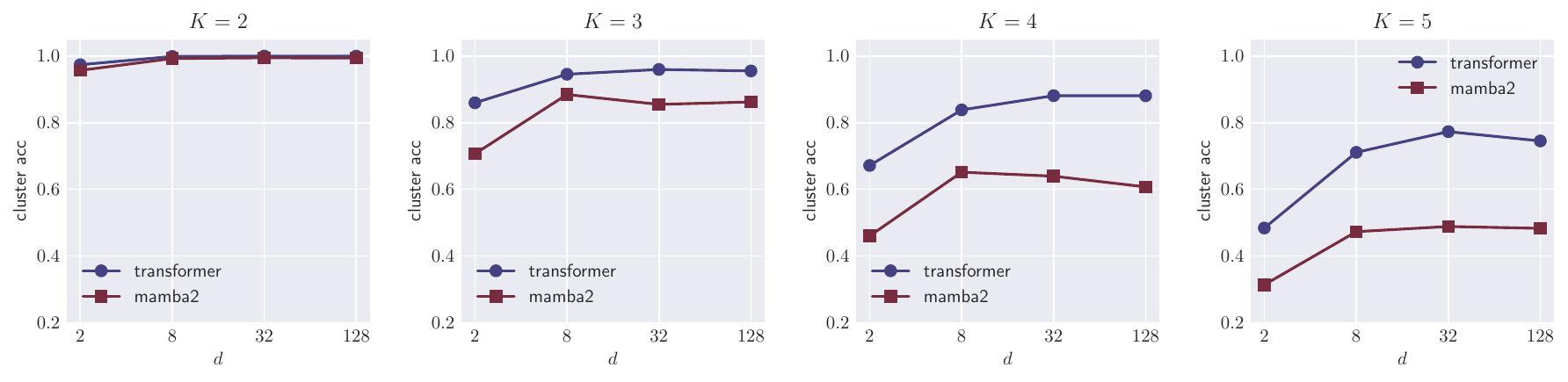}
        \caption{clustering accuracy}
    \end{subfigure}
    \begin{subfigure}[b]{\textwidth}
        \includegraphics[width=\textwidth]{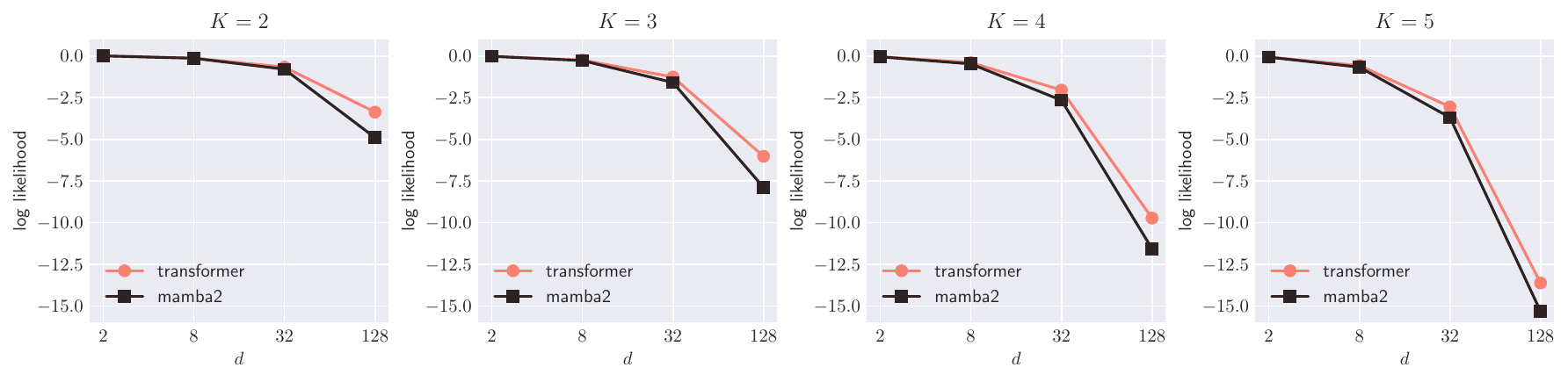}
        \caption{log-likelihood}
    \end{subfigure}
    \caption{Performance comparisons between TGMM using transformer and Mamba2 as backbone, reported in three metrics.}
    \label{fig: architecture_full}
\end{figure}

\begin{figure}[]
    \centering
    \begin{subfigure}[b]{\textwidth}
        \includegraphics[width=\textwidth]{figures/v2/baseline_comparison_anisotropic_v2_err.pdf}
        \caption{$\ell_2$-error}
    \end{subfigure}
    \begin{subfigure}[b]{\textwidth}
        \includegraphics[width=\textwidth]{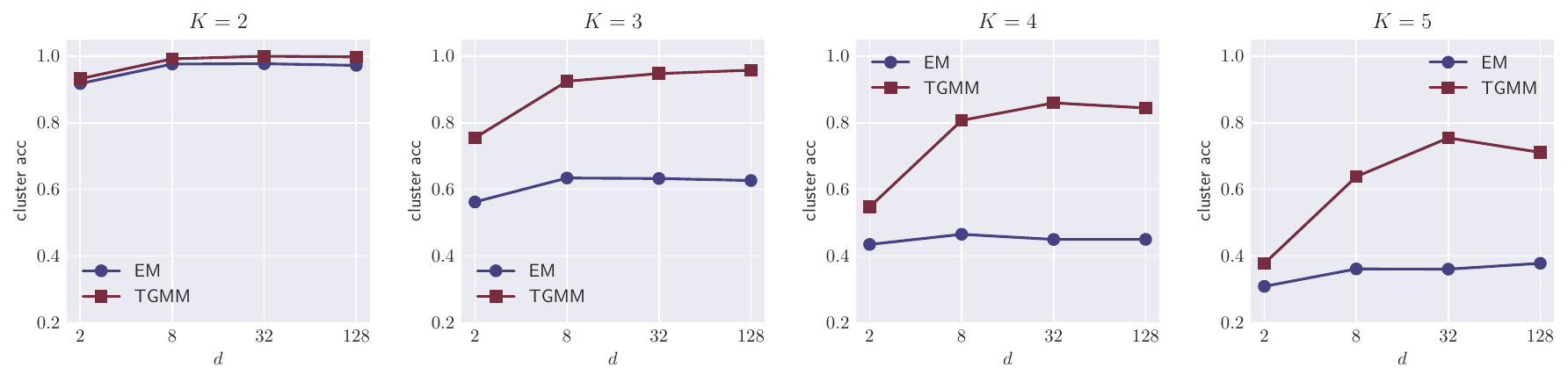}
        \caption{clustering accuracy}
    \end{subfigure}
    \begin{subfigure}[b]{\textwidth}
        \includegraphics[width=\textwidth]{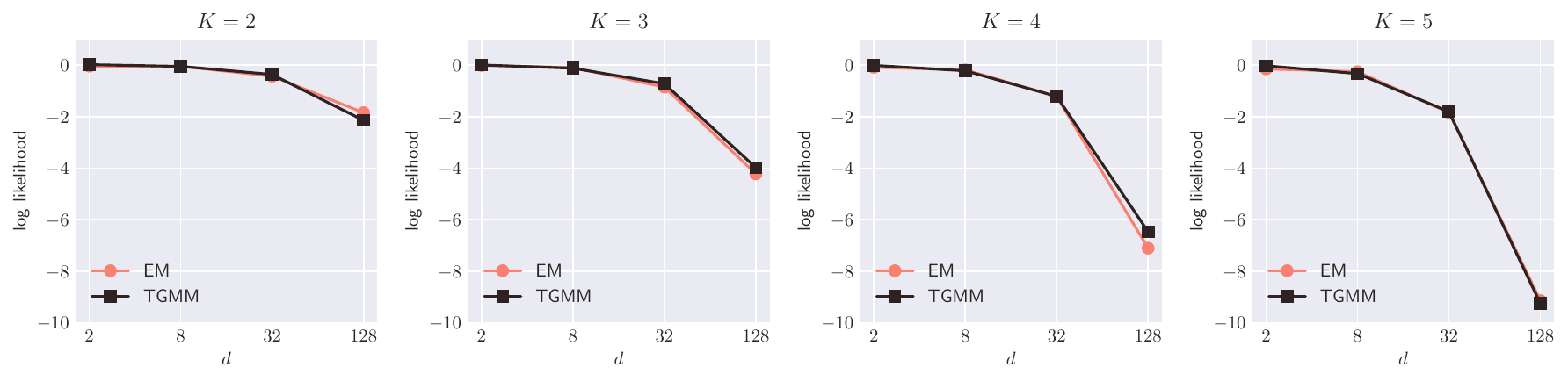}
        \caption{log-likelihood}
    \end{subfigure}
    \caption{Performance comparison between TGMM and the EM algorithm on anisotropic GMM tasks, reported in three metrics}
    \label{fig: anisotropic_full}
\end{figure}

\subsection{On the impact of inference-time sample size \texorpdfstring{$N$}{N}}\label{sec: sample_complexity}
Motivated by the classical statistical phenomenon that estimation quality tends to improve with sample size, we test whether TGMM's estimation performance increases as $N$ goes up. We run corresponding experiments by varying the sample size to be $N \in \sets{32, 64, 128}$ during both train and inference, while controlling other experimental configurations same as those in section \ref{sec: exp_setup}. The results are reported in $\ell_2$-error, clustering accuracy as log-likelihood and summarized in figure \ref{fig: sample_complexity_full}. The results exhibit a clear trend that aligns with our hypothesis, justifying the TGMM learning process as learning a statistically meaningful algorithm for solving GMMs.

\begin{figure}[]
    \centering
    \begin{subfigure}[b]{\textwidth}
        \includegraphics[width=\textwidth]{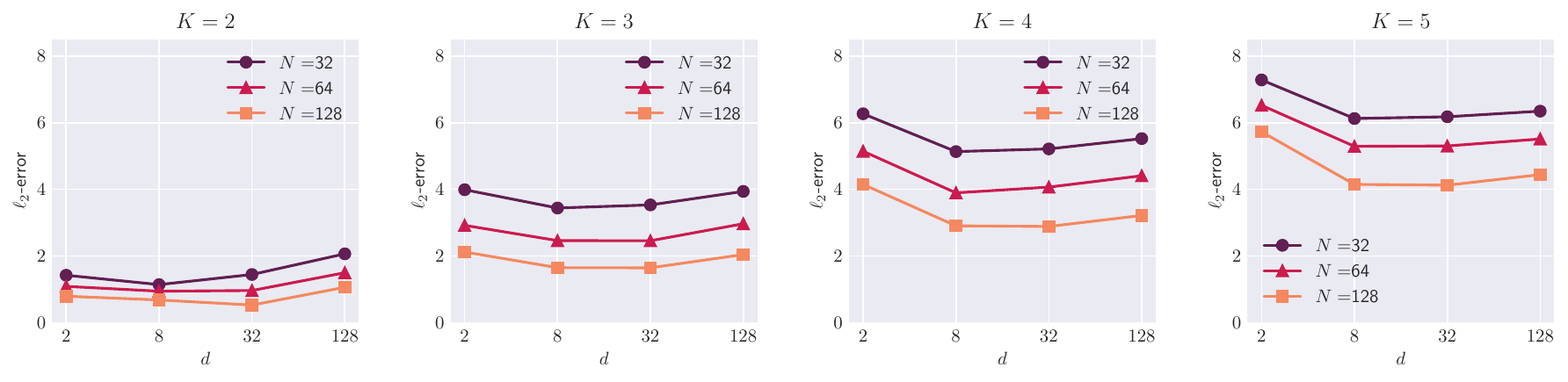}
        \caption{$\ell_2$-error}
    \end{subfigure}
    \begin{subfigure}[b]{\textwidth}
        \includegraphics[width=\textwidth]{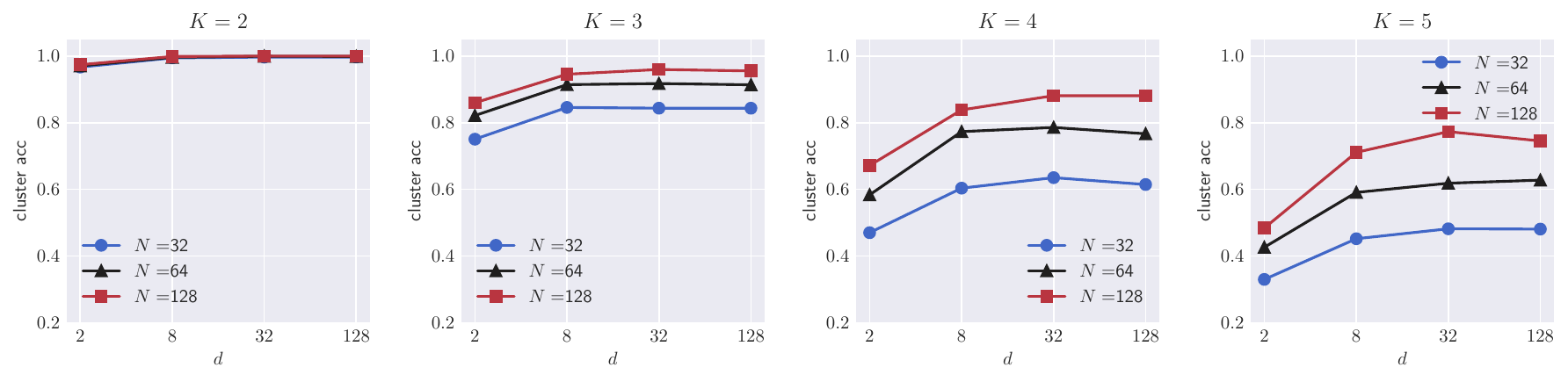}
        \caption{clustering accuracy}
    \end{subfigure}
    \begin{subfigure}[b]{\textwidth}
        \includegraphics[width=\textwidth]{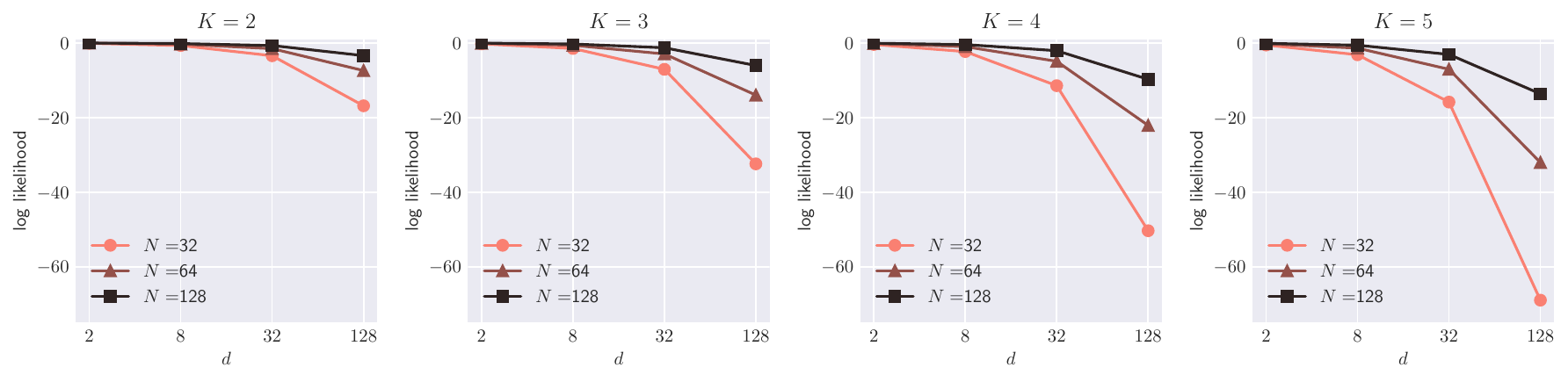}
        \caption{log-likelihood}
    \end{subfigure}
    \caption{Performance comparison between TGMM models trained under varying configurations of sample-size. For example, $N=64$ means that the model is trained over GMM tasks with (randomly chosen) sample sizes within the range $[32, 64]$ and tested on tasks with sample size $64$.}
    \label{fig: sample_complexity_full}
\end{figure}

\subsection{On the impact of backbone scale}\label{sec: impact_scale}
The scaling phenomenon is among the mostly discussed topics in modern AI, as choosing a suitable scale is often critical to the performance of transformer-based architectures like LLMs. In this section we investigate the scaling  properties of TGMM via comparing performances produced by varying sizes of backbones that differ either in per-layer width (i.e., the dimension of attention embeddings) or in the total number of layers $L$. With the rest hyper-parameters controlled to be the same as those in section \ref{sec: exp_setup}. The results are reported in three metrics and summarized in figure \ref{fig: width_full} and figure \ref{fig: depth_full}, respectively. According to these investigations, while in general a larger-sized backbone yields slightly better performance as compared to smaller ones. The performance gaps remain mild especially for tasks with relative lower complexity, i.e., $K=2$. Consequently, even a $3$-layer transformer backbone is able to achieve non-trivial learning performance for solving isotropic GMMs, a phenomenon that was also observed in a recent work \cite{he2025transformersversusemalgorithm}.
\begin{figure}[htbp]
    \centering
    \begin{subfigure}[b]{\textwidth}
        \includegraphics[width=\textwidth]{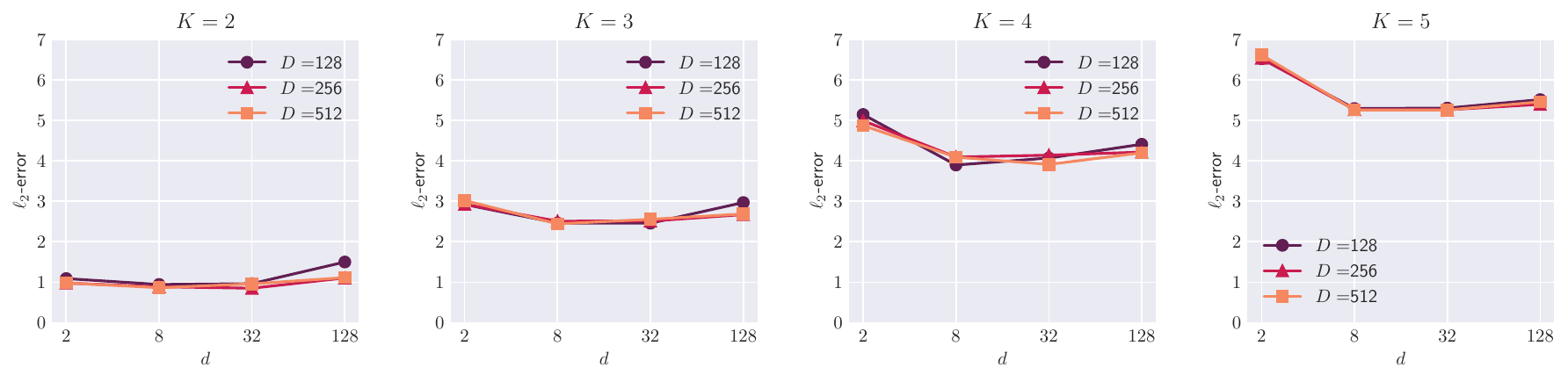}
        \caption{$\ell_2$-error}
    \end{subfigure}
    \begin{subfigure}[b]{\textwidth}
        \includegraphics[width=\textwidth]{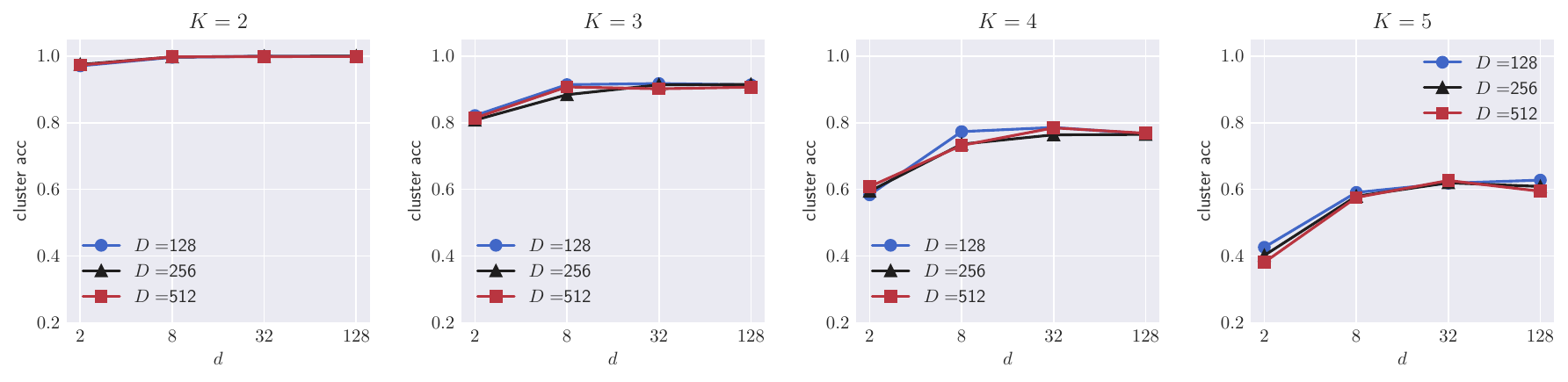}
        \caption{clustering accuracy}
    \end{subfigure}
    \begin{subfigure}[b]{\textwidth}
        \includegraphics[width=\textwidth]{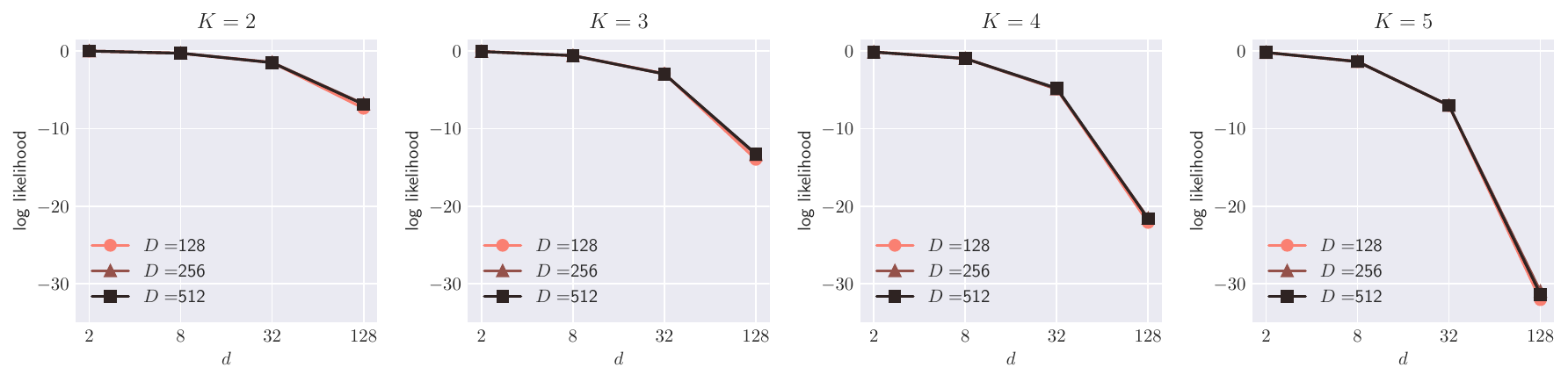}
        \caption{log-likelihood}
    \end{subfigure}
    \caption{Performance comparison between TGMM under backbones of varying scales I: We fix embedding size at $d=128$ and tested over different number of transformer layers $L \in \{3, 6, 12\}$. Results are reported in three metrics.}
    \label{fig: width_full}
\end{figure}

\begin{figure}[htbp]
    \centering
    \begin{subfigure}[b]{\textwidth}
        \includegraphics[width=\textwidth]{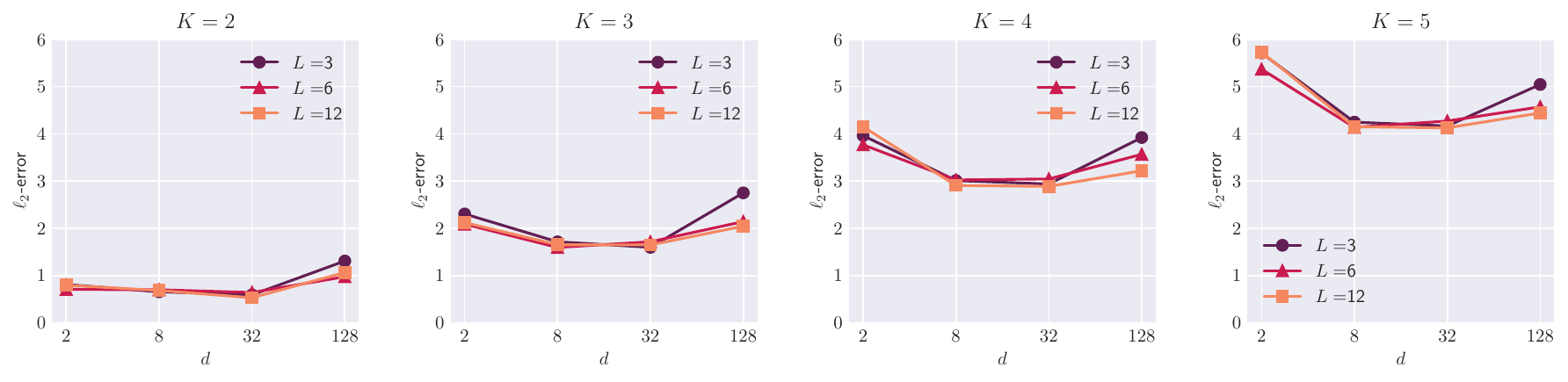}
        \caption{$\ell_2$-error}
    \end{subfigure}
    \begin{subfigure}[b]{\textwidth}
        \includegraphics[width=\textwidth]{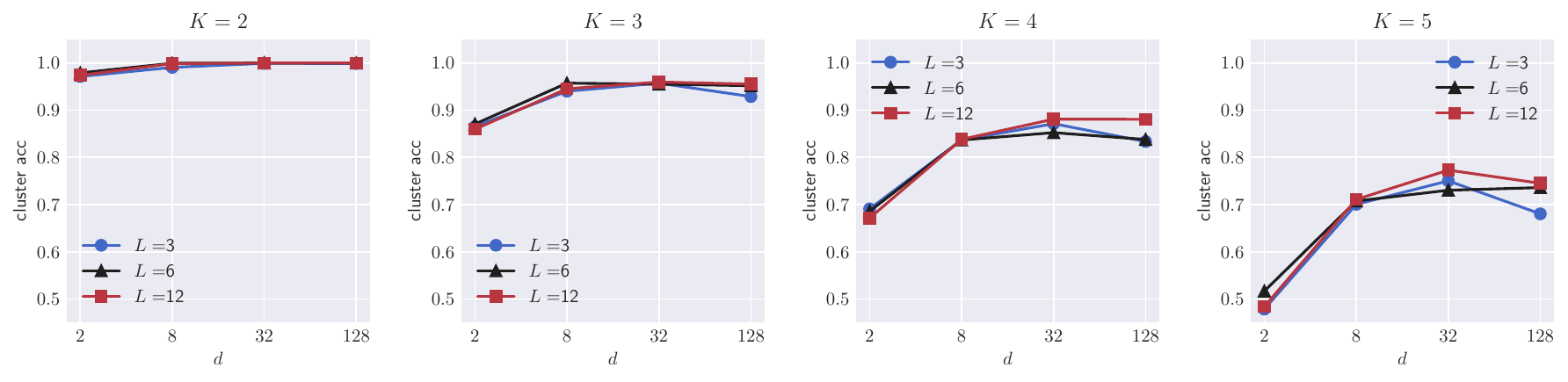}
        \caption{clustering accuracy}
    \end{subfigure}
    \begin{subfigure}[b]{\textwidth}
        \includegraphics[width=\textwidth]{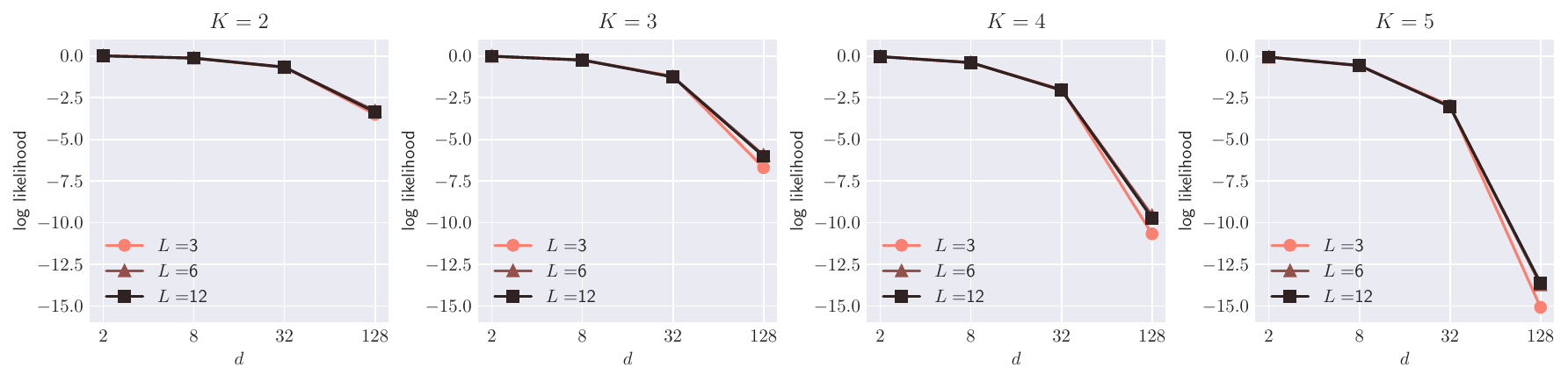}
        \caption{log-likelihood}
    \end{subfigure}
    \caption{Performance comparison between TGMM under backbones of varying scales II: We fix the number of transformer layers at $L=12$ and tested over different number of hidden states $d \in \{128, 256, 512\}$. Results are reported in three metrics.}
    \label{fig: depth_full}
\end{figure}

\end{document}